\xpatchcmd{\@thm}{\thm@headpunct{.}}{\thm@headpunct{}}{}{}
\def\th@plain{%
  \thm@notefont{}
  \itshape 
}
\def\th@definition{%
  \thm@notefont{}
  \normalfont 
}
\newtheorem{theorem}{Theorem}
\newtheorem{definition}{Definition}
\newtheorem{lemma}[theorem]{Lemma}
\newtheorem{remark}{Remark}
\definecolor{myred}{HTML}{880000}
\definecolor{mygreen}{HTML}{008800}
\definecolor{myblue}{HTML}{000088}
\definecolor{linkblue}{HTML}{0000BB}
\newcommand{\R}{\mathbb{R}}
\newcommand{\E}{\mathbb{E}}
\newcommand{\Eins}{\mathrm{\textbf{1}}}
\newcommand{\gauss}{\mathcal{N}}
\renewcommand{\O}{\mathcal{O}}
\renewcommand{\P}{\mathbb{P}}
\newcommand{\x}{\mathbf{x}}
\newcommand{\X}{\mathbf{X}}
\newcommand{\A}{\mathbf{A}}
\renewcommand{\S}{\mathcal{S}}
\title{A Continuous-Time Mirror Descent Approach to Sparse Phase Retrieval}
\author{
  Fan Wu,
  Patrick Rebeschini \\
	Department of Statistics, University of Oxford
}
\begin{document}

\maketitle

\begin{abstract}%
	We analyze continuous-time mirror descent applied to sparse phase retrieval, which is the problem of recovering sparse signals from a set of magnitude-only measurements. We apply mirror descent to the unconstrained empirical risk minimization problem (batch setting), using the square loss and square measurements. We provide a convergence analysis of the algorithm in this non-convex setting and prove that, with the hypentropy mirror map, mirror descent recovers any $k$-sparse vector $\mathbf{x}^\star\in\mathbb{R}^n$ with minimum (in modulus) non-zero entry on the order of $\| \mathbf{x}^\star \|_2/\sqrt{k}$ from $k^2$ Gaussian measurements, modulo logarithmic terms. This yields a simple algorithm which, unlike most existing approaches to sparse phase retrieval, adapts to the sparsity level, without including thresholding steps or adding regularization terms. Our results also provide a principled theoretical understanding for Hadamard Wirtinger flow \cite{WR20}, as Euclidean gradient descent applied to the empirical risk problem with Hadamard parametrization can be recovered as a first-order approximation to mirror descent in discrete time.
\end{abstract}

\section{Introduction}
\label{section:introduction}
Mirror descent \cite{NY83} is becoming increasingly popular in a variety of settings in optimization and machine learning. One reason for its success is the fact that mirror descent can be adapted to fit the geometry of the optimization problem at hand by choosing a suitable strictly convex potential function, the so-called mirror map. Mirror descent has been extensively studied for convex problems, and it is amenable to a general convergence analysis in terms of the Bregman divergence associated to the mirror map, e.g.\ \cite{AB10, ABL13, BT03, B15, CT93, NJLS09, S11}. 
There is a growing literature considering mirror descent in non-convex settings, e.g.\ \cite{DG19, DL15, GL13, GLZ16, KN19, KBTB15, MM10, ZH18, ZMBBG17, ZMBBG20}, and we contribute to this literature by analyzing continuous-time mirror descent in the non-convex problem of sparse phase retrieval.

Recently, there has been a surge of interest in investigating continuous-time solvers in a variety of settings in machine learning, for instance, in connection to implicit regularization, e.g.\ \cite{ADT20, AKT19, SPR18}, learning neural networks, e.g.\ \cite{CB18, MMM19, RV18, SS18}, and, more in general, to understand the foundations of algorithmic paradigms and to provide design principles for discrete-time algorithms used in practice, e.g.\ \cite{AW20, GWS20, RB12, VKR20, WJ98}. Convergence analyses for continuous-time algorithms are typically simpler and more transparent than those for their discrete-time counterparts, as they allow to focus on the main properties of the algorithm.

Phase retrieval is the problem of recovering a signal from the (squared) magnitude of a set of linear measurements. Such a task arises in many applications such as optics \cite{W63}, diffraction imaging \cite{BDPFSSV07} and quantum mechanics \cite{C06}, where detectors are able to measure intensities, but not phases. Due to the missing phase information, exploiting additional prior information often becomes necessary to ensure that the problem is well-posed. Common forms of prior information include assumptions on sparsity, non-negativity or the magnitude of the signal \cite{F82, JEH16}. Other approaches include introducing redundancy by oversampling random Gaussian measurements or coded diffraction patterns \cite{CLS15, CC15}.

Numerous strategies have been developed to exploit sparsity. One approach is to confine the search to the low-dimensional subspace of sparse vectors, either via a preliminary support recovery step as in the alternating minimization algorithm SparseAltMinPhase \cite{NJS15} or by updating the current estimated support in a greedy fashion as in GESPAR \cite{SBE14}. Another approach relies on the introduction of thresholding steps to enforce sparsity. This approach is typically found in non-convex optimization based algorithms such as thresholded Wirtinger flow (TWF) \cite{CLM16}, sparse truncated amplitude flow (SPARTA) \cite{WZGAC18}, compressive reweighted amplitude flow (CRAF) \cite{ZWGC18}, and sparse Wirtinger flow (SWF) \cite{YWW19}. Sparsity can also be promoted by augmenting the objective function with a regularization term. This is the approach taken in convex relaxation based methods such as compressive phase retrieval via lifting (CPRL) \cite{OYDS12} and SparsePhaseMax \cite{HV16}, which include an $\ell_1$ penalty, but also in PR-GAMP \cite{SR15}, which is an algorithm based on generalized message passing that uses a sparsity inducing prior. Hadamard Wirtinger flow (HWF) \cite{WR20} is an algorithm which performs gradient descent on the unregularized empirical risk using the Hadamard parametrization. This parametrization has been recently used to recover low-rank structures in sparse recovery \cite{H17, VKR19, ZYH19} and matrix factorization \cite{ACHL19, GWBNS17, LMZ18} under the restricted isometry property.

With the exception of HWF, the aforementioned methods rely on restricting the search to sparse vectors, thresholding steps or adding regularization terms to enforce sparsity. On the other hand, HWF does not require thresholding steps or added regularization terms to promote sparsity. Further, it has been empirically observed that HWF can recover sparse signals from a number of measurements comparable to those required by PR-GAMP, in particular, requiring fewer measurements than existing gradient based algorithms.
Despite these benefits, the work introducing HWF in \cite{WR20} has two main limitations: on the one hand, a full theoretical understanding of the algorithm that can explain its (convergence) behavior and, in particular, the reason why it adapts to the signal sparsity, is lacking. On the other hand, the algorithm has been empirically shown to have a sublinear convergence phase to the underlying signal, which would seem to lead to improved sample complexity at the expense of an increase in computational cost.

\subsection{Our contributions}
In this work, we provide a theoretical analysis of unconstrained mirror descent in continuous time applied to the unregularized empirical risk with the square loss for the problem of sparse phase retrieval with square measurements. With the hypentropy mirror map \cite{GHS20}, we prove that mirror descent recovers any $k$-sparse vector $\x^\star\in\R^n$ with minimum non-zero entry (in modulus) on the order of $\|\x^\star\|_2 / \sqrt{k}$ from $\widetilde{\O}(k^2)$ Gaussian measurements, where $\widetilde{\O}$ hides logarithmic terms. To the best of our knowledge, this is the first result on continuous-time solvers for (sparse) phase retrieval.

This provides a simple first-order method that relies neither on thresholding steps, nor on regularized objective functions. Without requiring knowledge of the sparsity level $k$, mirror descent \emph{adapts} to the sparsity level via the geometry defined by the hypentropy mirror map. This mirror map is parametrized by $\beta$, which is the only parameter in mirror descent and regulates the magnitude that off-support variables can attain while the algorithm runs. Our analysis shows that $\beta$ should be chosen smaller than a quantity depending on the signal size $\|\x^\star\|_2$ and the ambient dimension $n$. In particular, tuning of $\beta$ does \emph{not} require knowledge (or estimation) of the sparsity level $k$. We remark that estimating the signal size $\|\x^\star\|_2$ is easily done by considering the average observation size \cite{CLS15, WGE17}. We initialize mirror descent following the same initialization scheme proposed in \cite{WR20} for HWF. This initialization is independent of $\beta$ and only requires knowledge of a single coordinate on the support, which can be estimated from $\widetilde{\O}(k^2)$ Gaussian measurements \cite{WR20}. This initialization is much simpler than the schemes typically necessary for other non-convex formulations such as the spectral initialization in CRAF or the orthogonality-promoting initialization in SPARTA.

It was observed in \cite{VKR20} that gradient descent with the Hadamard parametrization can be seen as a first-order approximation to mirror descent with the hypentropy mirror map. Since HWF consists of running vanilla gradient descent on the unregularized empirical risk under the Hadamard parametrization, it can be treated as a discrete-time first-order approximation to the mirror descent algorithm we analyze. Hence, our work provides a principled theoretical understanding for HWF and addresses the first of the two limitations mentioned above on the analysis given in \cite{WR20}. Further, our investigation also reveals the connection between the initialization size in HWF (which corresponds to the mirror map parameter $\beta$ up to a squareroot) and the convergence speed of the algorithm. In particular, we first have an initial warm-up period, after which convergence towards the true signal $\x^\star$ is linear, up to a precision determined by the parameter $\beta$ and the signal dimension $n$. By choosing a sufficiently small initialization, any desired accuracy can be reached before entering the final sublinear stage, which is the main reason for the slow convergence of HWF observed in \cite{WR20}.

The property that enables mirror descent to deal with the non-convexity of the objective function is a weaker version of \emph{variational coherence} \cite{ZMBBG17, ZMBBG20}. While the variational coherence property as defined in \cite{ZMBBG17, ZMBBG20} precludes the existence of saddle points and is not satisfied in the sparse phase retrieval problem, we show that the defining inequality is satisfied along the trajectory of mirror descent, which is what allows us to establish the convergence analysis.

The literature on mirror descent is vast and growing, and a full overview is outside the scope of our work. 
Our contribution adds, in particular, to existing results on mirror descent in non-convex settings, which in general either require specific assumptions \cite{DL15, ZMBBG17, ZMBBG20}, guarantee convergence only to stationary points \cite{DD18, DG19, GL13, GLZ16, ZH18}, or are tailored to specific problems in the online learning setting \cite{KN19, KBTB15, MM10}, for instance. Recently, the connection between sparsity and mirror descent equipped with the hypentropy mirror map was established in \cite{VKR20}. While the aforementioned connection has been shown for linear models and kernels, which leads to a convex problem, our analysis demonstrates that this connection also extends to the non-convex problem of sparse phase retrieval.

\section{Preliminaries}
\label{section:preliminaries}
We first introduce some notation. We use boldface letters for vectors and matrices, normal font for real numbers, and, typically, uppercase letters for random and lowercase letters for deterministic quantities. For the clarity of the analytical results, this paper focuses on sparse phase retrieval with real signal and measurement vectors. Nevertheless, the algorithm also works in the complex case. 

\subsection{Mirror descent}
\label{subsection:mirror_descent}
We first give a brief description of the unconstrained mirror descent algorithm; more details can be found in \cite{B15}. The key object defining the geometry of the algorithm is the \textit{mirror map}.
\begin{definition}
Let $\mathcal{D}\subset \R^n$ be a convex open set. We say that $\Phi:\mathcal{D}\rightarrow \R$ is a mirror map if it is strictly convex, differentiable and its gradient takes all possible values, i.e.\ $\{\nabla \Phi(\x):\x\in\mathcal{D}\} = \R^n$.
\end{definition}
We consider unconstrained mirror descent, i.e.\ $\mathcal{D} = \R^n$. Let $F:\R^n\rightarrow \R$ be a (possibly non-convex) function, for which we seek a global minimizer. Mirror descent is characterized by the mirror map $\Phi$ and an initial point $\X(0)$, and is, in continuous time, defined by the identity \cite{NY83}
\begin{equation}
\label{eq:mirror_descent_continuous}
\frac{d}{dt}\X(t) = -\left(\nabla^2\Phi(\X(t))\right)^{-1}\nabla F(\X(t)).
\end{equation}
An important quantity in the analysis of mirror descent is the \textit{Bregman divergence} associated to a mirror map $\Phi$, which is given by 
\begin{equation*}
D_{\Phi}(\x, \mathbf{y}) = \Phi(\x) - \Phi(\mathbf{y}) - \nabla \Phi(\mathbf{y})^T(\x-\mathbf{y}).
\end{equation*}
The following equality can be derived from a quick calculation:
\begin{equation}
\label{eq:breg_derivative}
\frac{d}{dt}D_{\Phi}(\x',\X(t)) = \left\langle\nabla F(\X(t)), \x' - \X(t) \right\rangle,
\end{equation}
where $\x'\in\R^n$ is any reference point. In particular, when the objective function $F$ is convex, equation (\ref{eq:breg_derivative}) shows that mirror descent monotonically decreases the Bregman divergence to any minimizer of $F$. In non-convex settings, the inner product in (\ref{eq:breg_derivative}) was used to define the notion of variational coherence, which is the assumption under which convergence of a stochastic version of mirror descent towards a minimizer of $F$ has been shown \cite{ZMBBG17, ZMBBG20}.

\subsection{Sparse phase retrieval}
\label{subsection:sparse_phase_retrieval}
 The goal in sparse phase retrieval is to reconstruct an unknown $k$-sparse vector $\x^\star\in\R^n$ from a set of quadratic measurements $Y_j = (\A_j^T\x^\star)^2$, $j=1,...,m$, where the measurement vectors $\A_j\sim \gauss(0,\mathbf{I}_n)$ are i.i.d.\ and observed.

A well-established approach to estimating the signal $\x^\star$ is based on non-convex optimization \cite{CLM16, WZGAC18, YWW19, ZWGC18}. In particular, given observations $(\A_1,Y_1),...,(\A_m,Y_m)$, the goal becomes minimizing the (non-convex) empirical risk
\begin{equation}\label{eq:objective_function}
F(\x) = \frac{1}{4m}\sum_{j=1}^m\left((\A_j^T\x)^2 - Y_j\right)^2.
\end{equation}
It is worth mentioning that a different, amplitude-based risk function has also been considered \cite{WZGAC18, ZWGC18}. However, in that case the objective function becomes non-smooth, as the terms $(\A_j^T\x)^2$ are replaced with $|\A_j^T\x|$, and the analysis via mirror descent appears more challenging.

As a non-convex function, the function $F$ in (\ref{eq:objective_function}) could potentially have many local minima and saddle points, and even global minima different from $\x^\star$. It has been shown that if we have $m\ge 4k-1$ Gaussian measurements, then, with high probability, $\x^\star$ is (up to a global sign) the sparsest minimizer of $F$, that is $\{\pm \x^\star\}=\operatorname{argmin}_{\x:\|\x\|_0\le k} F(\x)$ \cite{LV13}. In order to tackle these difficulties, previous methods such as SPARTA \cite{WZGAC18} and CRAF \cite{ZWGC18} employed a sophisticated spectral or orthogonality-promoting initialization scheme, which produces an initial estimate close enough to the signal $\x^\star$, followed by thresholded gradient descent updates, which confine the iterates to the low-dimensional subspace of $k$-sparse vectors. 

\section{The algorithm}
\label{section:algorithm}
We consider unconstrained mirror descent in continuous time given by (\ref{eq:mirror_descent_continuous}) and applied to the objective function (\ref{eq:objective_function}), equipped with the mirror map \cite{GHS20}
\begin{equation}
\label{eq:mirror_map}
\Phi(\x) = \sum_{i=1}^n\left(x_i \operatorname{arcsinh}\left(\frac{x_i}{\beta}\right) - \sqrt{x_i^2 + \beta^2}\right),
\end{equation}
for some parameter $\beta>0$. A discussion on the choice of the parameter $\beta$ is given in Section \ref{section:main_result}. The Hessian is given by the diagonal matrix with entries
$
\nabla^2\Phi(\x)_{ii} = (x_i^2 + \beta^2)^{-1/2}.
$

For the initialization, following the approach outlined in \cite{WR20}, we set
\begin{align}
\label{eq:initialization}
X_i(0) = \begin{cases}
\frac{1}{\sqrt{3}} \cdot \sqrt{\frac{1}{m}\sum_{j=1}^mY_j} \qquad & i=i_0\\
0 & i\neq i_0
\end{cases}
\end{align}
for a coordinate $i_0$ on the support of the signal $\x^\star$, i.e.\ $x^\star_{i_0} \neq 0$. Here, the term $\sqrt{\frac{1}{m}\sum_{j=1}^mY_j}$ is an estimator for the magnitude $\|\x^\star\|_2$, see e.g.\ \cite{CLS15, WGE17}. By Lemma 1 of \cite{WR20}, it is possible to estimate a coordinate in the true support with high probability from $\widetilde{\O}(k(x^\star_{max})^{-2})$ Gaussian measurements, where $x^\star_{max} = \max_i |x^\star_i|/\|\x^\star\|_2$. We develop theory for mirror descent with $\widetilde{\O}(k^2)$ samples, which is the worst case of the bound $\widetilde{\O}(k(x^\star_{max})^{-2})$, as $\x^\star$ is a $k$-sparse vector, and hence $x^\star_{max} \ge 1/\sqrt{k}$.

In order to gain some intuitive understanding of why the initialization (\ref{eq:initialization}) is suitable, it will be helpful to consider the limiting case $m=\infty$. The following explanation to motivate the choice of initialization is taken from \cite{WR20}. The gradient of the empirical risk is given by
\begin{equation}
\label{eq:gradient}
\nabla F(\x) = \frac{1}{m}\sum_{j=1}^m \left((\A_j^T\x)^2 - (\A_j^T\x^\star)^2\right)(\A_j^T\x)\A_j.
\end{equation}
A straightforward calculation yields the following expression for the population gradient, which is defined as the expectation of $\nabla F(\x)$ and corresponds to the limiting case $m=\infty$.
\begin{equation*}
\nabla f(\x) = \E[\nabla F(\x)] = \left(3\|\x\|_2^2 - \|\x^\star\|_2^2\right) \x - 2(\x^T\x^\star) \x^\star.
\end{equation*}
We see that, if $m=\infty$, mirror descent has three types of fixed points: a local maximum at $\x^{(1)} = \mathbf{0}$, saddle points at any $\x^{(2)}$ satisfying $\|\x^{(2)}\|_2^2 = \frac{1}{3}\|\x^\star\|_2^2$ and $(\x^{(2)})^T\x^\star = 0$, and the global minima $\x^{(3)} = \pm \x^\star$. Although the landscape might be less well-behaved if $m$ is finite, this consideration provides an intuitive explanation for the choice of initialization (\ref{eq:initialization}), namely, that this initialization is suitably far away from the saddle points of the population risk $f$.

Compared to initialization schemes typically employed in existing non-convex optimization based approaches to sparse phase retrieval, such as the spectral initialization used in CRAF \cite{ZWGC18} and the orthogonality promoting initialization used in SPARTA \cite{WZGAC18}, the initialization we use is much simpler: this method requires only a single coordinate on the support, while the aforementioned initialization schemes require estimating the full support followed by a spectral or orthogonality-promoting scheme.

Finally, the fact that the function $f$ has saddle points means that variational coherence as defined in \cite{ZMBBG17, ZMBBG20} is not satisfied in our problem. Further, the results in \cite{ZMBBG17, ZMBBG20} are formulated for the online setting, where by design one has continuous access to independent observations, while we consider batch mirror descent with a fixed number of observations.

\section{Main result}
\label{section:main_result}
In this section, we present the main result of this paper. We show that continuous-time mirror descent recovers any $k$-sparse signal $\x^\star\in\R^n$ with $x^\star_{min} \ge \Omega(1/\sqrt{k})$ with high probability from $\widetilde{\O}(k^2)$ Gaussian measurements, where we denote $x^\star_{min} := \min_{i:x^\star_i\neq 0} |x^\star_i|/\|\x^\star\|_2$. Since $\x^\star$ cannot be distinguished from $-\x^\star$ using phaseless measurements, we consider, for any vector $\x\in \R^n$, the distance $\operatorname{dist}(\x,\x^\star) = \min\{\|\x-\x^\star\|_2, \|\x+\x^\star\|_2\}$. To simplify the presentation in what follows, we assume that the initialization in (\ref{eq:initialization}) satisfies $x^\star_{i_0}>0$, with which we will show convergence to $\x^\star$; otherwise, we can show that the algorithm converges to $-\x^\star$.

The following lemma characterizes the relationship between the Bregman divergence $D_{\Phi}(\x^\star,\x)$ associated to the hypentropy mirror map (\ref{eq:mirror_map}) and the $\ell_2$ norm $\|\x-\x^\star\|_2$. For a vector $\x\in \R^n$ and a subset of coordinates $\mathcal{S}\subset \{1,...,n\}$, we write $\x_{\mathcal{S}} = (x_i)_{i\in \mathcal{S}}\in \R^{|\mathcal{S}|}$.
\begin{lemma} \label{lemma1}
Let $\x^\star\in \R^n$ be any $k$-sparse vector with  $x^\star_{min} \ge c/\sqrt{k}$ for some constant $c>0$. Let $\mathcal{S}=\{1\le i\le n:x^\star_i\neq 0\}$ be its support, and let $\Phi$ be as in (\ref{eq:mirror_map}) with parameter $\beta>0$.
\begin{itemize}
\item For any vector $\x\in \R^n$, we have
\begin{equation}\label{eq:lemma1lb}
\|\x-\x^\star\|_2^2 \le 2\sqrt{\max\{\|\x\|_{\infty}^2,\|\x^\star\|_\infty^2\}+\beta^2} \cdot D_{\Phi}(\x^\star,\x).
\end{equation}
\item Let $\x\in \R^n$ be any vector with $x_ix^\star_i\ge 0$ (no mismatched sign) and $|x_i| \ge \frac{1}{2}|x^\star_i|$ for all $i=1,...,n$. We have
\begin{equation}\label{eq:lemma1ub}
D_{\Phi}(\x^\star, \x) \le \frac{\sqrt{k}}{c\|\x^\star\|_2}\|\x_{\mathcal{S}}-\x^\star_{\mathcal{S}}\|_2^2 + \|\x_{\mathcal{S}^c}\|_1.
\end{equation}
\end{itemize}
\end{lemma}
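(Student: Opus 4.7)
The key structural fact is that the hypentropy mirror map is separable: writing $\phi(t) = t\operatorname{arcsinh}(t/\beta) - \sqrt{t^2+\beta^2}$, we have $\Phi(\x) = \sum_i \phi(x_i)$ and hence $D_\Phi(\x^\star,\x) = \sum_i D_\phi(x_i^\star, x_i)$, with the one-dimensional second derivative $\phi''(t) = (t^2+\beta^2)^{-1/2}$. Both inequalities then reduce to estimating a single Taylor remainder $D_\phi(x_i^\star,x_i) = \tfrac{1}{2}\phi''(\xi_i)(x_i^\star-x_i)^2$ for some $\xi_i$ between $x_i$ and $x_i^\star$, so the plan is simply to control $\phi''(\xi_i)$ from above or below depending on which direction we want.

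For the lower bound \eqref{eq:lemma1lb}, I would bound $\phi''(\xi_i)$ from below by observing that $|\xi_i|\le\max\{|x_i|,|x_i^\star|\}\le\max\{\|\x\|_\infty,\|\x^\star\|_\infty\}$, whence
\[
\phi''(\xi_i) \;\ge\; \bigl(\max\{\|\x\|_\infty^2,\|\x^\star\|_\infty^2\}+\beta^2\bigr)^{-1/2}.
\]
Summing the coordinate-wise Taylor identities and rearranging gives exactly \eqref{eq:lemma1lb} with the factor of $2$.

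For the upper bound \eqref{eq:lemma1ub}, I would split the sum according to $\S$ and $\S^c$. On $\S^c$, where $x_i^\star=0$, an explicit calculation gives
\[
D_\phi(0,x_i) \;=\; \sqrt{x_i^2+\beta^2}-\beta \;\le\; |x_i|,
\]
using $\sqrt{a^2+b^2}\le a+b$ for $a,b\ge 0$; summing contributes the $\|\x_{\S^c}\|_1$ term. On $\S$, the sign condition $x_ix_i^\star\ge 0$ together with $|x_i|\ge \tfrac12|x_i^\star|$ ensures that the entire segment from $x_i$ to $x_i^\star$ stays on the same side of $0$ and has modulus at least $\tfrac12|x_i^\star|$; hence any intermediate $\xi_i$ satisfies $|\xi_i|\ge \tfrac12|x_i^\star|$, so
\[
\phi''(\xi_i) \;\le\; \frac{1}{|\xi_i|} \;\le\; \frac{2}{|x_i^\star|} \;\le\; \frac{2\sqrt{k}}{c\|\x^\star\|_2},
\]
where the last step uses the hypothesis $x^\star_{min}\ge c/\sqrt{k}$. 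Taylor's theorem then yields $D_\phi(x_i^\star,x_i) \le (\sqrt{k}/(c\|\x^\star\|_2))(x_i-x_i^\star)^2$, and summing over $i\in\S$ produces the first term of \eqref{eq:lemma1ub}.

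The only mildly delicate step is the lower bound $|\xi_i|\ge \tfrac12|x_i^\star|$ on $\S$: it is here that the two hypotheses (no sign mismatch and $|x_i|\ge \tfrac12|x_i^\star|$) get used in a combined way to keep $\xi_i$ away from $0$, where $\phi''$ blows up. Everything else is a direct consequence of separability plus one-dimensional Taylor expansion, so I do not expect any genuine obstacle beyond careful bookkeeping of these elementary bounds.
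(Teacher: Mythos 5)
Your proof is correct and takes essentially the same approach as the paper: both reduce to bounding the second derivative $\phi''(t)=(t^2+\beta^2)^{-1/2}$ on the interval between $x_i$ and $x_i^\star$, the only cosmetic difference being that you invoke the Lagrange form of Taylor's remainder, $D_\phi(x_i^\star,x_i)=\tfrac12\phi''(\xi_i)(x_i^\star-x_i)^2$, whereas the paper integrates $g_i'(z)=(z-x_i^\star)/\sqrt{z^2+\beta^2}$ from $x_i^\star$ to $x_i$ and bounds the integrand; the two produce identical constants. The key observation you single out — that the sign condition together with $|x_i|\ge\tfrac12|x_i^\star|$ keeps the intermediate point bounded away from zero so $\phi''$ does not blow up — is indeed exactly the crux, and your handling of the $\S^c$ term via the explicit formula $D_\phi(0,x_i)=\sqrt{x_i^2+\beta^2}-\beta$ matches the paper's.
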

The bound in (\ref{eq:lemma1lb}) shows that when a vector $\x$ is close to $\x^\star$ in terms of the Bregman divergence $D_{\Phi}$, then $\x$ is also close to $\x^\star$ in the $\ell_2$ sense. This means that if we are interested in convergence with respect to the $\ell_2$ norm, we can consider the Bregman divergence $D_{\Phi}$ as a proxy, and we write $\operatorname{dist}_{\Phi}(\x^\star,\x) = \min\{D_{\Phi}(\x^\star,\x), D_{\Phi}(-\x^\star,\x) \}$. The bound in (\ref{eq:lemma1ub}) shows that, for certain vectors $\x\in\R^n$ of interest, the Bregman divergence $D_{\Phi}$ can also be upper bounded in terms of a combination of the $\ell_1$ and $\ell_2$ norms. Note that, because of the assumption $|x_i| \ge \frac{1}{2}|x^\star_i|$, the bound in (\ref{eq:lemma1ub}) does not depend on the parameter $\beta$.
Details of the proof can be found in Appendix \ref{appendix:b}.

We can now formulate our main result. The constants $c,c_1,c_2,c_3, c_4$ and $c_5$ mentioned in the following theorem are universal constants, and explicit expressions for these constants are given in the proof in Appendix \ref{appendix:d}.
\begin{theorem}\label{theorem}
Let $\x^\star\in \R^n$ be any $k$-sparse vector with $x^*_{min} \ge c/\sqrt{k}$ for some constant $c>0$, and let $\mathcal{S}=\{1\le i\le n:x^*_i\neq 0\}$ be its support. There exist constants $c_1,c_2,c_3,c_4,c_5>0$ such that the following holds. Let $m\ge c_1\max\{k^2\log^2n,\;\log^5n\}$, and let $\X(t)$ be given by the continuous time mirror descent equation (\ref{eq:mirror_descent_continuous}) with mirror map (\ref{eq:mirror_map}) and initialization (\ref{eq:initialization}) with $\beta\le c_2 \|\x^\star\|_2/n^4$. Let $\delta = c_3n\sqrt{\frac{\beta}{\|\x^\star\|_2}}$ and $T_2(\delta) = \inf\{t>0: \operatorname{dist}_{\Phi}(\x^\star, \X(t))\le 2\delta \|\x^\star\|_2)\}$.

Then, with probability at least $1-c_4n^{-10}$, there is a $T_1(\beta)\le c_5k\log\frac{\|\x^\star\|_2}{\beta}\cdot \log (k\log \frac{\|\x^\star\|_2}{\beta}) \cdot \|\x^\star\|_2^{-3}$ such that
\begin{equation}
\label{thm:convergence}
\frac{\operatorname{dist}_{\Phi}(\x^\star,\X(t))}{\|\x^\star\|_2} \le \frac{6\sqrt{k}}{c}\cdot \exp\left(-\frac{c\|\x^\star\|_2^3}{4\sqrt{k}} (t - T_1(\beta))\right) \quad \text{for all } T_1(\beta)\le t\le T_2(\delta).
\end{equation}
Further, for all $t\le T_2(\delta)$, we have
\begin{equation}
\label{thm:off_support}
\|\X_{\mathcal{S}^c}(t)\|_1 \le \delta \|\x^\star\|_2.
\end{equation}
\end{theorem}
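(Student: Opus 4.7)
The proof proceeds by a continuity/bootstrap argument that maintains two invariants along the trajectory up to time $T_2(\delta)$: (i) the off-support mass $\|\X_{\S^c}(t)\|_1$ is at most $\delta\|\x^\star\|_2$; (ii) every support coordinate satisfies $\operatorname{sign}(X_i(t))=\operatorname{sign}(x^\star_i)$ and, past $T_1(\beta)$, $|X_i(t)| \ge \tfrac12 |x^\star_i|$. A uniform concentration step is the prerequisite: with $m\ge c_1 k^2\log^2 n$ samples, Bernstein-type bounds give, with probability at least $1-c_4 n^{-10}$, that the empirical gradient $\nabla F(\x)$ is uniformly close to the population gradient $\nabla f(\x) = (3\|\x\|_2^2 - \|\x^\star\|_2^2)\x - 2(\x^T\x^\star)\x^\star$ on vectors consistent with (i)--(ii), reducing the analysis to perturbed population dynamics.

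For the off-support bound (\ref{thm:off_support}) I would work in the mirror coordinate $U_i(t) := \operatorname{arcsinh}(X_i(t)/\beta)$, for which (\ref{eq:mirror_descent_continuous}) reduces to $\dot U_i = -\partial_i F(\X(t))$. For $i\notin \S$, since $x^\star_i = 0$, the population gradient simplifies to $\partial_i f(\x) = (3\|\x\|_2^2-\|\x^\star\|_2^2)\,x_i$, which, because $\|\X\|_2^2 \ge \tfrac13\|\x^\star\|_2^2$ is maintained from initialization, contracts $U_i$ toward zero. The only source of growth is the sampling error $\partial_i F - \partial_i f$, which concentration bounds uniformly by some $\epsilon$ shrinking polynomially with $m/k^2$. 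Since $U_i(0)=0$, integration gives $|U_i(t)|\le \epsilon\, t$, and with $\beta\le c_2\|\x^\star\|_2/n^4$ the bound $|X_i(t)| = \beta\sinh|U_i(t)| \le \beta e^{\epsilon t}$ stays tiny; using $T_2(\delta)\lesssim k\log n/\|\x^\star\|_2^3$ and summing over the $n$ off-support coordinates delivers the target $\delta = c_3 n\sqrt{\beta/\|\x^\star\|_2}$.

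For the warm-up phase I track the support coordinates in the same mirror representation. The driving coordinate $i_0$ starts at $\approx\|\x^\star\|_2/\sqrt 3$, so $\X^T\x^\star \asymp \|\x^\star\|_2|x^\star_{i_0}| \gtrsim \|\x^\star\|_2^2/\sqrt k$ throughout the warm-up. Plugging this into $\partial_i f$, the dominant term on the support is $-2(\X^T\x^\star)x^\star_i$, carrying the correct sign, so $\dot U_i \gtrsim \|\x^\star\|_2^2|x^\star_i|/\sqrt k$ (with a correction to account for the $(3\|\X\|_2^2-\|\x^\star\|_2^2)X_i$ term near $|X_i|\simeq |x^\star_i|$). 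Integrating, $|U_i|$ reaches $\operatorname{arcsinh}(|x^\star_i|/(2\beta))\sim \log(\|\x^\star\|_2/\beta)$ in time $\lesssim \sqrt k\log(\|\x^\star\|_2/\beta)/(\|\x^\star\|_2^2 |x^\star_i|) \le k\log(\|\x^\star\|_2/\beta)/\|\x^\star\|_2^3$; the extra $\log\log$ factor in $T_1(\beta)$ comes from a careful bookkeeping over the slowest support coordinate reaching the regime $|X_i|\ge\tfrac12|x^\star_i|$. At $t=T_1(\beta)$ both invariants hold, and Lemma~\ref{lemma1} then gives $D_{\Phi}(\x^\star,\X(T_1))\le 6\sqrt k\|\x^\star\|_2/c$.

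For the linear convergence phase I use identity (\ref{eq:breg_derivative}) and aim to establish a local variational-coherence inequality along the trajectory,
\[
\langle \nabla F(\X(t)),\,\X(t)-\x^\star\rangle \;\ge\; \frac{c\,\|\x^\star\|_2^3}{4\sqrt k}\, D_{\Phi}(\x^\star, \X(t)).
\]
This is derived by lower bounding the population inner product $\langle \nabla f(\X),\X-\x^\star\rangle$ by a constant multiple of $\|\X-\x^\star\|_2^2$, which is feasible because invariants (i)--(ii) keep the trajectory away from the saddle manifold $\{\X^T\x^\star=0,\,\|\X\|_2^2=\tfrac13\|\x^\star\|_2^2\}$, and then converting $\|\X-\x^\star\|_2^2$ into $D_{\Phi}$ via (\ref{eq:lemma1lb}), which contributes a factor $1/\sqrt{\max\{\|\X\|_\infty^2,\|\x^\star\|_\infty^2\}+\beta^2} = \Theta(1/\|\x^\star\|_2)$ and accounts for the $\|\x^\star\|_2^3/\sqrt k$ rate. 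Gr\"onwall's inequality then yields (\ref{thm:convergence}). The main obstacle is the variational coherence inequality itself: it must be made robust to the concentration error and to the $\ell_1$ off-support contribution entering through $\|\X-\x^\star\|_2^2$, and it must remain valid in the boundary regime $|X_i|\simeq\tfrac12|x^\star_i|$, which is precisely where the saddle manifold and the minimum lie closest in the geometry induced by the hypentropy mirror map.
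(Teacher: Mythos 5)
Your overall architecture (two-phase analysis with invariants on signs, off-support mass and half-magnitude support coordinates, uniform concentration of $\nabla F$ around $\nabla f$, then Gr\"onwall on $D_\Phi$) matches the paper's, but the step where you prove the off-support bound (\ref{thm:off_support}) has a genuine quantitative gap, and this is precisely the hard part of the theorem. You bound the mirror variable by $|U_j(t)|\le \epsilon t$ with a \emph{uniform-in-time} error level $\epsilon$ and then multiply by an absolute time horizon. Two things break. First, with only $m\asymp k^2\log^2 n$ samples the gradient error does not ``shrink polynomially with $m/k^2$'': uniformly over the relevant set it is of order $\gamma\|\X_\S(t)\|_1/k+O(\delta)$, i.e.\ as large as $\gamma/\sqrt{k}$, with $\gamma$ a fixed constant. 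Second, the time horizon is not $O(k\log n)$: since $\beta$ may be taken arbitrarily small, the warm-up lasts $T_1\asymp k\log(1/\beta)$ (up to $\log\log$ factors), because the slowest support coordinate must climb from $0$ to $\tfrac12|x^\star_{i_1}|$ in the hypentropy geometry, i.e.\ $U_{i_1}$ must reach $\approx\log(1/\beta)$ at rate $O(\|\X_\S(t)\|_1/k)$. The product of these two bounds is $\epsilon T_1\asymp \gamma\sqrt{k}\,\log(1/\beta)$, so $\beta e^{\epsilon T_1}\approx\beta^{1-\gamma c\sqrt{k}}$, which is useless for large $k$ (and cannot be rescued by enlarging the constant $c_1$, since $\gamma$ cannot be made to decay with $k$). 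The correct argument must compare rates \emph{pointwise in time}: for $j\notin\S$ the error driving $X_j$ and the signal term driving the slowest support coordinate $i_1$ are both proportional to $\|\X_\S(t)\|_1/k$, so $|\nabla F(\X(t))_j|$ is a fixed small fraction of $|\nabla F(\X(t))_{i_1}|$; after a time reparametrization this shows $X_j$ can grow at most like a fixed small power of the multiplicative growth $X_{i_1}/\beta\le 1/(2\beta)$, giving $|X_j(t)|\le\beta^{(1+\alpha)/2}$ during the warm-up (and a similar relative bound on $[T_1,T_2]$, where $T_2-T_1\lesssim\sqrt{k}\log(\sqrt k/\delta)$). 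This relative-growth comparison is the missing idea; without it the claimed $\delta=c_3 n\sqrt{\beta/\|\x^\star\|_2}$ does not follow.

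A second, smaller issue is in your linear-convergence step: to close the Gr\"onwall argument you need $D_\Phi(\x^\star,\X(t))\lesssim \tfrac{\sqrt{k}}{c\|\x^\star\|_2}\|\X_\S(t)-\x^\star_\S\|_2^2+\|\X_{\S^c}(t)\|_1$, i.e.\ the bound (\ref{eq:lemma1ub}) of Lemma \ref{lemma1}, which is valid exactly because of the invariants $X_i x^\star_i\ge0$ and $|X_i|\ge\tfrac12|x^\star_i|$; the inequality (\ref{eq:lemma1lb}) you invoke goes in the opposite direction ($\|\X-\x^\star\|_2^2\lesssim\|\x^\star\|_2\,D_\Phi$) and cannot convert $\|\X-\x^\star\|_2^2$ into an upper bound on $D_\Phi$. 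This also matters for the bookkeeping of the rate: the $1/\sqrt{k}$ in the exponent of (\ref{thm:convergence}) comes from the coefficient $\sqrt{k}/(c\|\x^\star\|_2)$ in (\ref{eq:lemma1ub}) (reflecting that support entries can be as small as $c\|\x^\star\|_2/\sqrt{k}$), not from the $\Theta(1/\|\x^\star\|_2)$ factor in (\ref{eq:lemma1lb}). With these two repairs, and with the third invariant of the trajectory (an upper bound on $\|\X(t)\|_2$ via the ratio $2\sqrt3\,\X^T\x^\star\ge3\|\X\|_2^2-\|\x^\star\|_2^2$, which your sketch implicitly assumes when asserting $\|\X\|_2^2\ge\tfrac13\|\x^\star\|_2^2$ and boundedness), your outline aligns with the paper's proof.
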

The high-level idea of the proof of Theorem \ref{theorem} is as follows. First, considering the limiting case $m=\infty$, i.e.\ assuming we had access to the population gradient $\nabla f$, we show that mirror descent is variationally coherent along its trajectory, namely $\frac{d}{dt}D_{\Phi}(\x^\star, \X(t)) = \langle \nabla f(\X(t)), \x^\star - \X(t)\rangle < 0$. In order to prove convergence for a finite $m$, we show that, if $m\ge \widetilde{\O}(k^2)$, then the empirical gradient $\nabla F$ is sufficiently close to its expectation $\nabla f$ using concentration results for Lipschitz functions and bounded random variables. A detailed proof can be found in Appendix \ref{appendix:d}. 

\paragraph{Convergence of mirror descent} The bound (\ref{thm:convergence}) in Theorem \ref{theorem} indicates that the convergence of mirror descent (measured by the Bregman divergence $D_{\Phi}$) can be described as follows: in an initial warm-up period, the Bregman divergence decreases to $\sqrt{k}\|\x^\star\|_2$ (up to constants). Then, convergence is linear up to a precision determined by the mirror map parameter $\beta$ and the dimension of the signal $n$. This behavior can be explained as follows. The initial warm-up period is caused by the fact that, as manifested in the proof of Theorem \ref{theorem}, the initial Bregman divergence $D_{\Phi}(\x^\star, \X(0))$ scales like $\sqrt{k}\log\frac{\|\x^\star\|_2}{\beta}$.
The following linear convergence stage corresponds to variables $X_i(t)$ on the support being fitted; to establish linear convergence, we crucially use the bound (\ref{eq:lemma1ub}) of Lemma \ref{lemma1} along with the fact that the second term $\|\X_{\mathcal{S}^c}(t)\|_1$ is negligibly small compared to $\|\X_{\mathcal{S}}(t)-\x_{\mathcal{S}}^\star\|_2^2$. 

\paragraph{Role of the mirror map parameter}
In Theorem \ref{theorem}, the role of the parameter $\beta$ is to ensure that off-support variables stay sufficiently small, cf.\ (\ref{thm:off_support}). In Theorem \ref{theorem}, we require $\beta\le \O(\|\x^\star\|_2/n^4)$. Note that both this requirement and the bound (\ref{thm:off_support}) are pessimistic and not sharp in general. The important property is that the bound on $\|\X_{\mathcal{S}^c}(t)\|_1$ depends polynomially on the parameter $\beta$.  
The price we pay for choosing a small $\beta$ is a longer warm-up period, whose length scales logarithmically in the parameter $\beta$ (see the definition of $T_1(\beta)$). 
In practice, we would simply choose a very small $\beta$ (e.g.\ $10^{-10}$), as the improvement in precision up to which we have linear convergence scales polynomially in $\beta$, while the price we pay in terms of a longer warm-up period only scales logarithmically in $\beta$. A similar trade-off between statistical accuracy and computaional cost with respect to the choice of initialization has been previously observed in \cite{VKR19}.

\paragraph{Scaling with signal magnitude}
When analyzing the convergence speed of continuous-time mirror descent equipped with the hypentropy mirror map for sparse phase retrieval, $t\|\x^\star\|_2^3$ is a natural quantity to consider. Recall that in sparse phase retrieval, the goal is to recover a signal $\x^\star$ from a set of phaseless measurements $\{(\A_j^T\x^\star)^2\}$. This problem is equivalent to the alternative problem of recovering the vector $a\x^\star$ from observations $\{a^2(\A_j^T\x^\star)^2\}$, for any $a\neq 0$. However, in the alternative problem, $\X(t)$ is not replaced by $a\X(t)$, and $\frac{d}{dt}\X(t)$ not by $a\frac{d}{dt}\X(t)$. 
If we replace $\x^\star$ by $a\x^\star$, the natural choice for the mirror map parameter becomes $a\beta$, as our results depend on the parameter $\beta$ only via the ratio $\beta/\|\x^\star\|_2$. Recalling the initialization (\ref{eq:initialization}), we see that also $\X(0)$ is replaced by $a\X(0)$ in the alternative problem. This means that, in the definition of $\frac{d}{dt}\X(t)$ (\ref{eq:mirror_descent_continuous}), the inverse Hessian $(\nabla^2\Phi(\X(t)))^{-1}$ is multiplied by $a$, while the gradient $\nabla F(\X(t))$ is multiplied by $a^3$, cf.\ (\ref{eq:gradient}). Hence, in the alternative problem formulation, $\frac{d}{dt}\X(t)$ is replaced by $a^4\frac{d}{dt}\X(t)$, which makes $t\|\x^\star\|_2^3$ the right quantity to consider for the convergence speed to stay unchanged.

When considering the algorithm in discrete time, this suggests that the step size should scale like $\|\x^\star\|_2^{-3}$. A similar observations has been made in the case of gradient descent for phase retrieval \cite{MWCC18}, where the step size scales as $\|\x^\star\|_2^{-2}$. We have an extra factor $\|\x^\star\|_2^{-1}$ because of the mirror map.
\begin{remark}[On sample complexity]
Up to logarithmic term, the sample complexity in Theorem \ref{theorem} matches that of existing results \cite{CLM16, NJS15, OYDS12, WZGAC18, YWW19}. We typically have $k^2\log^2 n > \log^5 n$ in regimes of interest, so that our sample complexity bound reads $\O(k^2\log^2n)$; the factor $\log^2 n$ is likely an artifact of the our proof technique, and we expect that it is possible to improve the bound to $\O(k^2\log n)$. The empirical results of \cite{WR20} suggest that the sample complexity of HWF, which is closely related to mirror descent as we will see in the next section, depends on the maximum signal component $\max_i |x^\star_i|/\|\x^\star\|_2$. The main bottleneck to establishing such a dependence in our theory seems to be the dependency between the estimates $\mathbf{X}(t)$ and the measurement vectors $\{\A_j\}$, and is likely to require tools different from the ones we use to prove Theorem \ref{theorem}.
\end{remark}

\section{Connection with Hadamard Wirtinger flow}
\label{section:connection_with_hwf}
In discrete time, it has been shown that the exponentiated gradient algorithm with positive and negative weights (EG$\pm$) \cite{KW97} without normalization is equivalent to mirror descent equipped with the hypentropy mirror map \cite{GHS20}. This equivalence has been used in \cite{VKR20} to recover results on implicit regularization in linear models using tools from the mirror descent literature. Since HWF performs Euclidean gradient descent on the empirical risk with Hadamard parametrization, HWF can be recovered as a discrete-time first-order approximation to the mirror descent algorithm we analyzed in Section \ref{section:main_result}. Hence, the convergence behavior established in Theorem \ref{theorem} for continuous-time mirror descent might guide the development of analogous guarantees for HWF. In particular, our analysis suggests a principled approach to address the slow convergence of HWF pointed out in \cite{WR20}.

First, consider the following version of the exponentiated gradient algorithm in continuous time:
\begin{equation}
\label{eq:exponentiated_gradient_continuous}
\begin{gathered}
\X(t) = \mathbf{U}(t) - \mathbf{V}(t) \\
\frac{d}{dt}\mathbf{U}(t) = -\mathbf{U}(t)\odot \nabla F(\X(t)), \qquad \frac{d}{dt}\mathbf{V}(t) = \mathbf{V}(t)\odot \nabla F(\X(t)) 
\end{gathered}
\end{equation}
with initialization, writing $\hat{\theta} = (\sum_{j=1}^mY_j/m)^{\frac{1}{2}}$ for the estimate of the signal size $\|\x^\star\|_2$, 
\begin{align}\label{eq:initialization2}
U_i(0) = \begin{cases} \frac{\hat{\theta}}{2\sqrt{3}} + \sqrt{\frac{\hat{\theta}^2}{12}+ \frac{\beta^2}{4}} \quad &i=i_0 \\ \frac{\beta}{2} & i\neq i_0 \end{cases}, \qquad V_i(0) = \begin{cases} -\frac{\hat{\theta}}{2\sqrt{3}} + \sqrt{\frac{\hat{\theta}^2}{12}+ \frac{\beta^2}{4}} \quad &i=i_0 \\ \frac{\beta}{2} & i\neq i_0 \end{cases},
\end{align}
where $i_0$ is defined as in (\ref{eq:initialization}) and the notation $\odot$ denotes the elementwise Hadamard product.
Similar to the discrete case \cite{GHS20}, a brief computation shows that the exponentiated gradient algorithm EG$\pm$ (\ref{eq:exponentiated_gradient_continuous}) with initialization (\ref{eq:initialization2}) is equivalent to mirror descent (\ref{eq:mirror_descent_continuous}) with initialization (\ref{eq:initialization}). We provide the details in Appendix \ref{appendix:a}. In particular, this reveals that the parameter $\beta$ in the hypentropy mirror map can be interpreted as the initialization size (with a factor $\frac{1}{2}$) in the exponentiated gradient formulation.

In discrete time, the exponentiated gradient algorithm (\ref{eq:exponentiated_gradient_continuous}) reads
\begin{equation}
\label{eq:exponentiated_gradient_discrete}
\begin{gathered}
\X(t) = \mathbf{U}(t) - \mathbf{V}(t) \\
\mathbf{U}(t+1) = \mathbf{U}(t)\odot \exp\left(-\eta\nabla F(\X(t))\right), \qquad \mathbf{V}(t+1) = \mathbf{V}(t)\odot \exp\left(\eta\nabla F(\X(t))\right),
\end{gathered}
\end{equation}
with the same initialization (\ref{eq:initialization2}), where $\eta>0$ is the step size. Noting that $e^x \approx 1 + x$, the update (\ref{eq:exponentiated_gradient_discrete}) can be approximated by (with the step size $\eta$ rescaled by a factor $4$)
\begin{equation}
\label{eq:hwf}
\begin{gathered}
\X(t) = \mathbf{U}(t)\odot \mathbf{U}(t) - \mathbf{V}(t)\odot \mathbf{V}(t) \\
\mathbf{U}(t+1) = \mathbf{U}(t)\odot \left(\Eins_n - 2\eta\nabla F(\X(t))\right), \qquad \mathbf{V}(t+1) = \mathbf{V}(t)\odot \left(\Eins_n + 2\eta\nabla F(\X(t))\right),
\end{gathered}
\end{equation}
where $\Eins_n\in \R^n$ denotes the vector of all ones. This is exactly the update of HWF with slightly different initial values $U_{i_0}(0)$ and $V_{i_0}(0)$ in (\ref{eq:initialization2}) compared to \cite{WR20}. Note that $U_i$ and $V_i$ in (\ref{eq:hwf}) correspond to the square root of $U_i$ and $V_i$ in (\ref{eq:exponentiated_gradient_discrete}), respectively. The trajectory of HWF with these two initializations is essentially identical, and we only report the results using the initialization (\ref{eq:initialization2}).

The convergence of HWF observed in \cite{WR20} matches the behavior suggested by Theorem \ref{theorem}: first, the estimate $\X(t)$ barely changes (in the $\ell_2$ sense) during the initial warm-up period, which is followed by a stage of linear convergence towards the signal $\x^\star$, after which the convergence slows down. Further, Theorem \ref{theorem} implies that the precision up to which convergence is linear is controlled by the mirror map parameter $\beta$ or, equivalently, by the initialization size in HWF. This means that, by choosing $\beta$ sufficiently small, we can avoid the final stage where convergence is slow, which is the stage mainly responsible for the high number of iterations needed to reach a given precision $\epsilon$.

In the following, we present simulations showing how the parameter $\beta$ affects the convergence of HWF. We note that the trajectories of EG$\pm$ (\ref{eq:exponentiated_gradient_discrete}) and HWF (\ref{eq:hwf}) are essentially identical, so we only show the results for HWF. We run HWF as proposed in \cite{WR20}, with initialization given by the square root of the values in (\ref{eq:initialization2}).
The index $i_0$ in (\ref{eq:initialization2}) is estimated by choosing the largest instance in $\{\sum_{j=1}^mY_jA_{ji}^2\}$ as proposed in \cite{WR20}. For the step size $\eta$, we follow \cite{WR20} and choose $\eta = 0.1$. As discussed in Section \ref{section:main_result}, we would set $\eta = 0.1 / \|\x^\star\|_2^3$ if $\|\x^\star\|_2\neq 1$, and, if $\|\x^\star\|_2$ is unknown, it can be reliably estimated by $\sqrt{\frac{1}{m}\sum_{j=1}^mY_j}$ \cite{CLS15, WGE17}.

The setup for the simulations is as follows. We generate a $10$-sparse signal vector $\x^\star\in \R^{50000}$ by first drawing $\x^\star\sim \gauss(\mathbf{0}, \mathbf{I}_{50000})$, then setting $49990$ random entries of $\x^\star$ to zero, and finally normalizing the vector to $\|\x^\star\|_2=1$. We sample $m=1000$ Gaussian measurement vectors $\A_j \sim \gauss(\mathbf{0}, \mathbf{I}_{50000})$ and generate phaseless measurements $Y_j = (\A_j^T\x^\star)^2$. 

We evaluate the relative error as well as the relative Bregman divergence from the solution set $\{\pm \x^\star\}$, given by
\begin{equation*}
\frac{\operatorname{dist}(\X(t), \x^\star)}{\|\x^\star\|_2}, \qquad \text{and} \qquad \frac{\operatorname{dist}_{\Phi}(\x^\star, \X(t))}{\|\x^\star\|_2}.
\end{equation*}
Figure \ref{fig:figure1} (left) shows that HWF exhibits the behavior suggested by our theory, namely that, after an initial warm-up stage, convergence is linear up to a precision depending on $\beta$. As we decrease the parameter $\beta$ from $10^{-6}$ to $10^{-14}$, the length of the initial warm-up stage increases as $\log\frac{1}{\beta}$. Note that in this first stage we see repeated drops and plateaus. The use of the hypentropy mirror map means that coordinates $X_i(t)$ that are small in magnitude can only change slowly, as then also the inverse Hessian $(\nabla^2 \Phi(\X))^{-1}_{ii} = (X_i^2+\beta^2)^{\frac{1}{2}}$ is small, cf.\ (\ref{eq:mirror_descent_continuous}). Informally, each drop corresponds to one coordinate $X_i(t)$ becoming large, while each plateau corresponds to all ``large'' coordinates reaching a stable state, at which point $\X(t)$ barely moves as the other ``small'' coordinates only change very slowly. This is followed by a second stage where the error decreases linearly up to a precision which depends polynomially on $\beta$ (note the log-scale of the $y$-axis). Finally, the convergence slows down after this precision has been reached. In order to reach a given precision $\epsilon>0$, it is preferable to choose the parameter $\beta$ sufficiently small so that convergence is linear up to precision $\epsilon$. This way, we avoid the final slow convergence stage, while the number of iterations spent in the initial warm-up stage only increases as $\log\frac{1}{\beta}$.

The right plot of Figure \ref{fig:figure1} shows a similar behavior when we consider the Bregman divergence $D_{\Phi}(\x^\star,\X(t))$. The only difference is in the first stage, where the Bregman divergence decreases at a constant rate without plateaus. As $\beta$ decreases, more iterations are spent in the first stage because the initial Bregman divergence $D_{\Phi}(\x^\star, \X(0))$ increases like $\log \frac{1}{\beta}$.
\begin{figure}[t]
  \centering
  \includegraphics[width=0.865\textwidth]{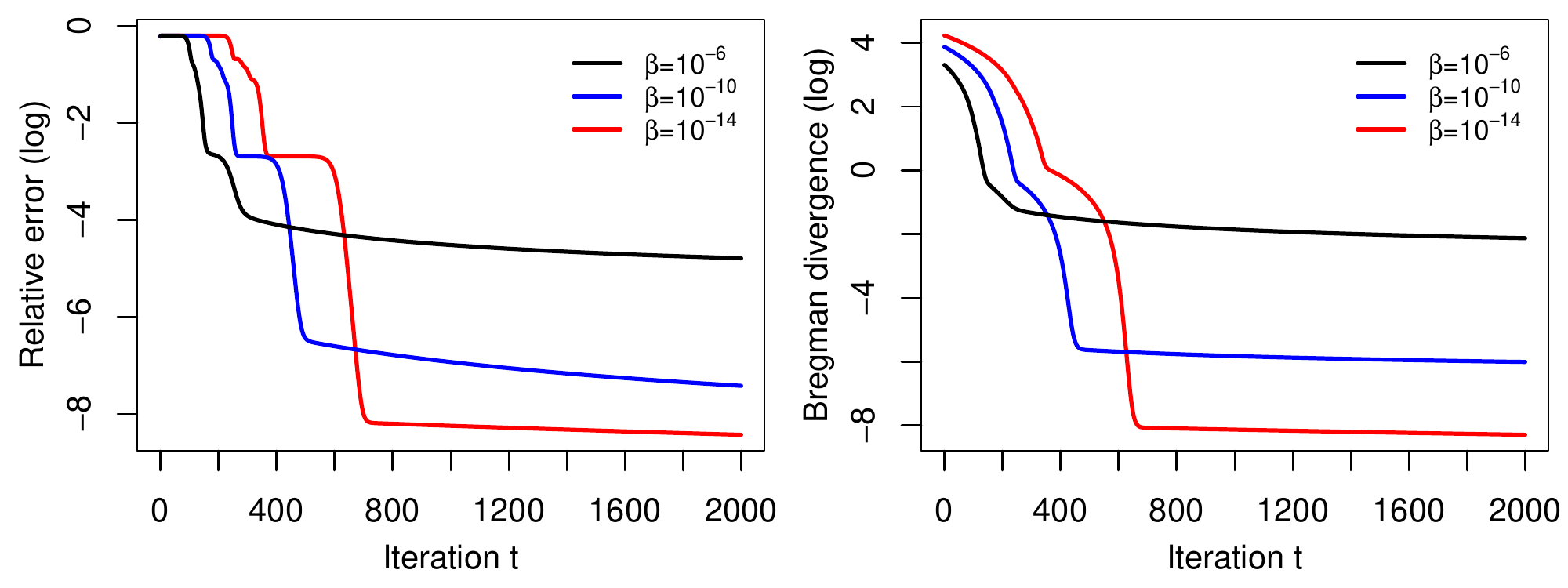}
  \caption{Convergence behavior of HWF with $n=50000$, $m=1000$ and $k=10$. Left: relative error (log-scale) of HWF for $\beta = 10^{-6}$ (black line), $\beta = 10^{-10}$ (blue line) and $\beta = 10^{-14}$ (red line). Right: Bregman divergence (log-scale) for the same values of $\beta$. In all experiments, the same signal vector $\x^\star$ and measurement vectors $\{\A_j\}$ were used.}
	\label{fig:figure1}
\end{figure}

\section{Conclusion}
\label{section:conclusion}
We provided a convergence analysis of continuous-time mirror descent applied to sparse phase retrieval. We proved that, equipped with the hypentropy mirror map, mirror descent recovers any $k$-sparse signal $\x^\star\in \R^n$ with $x^\star_{min} = \Omega(1/\sqrt{k})$ from $\widetilde{\O}(k^2)$ Gaussian measurements. This yields a simple algorithm, which, unlike most existing methods, does not require thresholding steps or added regularization terms to enforce sparsity. Further, as HWF can be recovered as a discrete-time first-order approximation to the mirror descent algorithm we analyzed, our results provide a principled theoretical understanding of HWF. In particular, our continuous-time analysis suggests how the initialization size in HWF affects convergence, and that choosing the initialization size sufficiently small can result in far fewer iterations being necessary to reach any given precision $\epsilon>0$. We leave a full theoretical investigation of HWF, with a proper discussion on step-size tuning, for future work.
\section*{Acknowledgments}
Fan Wu is supported by the EPSRC and MRC through the OxWaSP CDT programme (EP/L016710/1).

\bibliography{references}
\bibliographystyle{plainnat}

\clearpage
\appendix

{\LARGE\textbf{Appendix}}
\vspace{3mm}

The appendix is organized as follows.

In Appendix \ref{appendix:a}, we show the equivalence between continuous-time mirror descent equipped with the hypentropy mirror map and the exponentiated gradient algorithm described in Section 5.

In Appendix \ref{appendix:b}, we prove Lemma \ref{lemma1}.

In Appendix \ref{appendix:c}, we state supporting lemmas used in the proof of Theorem \ref{theorem}.

In Appendix \ref{appendix:d}, we prove Theorem \ref{theorem}.

In Appendix \ref{appendix:e}, we prove the supporting lemmas stated in Appendix \ref{appendix:c}.

In Appendix \ref{appendix:f}, we state and proof technical lemmas used in the proofs of Theorem \ref{theorem} and the supporting lemmas of Appendix \ref{appendix:c}.

Throughout the appendix, we will write $[n] = \{1,...,n\}$ for any natural number $n\in \mathbb{N}$. For a vector $\mathbf{u}\in \R^n$ and an index $i\in [n]$, we write $\mathbf{u}_{-i} = (u_l)_{l\neq i}\in \R^{n-1}$ for the vector with the $i$-th coordinate removed. Recall that, for any set $\S\subseteq [n]$, we write $\mathbf{u}_\S = (u_i)_{i\in\S}\in\R^{|\S|}$, and $x^\star_{min} = \min_{i:x^\star_i\neq 0} |x^\star_i|/\|\x^\star\|_2$ for the magnitude of the minimum non-zero component of the signal vector $\x^\star$.

Recall that the empirical risk is given by
\begin{equation*}
F(\x) = \frac{1}{4m}\sum_{j=1}^m\left((\A_j^T\x)^2 - (\A_j^T\x^\star)^2\right)^2,
\end{equation*}
and its gradient by
\begin{equation*}
\nabla F(\x) = \frac{1}{m}\sum_{j=1}^m\left((\A_j^T\x)^2 - (\A_j^T\x^\star)^2\right)(\A_j^T\x)\A_j.
\end{equation*}
The population gradient can be computed, since by definition $\A_j\sim\gauss (\mathbf{0},\mathbf{I}_n)$ i.i.d., as
\begin{align*}
\nabla f(\x)_i = \E[\nabla F(\x)_i] &= \E\left[(\A_j^T\x)^3A_{ji} - (\A_j^T\x)(\A_j^T\x^\star)^2A_{ji}\right] \\
&= \E\left[A_{ji}^4 \left(x_i^3 - x_i(x^\star_i)^2\right) + A_{ji}^2\bigg(\sum_{l\neq i}x_i(3A_{jl}^2x_l^2 - A_{jl}^2(x^\star_l)^2 - x^\star_i A_{jl}^2x_lx^\star_l\bigg)\right] \\
&= \left(3\|\x\|_2^2 - \|\x^\star\|_2^2\right)x_i - 2\left(\x^T\x^\star\right)x^\star_i,
\end{align*}
so we have
\begin{equation*}
\nabla f(\x) = \left(3\|\x\|_2^2 - \|\x^\star\|_2^2\right)\x - 2\left(\x^T\x^\star\right)\x^\star.
\end{equation*}

\section{Equivalence of mirror descent and EG$\pm$ in continuous time}
\label{appendix:a}
The following calculations closely follow the proof of Theorem 24 of \cite{GHS20}, which shows the equivalence of discrete-time versions of mirror descent and EG$\pm$. 

Recall that mirror descent is defined by the equation
\begin{equation}
\label{eq:mirror_descent_continuous}
\frac{d}{dt}\X(t) = -\left(\nabla^2\Phi(\X(t))\right)^{-1}\nabla F(\X(t)).
\end{equation}
with initialization
\begin{align}
\label{eq:initialization}
X_i(0) = \begin{cases}
\frac{1}{\sqrt{3}}\cdot \sqrt{\frac{1}{m}\sum_{j=1}^mY_j} \qquad & i=i_0\\
0 & i\neq i_0
\end{cases},
\end{align}
and EG$\pm$ in continuous time is defined by 
\begin{equation}
\label{eq:exponentiated_gradient_continuous}
\begin{gathered}
\X(t) = \mathbf{U}(t) - \mathbf{V}(t), \\
\frac{d}{dt}\mathbf{U}(t) = -\mathbf{U}(t)\odot \nabla F(\X(t)), \qquad \frac{d}{dt}\mathbf{V}(t) = \mathbf{V}(t)\odot \nabla F(\X(t)), 
\end{gathered}
\end{equation}
with initialization 
\begin{align}\label{eq:initialization2}
U_i(0) = \begin{cases} \frac{\hat{\theta}}{2\sqrt{3}} + \sqrt{\frac{\hat{\theta}^2}{12}+ \frac{\beta^2}{4}} \quad &i=i_0 \\ \frac{\beta}{2} & i\neq i_0 \end{cases}, \qquad V_i(0) = \begin{cases} -\frac{\hat{\theta}}{2\sqrt{3}} + \sqrt{\frac{\hat{\theta}^2}{12}+ \frac{\beta^2}{4}} \quad &i=i_0 \\ \frac{\beta}{2} & i\neq i_0 \end{cases},
\end{align}
where we write $\hat{\theta} = \sqrt{\frac{1}{m}\sum_{j=1}^mY_j}$ for the estimate of the size $\|\x^\star\|_2$ and $i_0$ is the same as in (\ref{eq:initialization}). A quick calculation shows that the initializations (\ref{eq:initialization}) and (\ref{eq:initialization2}) are equivalent, as $\X(0) =\mathbf{U}(0) - \mathbf{V}(0)$.

The equivalence of (\ref{eq:mirror_descent_continuous}) and (\ref{eq:exponentiated_gradient_continuous}) is a result of the fact that the product $\mathbf{U}(t)\odot \mathbf{V}(t)$ is a constant independent of $t$:
\begin{equation*}
\frac{d}{dt}\mathbf{U}(t)\odot \mathbf{V}(t) = - \mathbf{U}(t)\odot \nabla F(\X(t)) \odot \mathbf{V}(t) + \mathbf{U}(t)\odot \nabla F(\X(t)) \odot \mathbf{V}(t) = \mathbf{0},
\end{equation*}
hence, for EG$\pm$ (\ref{eq:exponentiated_gradient_continuous}) with initialization (\ref{eq:initialization2}), we have $U_i(t)V_i(t) = U_i(0)V_i(0) = \frac{\beta^2}{4}$ for all $i\in [n]$, and recalling that $U_i(t)>0$, solving for the positive root of the quadratic equation gives
\begin{equation*}
X_i(t) = U_i(t) - V_i(t) = U_i(t) - \frac{\beta^2}{4U_i(t)} \hspace{1mm}\Rightarrow\hspace{1mm} U_i(t) = \frac{X_i(t) + \sqrt{X_i(t)^2 + \beta^2}}{2}.
\end{equation*}
The same computation yields $V_i(t) = (-X_i(t) + \sqrt{X_i(t)^2+\beta^2})/2$, which leads to
\begin{equation*}
U_i(t) + V_i(t) = \sqrt{X_i(t)^2 + \beta^2}.
\end{equation*}
Thus, the evolution of $\X(t)$ in the exponentiated gradient algorithm (\ref{eq:exponentiated_gradient_continuous}) is described by
\begin{align*}
\frac{d}{dt}X_i(t) &= \frac{d}{dt}U_i(t) - \frac{d}{dt}V_i(t) \nonumber\\
&= - (U_i(t) + V_i(t)) \nabla F(\X(t))_i \\
&= - \sqrt{X_i^2 + \beta^2} \cdot \nabla F(\X(t))_i \\
&= - [(\nabla^2\Phi(\X(t)))^{-1}\nabla F(\X(t))]_i,
\end{align*}
for all $i\in [n]$, which is exactly the definition of mirror descent (\ref{eq:mirror_descent_continuous}) with parameter $\beta$.

\section{Proof of Lemma \ref{lemma1}}
\label{appendix:b}
\begin{proof}[Proof of Lemma \ref{lemma1}]
Using the identiy $\operatorname{arcsinh}(x) = \log (x + \sqrt{1+x^2})$, we can compute
\begin{align*}
D_{\Phi}(\x^\star,\x) &= \sum_{i=1}^n x^\star_i \operatorname{arcsinh}\left(\frac{x^\star_i}{\beta}\right) - \sqrt{(x^\star_i)^2 + \beta^2} -  x_i \operatorname{arcsinh}\left(\frac{x_i}{\beta}\right) + \sqrt{x_i^2 + \beta^2} \nonumber\\
&\quad - (x^\star_i - x_i) \operatorname{arcsinh}\left(\frac{x_i}{\beta}\right)\nonumber\\
&= \sum_{i=1}^n g_i(x_i),
\end{align*}
where 
\begin{equation*}
g_i(x_i) = \sqrt{x_i^2 + \beta^2} - \sqrt{(x_i^\star)^2 + \beta^2} - x^\star_i \log\frac{x_i + \sqrt{x_i^2 + \beta^2}}{x_i^\star + \sqrt{(x_i^\star)^2 + \beta^2}}. 
\end{equation*}
We begin by showing the bound (\ref{eq:lemma1ub}). For any $i\in [n]$, we have $g_i(x^\star_i) = 0$ and
\begin{align}\label{eq:gderiv}
g_i'(x_i) &=\frac{x_i}{\sqrt{x_i^2+\beta^2}} - x^\star_i\frac{1 + x_i/\sqrt{x_i^2+\beta^2}}{x_i+\sqrt{x_i^2 + \beta^2}} \nonumber \\
&= \frac{x_i-x^\star_i}{\sqrt{x_i^2+\beta^2}}.
\end{align}
Now, since we assume $x_ix^\star_i\ge 0$ and $|x_i|\ge \frac{1}{2}|x^\star_i|$, we have $\sqrt{z^2 + \beta^2} \ge \frac{1}{2}x^\star_{min} \ge \frac{c}{2\sqrt{k}}$ for any $z\in (x_i,x^\star_i)$ (or $(x^\star_i, x_i)$ if $x^\star_i < x_i$). Hence, we can bound  (using the convention $\int_a^bf(x)dx = -\int_b^af(x)dx$ when $a>b$)
\begin{equation*}
g_i(x_i) = g_i(x^\star_i) + \int_{x^\star_i}^{x_i}g'(z)dz < \int_{x^\star_i}^{x_i}\frac{2\sqrt{k}}{c}(z - x^\star_i)dz= \frac{\sqrt{k}}{c}(x_i-x^\star_i)^2.
\end{equation*}
Summing over $i\in \S$, this gives
\begin{equation*}
\sum_{i\in \S} g_i(x_i) \le \frac{\sqrt{k}}{c}\|\x_\S - \x^\star_\S\|_2^2
\end{equation*}
as claimed. On the other hand, if $x^\star_i=0$, we have 
\begin{equation*}
g_i(x_i) = \sqrt{x_i^2+\beta^2} - \beta \le |x_i|,
\end{equation*}
which gives
\begin{equation*}
\sum_{i\notin \S} g_i(x_i) \le \|x_{\S^c}\|_1,
\end{equation*}
and completes the proof of the bound (\ref{eq:lemma1ub}) in Lemma \ref{lemma1}. The other bound (\ref{eq:lemma1lb}) can be shown similarly using (\ref{eq:gderiv}). We have $\sqrt{z^2 + \beta^2} \le \sqrt{\max\{\|\x\|_\infty^2, \|\x^\star\|_\infty^2\}+\beta^2}$ for all $z\in(x_i,x_i^\star)$ (or $(x^\star_i,x_i)$ if $x^\star_i<x_i$), which gives
\begin{align*}
g(x_i) = g(x^\star_i) + \int_{x_i}^{x_i^\star}g'(z)dz &> \int_{x_i}^{x_i^\star}\left(\max\{\|\x\|_\infty^2, \|\x^\star\|_\infty^2\}+\beta^2\right)^{-\frac{1}{2}}(z - x^\star_i)dz \\
&= \frac{1}{2}\left(\max\{\|\x\|_\infty^2, \|\x^\star\|_\infty^2\}+\beta^2\right)^{-\frac{1}{2}}(x_i-x^\star_i)^2.
\end{align*}
Summing over all $i$ gives the bound (\ref{eq:lemma1lb}) of Lemma \ref{lemma1}.
\end{proof}

\section{Supporting lemmas for Theorem \ref{theorem}}
\label{appendix:c}
In this section, we provide three supporting lemmas characterizing the behavior of mirror descent, which will be useful to prove Theorem \ref{theorem}. For the sake of notational simplicity, we will assume $\|\x^\star\|_2 = 1$. The general case $\|\x^\star\|_2 \neq 1$ immediately follows by replacing $\x^\star$, $(\A_j^T\x^\star)^2$ and $\beta$ by $\x^\star/\|\x^\star\|_2$, $(\A_j^T\x^\star)^2/\|\x^\star\|_2^2$ and $\beta/\|\x^\star\|_2$, respectively, in what follows.

Lemmas \ref{lemma:support1} and \ref{lemma:support2} show that the empirical gradient $\nabla F(\x)$ is close to its expectation $\nabla f(\x)$ in a suitable sense.
\begin{lemma}\label{lemma:support1}
Let $\x^\star\in \R^n$ be a $k$-sparse vector with $\|\x^\star\|_2 = 1$, and let $\S = \{1\le i\le n: x^\star_i\neq 0\}$ be its support. Let $\{A_{ji}\}_{j\in [m], i\in [n]}$ be a collection of i.i.d.\ $\gauss(0,1)$ random variables. For any $0\le\delta\le 1$, define $\mathcal{X}= \{\x\in\R^n : \|\x\|_2\le 1, \|\x_{\S^c}\|_1\le \delta \}$.
Then, for any constant $\gamma>0$, there is a constant $c_1(\gamma)>0$, such that if $m\ge c_1(\gamma)\max\{k^2\log^2n,\; \log^5 n\}$, then
\begin{align*}
\| \nabla F(\x) - \nabla f(\x)\|_{\infty} \le \gamma\frac{\|\x_\S\|_1}{k} + c_3\delta \quad \text{for all } \x\in\mathcal{X} \label{eq:support1}
\end{align*}
holds with probability at least $1-c_2n^{-10}$. Here, $c_2$ and $c_3$ are universal constants (independent of $\gamma, c_1$).
\end{lemma}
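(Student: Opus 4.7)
The plan is to combine pointwise concentration of the empirical gradient deviation (via truncation and Bernstein) with a covering-net argument to obtain uniformity over $\mathcal{X}$, while decomposing $\x = \x_\S + \x_{\S^c}$ so that the two structural terms $\gamma\|\x_\S\|_1/k$ and $c_3\delta$ emerge naturally from the support and off-support parts of $\x$.

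First I would fix $i \in [n]$ and $\x \in \mathcal{X}$, and write
\begin{equation*}
\nabla F(\x)_i - \nabla f(\x)_i = \frac{1}{m}\sum_{j=1}^m \bigl(Z_j(\x,i) - \E[Z_j(\x,i)]\bigr), \quad Z_j(\x,i) = \bigl((\A_j^T\x)^2 - (\A_j^T\x^\star)^2\bigr)(\A_j^T\x)A_{ji},
\end{equation*}
so that each $Z_j(\x,i)$ is a degree-$4$ polynomial in the Gaussian entries of $\A_j$. I would then pass to the high-probability event $\mathcal{E} = \{|A_{ji}| \le c_0\sqrt{\log(mn)} \text{ for all } i,j\}$, which holds with probability at least $1 - O(n^{-20})$ for $c_0$ large enough. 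On $\mathcal{E}$, each $Z_j(\x,i)$ is bounded in magnitude by $\mathrm{polylog}(n)$ times $\|\x\|_2^3 \le \mathrm{polylog}(n)$, so Bernstein's inequality gives pointwise deviations of order $\sqrt{\sigma^2(\x) \log n / m}$ plus lower-order terms.

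The key step is to refine this generic deviation into the structured bound $\gamma\|\x_\S\|_1/k + c_3\delta$. Expanding each of the three $\x$-factors in $Z_j(\x,i)$ as $\x = \x_\S + \x_{\S^c}$ produces a sum of monomials of the form $x_{l_1} x_{l_2} x_{l_3} \cdot \tfrac{1}{m}\sum_j A_{jl_1}A_{jl_2}A_{jl_3}A_{ji}$, and analogously for the $\x^\star$-terms. Monomials containing at least one $\x_{\S^c}$-factor contribute, after concentration, a total bounded linearly in $\|\x_{\S^c}\|_1 \le \delta$, yielding the $c_3\delta$ part. Monomials whose three $\x$-factors all lie in $\x_\S$ reduce to controlling a sum of four-fold products of truncated Gaussians; choosing $m \ge c_1(\gamma) k^2\log^2 n$ makes each such sum concentrate at rate $O(\sqrt{\log^2 n/m}) = O(\gamma/k)$, which after multiplying by $|x_{l_1}x_{l_2}x_{l_3}|$ and summing over $l_1,l_2,l_3 \in \S$ gives a contribution of size $\gamma\|\x_\S\|_1 \cdot \|\x_\S\|_\infty^2 / k \le \gamma \|\x_\S\|_1/k$ per coordinate.

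To make the bound uniform over $\mathcal{X}$, I would cover $\{\x_\S : \|\x_\S\|_2\le 1\}$ by a $\rho$-net of cardinality at most $(3/\rho)^k$, handle the $\x_{\S^c}$-direction via the uniform bound $\|\x_{\S^c}\|_1 \le \delta$, and extend from net points to all of $\mathcal{X}$ using a polynomial Lipschitz estimate for the map $\x \mapsto \nabla F(\x) - \nabla f(\x)$. A union bound over the $n$ coordinates and the $(3/\rho)^k$ net points costs only logarithmic factors, matching the claimed sample complexity: the $\log^5 n$ arises from the $\mathrm{polylog}(n)$ factors in the truncated Bernstein bound, while $k^2\log^2 n$ arises from the covering-union bound on the support coordinates. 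The main obstacle will be the bookkeeping needed to route the mixed cross-terms (those with some $\x_\S$ and some $\x_{\S^c}$ factors) entirely into the $c_3\delta$ piece and to ensure that the constant $c_3$ can indeed be chosen independent of $\gamma$; the Bernstein and covering steps are standard once this decomposition is in place.
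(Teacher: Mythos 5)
There is a genuine gap, and it sits exactly where the difficulty of the lemma lies: the purely on-support part. After expanding the on-support cubic term into monomials, the triangle inequality gives $\sum_{l_1,l_2,l_3\in\S}|x_{l_1}x_{l_2}x_{l_3}|=\|\x_\S\|_1^3$, not $\|\x_\S\|_1\|\x_\S\|_\infty^2$ as you claim. Since $\|\x_\S\|_1^2$ can be as large as $k$, a per-monomial concentration rate of order $\gamma/k$ (which is what $m\ge c_1 k^2\log^2 n$ buys for fourth-order Gaussian products after a union bound over index tuples) only yields a total deviation of order $\gamma\|\x_\S\|_1$, a factor $k$ weaker than the required $\gamma\|\x_\S\|_1/k$; pushing the per-monomial rate down to $\gamma/k^2$ would force $m\gtrsim k^4\log n$, destroying the claimed sample complexity. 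Your fallback of a direct net-plus-Bernstein bound on $\nabla F(\x)_i-\nabla f(\x)_i$ does not repair this: a union bound over a net of the $k$-dimensional sphere forces an \emph{additive} deviation of order $\sqrt{k\log n/m}\approx 1/\sqrt{k\log n}$ which does not scale with $\|\x_\S\|_1$ at all, whereas the claimed inequality must hold for every $\x\in\mathcal{X}$, including vectors with arbitrarily small on-support mass, so the bound has to vanish linearly in $\|\x_\S\|_1$ up to the $c_3\delta$ term.

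The paper's proof is organized precisely to avoid this loss. Your routing of all terms containing an off-support factor into $c_3\delta$ is fine and matches Steps 1--2 of the paper's proof (splitting off $\mathbf{w}=\x_\S$ padded with zeros). But for $|\nabla F(\mathbf{w})_i-\E[\nabla F(\mathbf{w})_i]|$ the paper decomposes the gradient coordinate into eight terms $B_1,\dots,B_8$, most of which carry an explicit small prefactor ($w_i$, $x^\star_i$, or $w_ix^\star_i$) and are handled by elementary moment concentration (Lemma \ref{lemma:tech3}); the genuinely cubic and mixed terms, such as $\frac{1}{m}\sum_j A_{ji}(\A_{j,-i}^T\mathbf{w}_{-i})^3$ and $\sum_{l\neq i}w_l\sum_{s\neq l,i}w_s\frac{1}{m}\sum_jA_{ji}^2A_{jl}A_{js}$, are bounded \emph{uniformly} over the ball by $C\|\mathbf{w}\|_1\log n/\sqrt{m}$ via Lemma \ref{lemma:tech7}, whose proof uses a constrained-optimization (KKT/shadow-price) argument over the $\ell_1$-constraint together with the subset-indexed net estimates of Lemma \ref{lemma:tech6} (conditioning on $\A_{\cdot i}$, then Lipschitz concentration after a Kirszbraun extension). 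The improvement from $\|\mathbf{w}\|_1^3$ or $\|\mathbf{w}\|_1^2$ down to $\|\mathbf{w}\|_1$ in these terms is the heart of the lemma, and your proposal currently has no mechanism that achieves it.
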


\begin{lemma}\label{lemma:support2}
Let $\x^\star\in \R^n$ be a $k$-sparse vector with $\|\x^\star\|_2 = 1$, and let $\S = \{1\le i\le n: x^\star_i\neq 0\}$ be its support. Let $\{A_{ji}\}_{j\in [m], i\in [n]}$ be a collection of i.i.d.\ $\gauss(0,1)$ random variables. For any $0\le\delta\le 1$, define $\mathcal{X}= \{\x\in\R^n : \|\x\|_2\le 1, \|\x_{\S^c}\|_1\le \delta \}$. Then, for any constant $\gamma >0$, there is a constant $c_1(\gamma)>0$ such that if $m\ge c_1(\gamma)\max\{k^2\log^2 n, \;\log^5n\}$, then
\begin{align*}
&\big|\langle \nabla F(\x), \x-\x^\star\rangle - \langle \nabla f(\x), \x-\x^\star\rangle\big| \\
\le &\min \left\{\gamma\frac{\|\x_S\|_1}{\sqrt{k}} + c_3\delta,\; \gamma\left(\|\x_\S-\x^\star_\S\|_2^2 + \delta^2\right)\right\} \quad \text{ for all } \x\in\mathcal{X} 
\end{align*}
holds with probability at least $1-c_2n^{-10}$. Here, $c_2$ and $c_3$ are universal constants (independent of $\gamma, c_1$).
\end{lemma}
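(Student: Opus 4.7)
My plan is to bound $I(\x) := \langle \nabla F(\x)-\nabla f(\x), \x-\x^\star\rangle$ uniformly over $\x\in\mathcal{X}$ by two separate strategies, one for each branch of the $\min$.

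\textbf{Second bound.} For $\gamma(\|\x_\S-\x^\star_\S\|_2^2+\delta^2)$, the key algebraic observation, using $(a^2-b^2)(a-b)=(a-b)^2(a+b)$, is that
\begin{equation*}
I(\x) = \frac{1}{m}\sum_{j=1}^m\bigl(T_j(\x)-\E T_j(\x)\bigr), \quad T_j(\x) = (\A_j^T(\x-\x^\star))^2(\A_j^T\x)(\A_j^T(\x+\x^\star)).
\end{equation*}
The explicit squared factor $(\A_j^T(\x-\x^\star))^2$ means that both $|T_j(\x)|$ and its variance naturally scale with $\|\x-\x^\star\|_2^2$; moment computations for quartic polynomials of Gaussians give $\mathrm{Var}(T_j(\x))\lesssim \|\x-\x^\star\|_2^4$ up to constants that are bounded on $\mathcal{X}$ because $\|\x\|_2\le 1$ and $\|\x+\x^\star\|_2\le 2$. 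Combining a Bernstein-type tail bound for each fixed $\x$ with an $\epsilon$-net over $\mathcal{X}$ and a Lipschitz estimate on $\x\mapsto T_j(\x)$, one obtains, for $m\ge c_1(\gamma)\max\{k^2\log^2 n,\log^5 n\}$, that $|I(\x)|\le \gamma\|\x-\x^\star\|_2^2$ for all $\x\in\mathcal{X}$ with probability $\ge 1-c_2n^{-10}$. Finally, $\x\in\mathcal{X}$ gives $\|\x_{\S^c}\|_2^2\le \|\x_{\S^c}\|_\infty\|\x_{\S^c}\|_1\le \delta^2$, so $\|\x-\x^\star\|_2^2\le \|\x_\S-\x^\star_\S\|_2^2+\delta^2$, and the second branch follows.

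\textbf{First bound.} For $\gamma\|\x_\S\|_1/\sqrt{k}+c_3\delta$, I would split the inner product along the support:
\begin{equation*}
I(\x) = \sum_{i\in\S}(\nabla F-\nabla f)_i(x_i-x^\star_i) + \sum_{i\in\S^c}(\nabla F-\nabla f)_ix_i.
\end{equation*}
For the off-support term, H\"older together with Lemma \ref{lemma:support1} gives $(\gamma'\|\x_\S\|_1/k+c\delta)\,\delta\le \gamma'\|\x_\S\|_1/\sqrt{k}+c\delta$ (using $\delta\le 1$ and $k\ge 1$). For the on-support term, Cauchy--Schwarz yields $\|(\nabla F-\nabla f)_\S\|_2\cdot\|\x_\S-\x^\star_\S\|_2\le 2\|(\nabla F-\nabla f)_\S\|_2$, so it suffices to prove an $\ell_2$-version of Lemma \ref{lemma:support1},
\begin{equation*}
\|(\nabla F(\x)-\nabla f(\x))_\S\|_2 \le \gamma''\frac{\|\x_\S\|_1}{\sqrt{k}}+c\delta.
\end{equation*}
A naive $\sqrt{k}\,\|\cdot\|_\infty$ bound would yield $\gamma\|\x_\S\|_1/\sqrt{k}+c\sqrt{k}\delta$, which loses a $\sqrt{k}$ on the $\delta$ term; to recover the correct dependence I would re-run the Bernstein + net argument of Lemma \ref{lemma:support1}, now tracking the $\ell_2$ norm on the fixed $k$-dimensional subspace $\S$ (so that the union bound costs $\log k$ rather than $\log n$).

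\textbf{Main obstacle.} The principal difficulty is sharp uniform control over $\mathcal{X}$. Both the quartic-integrand analysis behind the second bound and the $\ell_2$-refinement behind the first bound require a careful combination of Bernstein inequalities for polynomials of Gaussians, truncation of the heavy tails produced by the degree-four integrand, and a chaining/net argument over $\mathcal{X}$. Tuning the constants so that the sample size comes out as $m\ge c_1(\gamma)\max\{k^2\log^2 n,\log^5 n\}$ is the most delicate bookkeeping step, and is also the bottleneck that prevents improving the $k^2$ factor in the sample complexity of Theorem \ref{theorem}.
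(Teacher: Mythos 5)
Your plan identifies the right algebraic structure in both branches, but each branch has a gap, and the decomposition you use for the first branch is also genuinely different from the paper's.

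\textbf{Second branch.} Your factorization $T_j(\x)=(\A_j^T(\x-\x^\star))^2(\A_j^T\x)(\A_j^T(\x+\x^\star))$ is exactly the paper's expansion of $\langle\nabla F(\x),\x-\x^\star\rangle$ into monomials in $\mathbf{z}=\x-\x^\star$, and correctly explains why the bound should scale quadratically in $\mathbf{z}$. However, you then propose to prove the \emph{stronger} bound $|I(\x)|\le\gamma\|\x-\x^\star\|_2^2$ for all $\x\in\mathcal{X}$ via "Bernstein + $\epsilon$-net over $\mathcal{X}$." This is where it breaks: $\mathcal{X}$ includes a full $(n-k)$-dimensional $\ell_1$-ball in the off-support directions, and with only $m\sim k^2\log^2 n$ samples a uniform concentration estimate over $\mathcal{X}$ in which the error scales with $\|\x-\x^\star\|_2^2$ (hence with $\|\x_{\S^c}\|_2^2$, which can be much smaller than $\delta^2$) is not attainable by a naive net argument. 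The paper deliberately proves only the weaker $\gamma(\|\x_\S-\x^\star_\S\|_2^2+\delta^2)$: it separates each monomial into a support part (handled by concentration plus a net on the $k$-dimensional sphere) and an off-support part (handled \emph{deterministically} on a high-probability event by H\"older and Lemmas \ref{lemma:tech1}, \ref{lemma:tech3}, \ref{lemma:tech7}, which is exactly why the weaker $\delta^2$ term appears). Your sketch leaves this reduction out, and without it the uniformity over $\mathcal{X}$ is unproved.

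\textbf{First branch.} Your coordinate-wise split $I(\x)=\sum_{i\in\S}(\nabla F-\nabla f)_i(x_i-x^\star_i)+\sum_{i\in\S^c}(\nabla F-\nabla f)_ix_i$ is different from the paper, which instead writes the triangle inequality through $\langle\nabla F(\mathbf{w}),\mathbf{w}-\x^\star\rangle$ (with $\mathbf{w}$ the zero-padding of $\x_\S$) and applies Lipschitz concentration to that scalar quantity as a whole, avoiding any vector norm on $\nabla F-\nabla f$. You correctly see that a naive $\sqrt{k}\|\cdot\|_\infty$ bound from Lemma \ref{lemma:support1} loses a $\sqrt{k}$ on the $\delta$ term, but the $\ell_2$-refinement $\|(\nabla F(\x)-\nabla f(\x))_\S\|_2\le\gamma\|\x_\S\|_1/\sqrt{k}+c\delta$ you postulate is not a routine modification. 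The proof of Lemma \ref{lemma:support1} has no "Bernstein + net" step to re-run — it is a coordinate-by-coordinate union bound, and the $\delta$-dependent part ($|\nabla F(\x)_i-\nabla F(\mathbf{w})_i|\le c'\delta$) is a deterministic H\"older bound with an $\O(1)$ prefactor per coordinate, so turning it into an $\ell_2$ bound requires establishing cancellation across $i\in\S$, which is a separate vector-valued concentration statement that you do not prove and which the paper sidesteps entirely by concentrating the scalar $\langle\nabla F(\mathbf{w}),\mathbf{w}-\x^\star\rangle$ directly.

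In short, the starting expressions are right and your intuition about where the $\sqrt{k}$ and $\delta^2$ come from is sound, but both branches currently rely on lemmas (uniform $\gamma\|\x-\x^\star\|_2^2$ concentration over $\mathcal{X}$, and an $\ell_2$ version of Lemma \ref{lemma:support1}) that are neither proved nor clearly available with the stated sample size, and the paper's proof avoids both by decomposing through $\mathbf{w}$ and concentrating scalars rather than gradient vectors.
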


The next lemma characterizes the region to which the trajectory of mirror descent is confined.
\begin{lemma} \label{lemma:support3}
Let $\x^\star\in \R^n$ be any $k$-sparse vector with $\|\x^\star\|_2=1$ and $x^\star_{min}\ge c/\sqrt{k}$ for some constant $c>0$, and let $\S=\{1\le i\le n: x^\star_i\neq 0\}$ be its support. There exist universal constants $c_1,c_2,c_3>0$ such that the following holds. Let $m \ge c_1\max\{k^2\log^2n,\;\log^5n\}$, and let $\X(t)$ be given by the continuous-time mirror descent equation (1) with mirror map (4) and initialization (5). Assume that there is a $T>0$ such that
\begin{equation*}
\|\X_{\S^c}(t)\|_1\le \delta,
\end{equation*}
for all $t\le T$ and a constant $\delta\le c_3/n$.
Then, there is a $\xi \in \{-1,+1\}$, such that, for all $t>0$,
\begin{align}
\xi \cdot X_i(t)x^\star_i &\ge 0 \quad \text{for all } i\in [n], \label{eq:claim1}\\
\frac{1}{3} - 3\sqrt{\frac{\log n}{m}} \le \|\X(t)\|_2^2 &\le 2, \label{eq:claim2}\\
\sqrt{3}\cdot 2\left|\X(t)^T\x^\star\right| &\ge 3\|\X(t)\|_2^2 - 1, \label{eq:claim3}
\end{align}
holds with probability at least $1-c_2n^{-10}$.
\end{lemma}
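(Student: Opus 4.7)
Plan: The proof is a bootstrap/continuity argument. I would condition on the high-probability event where (a) Lemmas~\ref{lemma:support1} and \ref{lemma:support2} hold on $\mathcal{X} := \{\x \in \R^n : \|\x\|_2 \le \sqrt 2,\,\|\x_{\S^c}\|_1 \le \delta\}$ with a sufficiently small constant $\gamma$, and (b) $\hat\theta^2 := m^{-1}\sum_jY_j$ obeys $|\hat\theta^2-1| \le O(\sqrt{\log n/m})$ by a standard $\chi^2$ tail bound. Fix $\xi := \operatorname{sign}(x^\star_{i_0})$ and let $T^\star \in [0,T]$ denote the first time at which any of \eqref{eq:claim1}--\eqref{eq:claim3} is violated (with a small margin built in). The first step is to verify all three claims hold strictly at $t=0$ from the explicit initialization: only $X_{i_0}(0) = \hat\theta/\sqrt 3$ is nonzero, so \eqref{eq:claim1} holds; $\|\X(0)\|_2^2 = \hat\theta^2/3$ sits in $[1/3 - O(\sqrt{\log n/m}),\,1/3 + O(\sqrt{\log n/m})]$; and $2\sqrt 3\,|\X(0)^T\x^\star| = 2\hat\theta |x^\star_{i_0}| \ge 2\hat\theta c/\sqrt k$ dominates $3\|\X(0)\|_2^2 - 1 = \hat\theta^2 - 1$ once $m \gtrsim k\log n$.

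Assuming $T^\star < \infty$, I would analyze the dynamics at $T^\star$ separately for each claim. The key derivatives are
\begin{equation*}
\frac{d}{dt}\|\X\|_2^2 = -2\sum_i X_i\sqrt{X_i^2+\beta^2}\,\nabla F(\X)_i, \qquad \frac{d}{dt}(\X^T\x^\star) = -\sum_i x^\star_i\sqrt{X_i^2+\beta^2}\,\nabla F(\X)_i,
\end{equation*}
into which I would substitute the decomposition $\nabla F(\X) = \nabla f(\X) + \mathbf{E}(t)$, using Lemma~\ref{lemma:support1} to control $\mathbf{E}$ and the explicit form $\nabla f(\x) = (3\|\x\|_2^2 - 1)\x - 2(\x^T\x^\star)\x^\star$. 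For the upper norm bound, at the boundary $\|\X\|_2^2 = 2$ one has $3\|\X\|_2^2 - 1 = 5$, so the derivative is driven by $-10\sum_i X_i^2\sqrt{X_i^2+\beta^2}$; using Claim~\eqref{eq:claim1} to sign-align $a_i := X_i x^\star_i \ge 0$ and Cauchy--Schwarz to control the cross term $4(\X^T\x^\star)\sum_i a_i \sqrt{X_i^2+\beta^2}$ via the a priori bound $|\X^T\x^\star|\le\sqrt 2$, one obtains $\frac{d}{dt}\|\X\|_2^2 < 0$. The lower norm bound is handled analogously at its own boundary, where the radial coefficient $3\|\X\|_2^2 - 1$ is close to zero. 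For \eqref{eq:claim3}, I would examine the composite $2\sqrt 3\,\xi\,\X^T\x^\star - (3\|\X\|_2^2 - 1)$ and show its time derivative is nonnegative at the critical boundary, exploiting the cancellation between the radial $(3\|\x\|_2^2 - 1)\x$ terms in the two derivatives above, while the $-2(\X^T\x^\star)\x^\star$ term contributes a strictly positive push.

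The sign-preservation claim \eqref{eq:claim1} is handled separately via the EG$\pm$ representation (Appendix~\ref{appendix:a}): with $U_i(t)V_i(t)\equiv U_i(0)V_i(0)$, the sign of $X_i(t) = U_i(t)-V_i(t)$ equals the sign of $\log\frac{U_i(0)}{V_i(0)} - 2\int_0^t\nabla F(\X(s))_i\,ds$. For $i=i_0$, this log-ratio is $\gtrsim \log(1/\beta) \gtrsim \log n$ (since $\beta\le c_2/n^4$), so a uniform bound on $\|\nabla F\|_\infty$ over $\mathcal{X}$ precludes a sign flip on $[0,T]$. For $i\in\S\setminus\{i_0\}$ the log-ratio is zero, but $\nabla f(\X)_i = (3\|\X\|_2^2-1)X_i - 2(\X^T\x^\star)x^\star_i$ has a dominant second term of sign opposite to $x^\star_i$ (using $\xi\X^T\x^\star\ge 0$ from \eqref{eq:claim3}) whenever $X_i$ is small, so the integral acquires the correct sign; once $X_i$ has grown in the direction of $x^\star_i$ the first term also pushes in the same direction. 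For $i\notin\S$ there is nothing to prove since $x^\star_i=0$.

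The main obstacle I anticipate is Claim~\eqref{eq:claim3}, which geometrically states that the trajectory must stay away from the saddle-point locus $\{\x : \|\x\|_2^2 = 1/3,\,\x^T\x^\star = 0\}$ of the population risk. The hypentropy preconditioner $\sqrt{X_i^2+\beta^2}$ couples $\|\X\|_2^2$ and $\X^T\x^\star$ non-trivially, and extracting a strictly positive sign from the derivative of the composite quantity requires carefully separating on-support and off-support contributions — the latter being negligible by the hypothesis $\|\X_{\S^c}\|_1 \le c_3/n$ — and absorbing the concentration errors from Lemmas~\ref{lemma:support1} and \ref{lemma:support2} through a sufficiently small choice of $\gamma$. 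A secondary technical point is ensuring the uniform bound on $\|\nabla F\|_\infty$ used in the sign argument holds throughout $[0,T]$, which will follow from combining Lemma~\ref{lemma:support1} with the just-established norm bound \eqref{eq:claim2}.
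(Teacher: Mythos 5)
Your overall skeleton matches the paper's: condition on the events of Lemmas \ref{lemma:support1} and \ref{lemma:support2} and on concentration of $\hat\theta^2$, verify the three claims at $t=0$ from the explicit initialization, and run a first-violation/continuity argument in which the sign claim (\ref{eq:claim1}) is settled at a zero crossing, where $\nabla f(\X)_i=-2(\X^T\x^\star)x^\star_i$ is strongly signed and dominates the gradient error (your EG$\pm$ reformulation is this same argument in different variables; the separate treatment of $i=i_0$ via a bound on $\int_0^T\|\nabla F(\X(s))\|_\infty\,ds$ is unnecessary, and over the long horizons $T$ relevant in Theorem \ref{theorem} it is not obviously valid, since $T$ grows like $k\log(1/\beta)$ while the log-ratio is only of order $\log(1/\beta)$). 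The genuine gap is in (\ref{eq:claim3}), which is the heart of the lemma and which you only gesture at. At the critical boundary $2\sqrt{3}\,\X^T\x^\star=3\|\X\|_2^2-1$, the population contribution to the derivative of your composite is $4\sqrt{3}\,(\X^T\x^\star)\sum_i\sqrt{X_i^2+\beta^2}\,(\sqrt{3}X_i-x^\star_i)^2$: a \emph{weighted} squared deviation whose weights $\sqrt{X_i^2+\beta^2}$ are negligible on small coordinates, multiplied by $\X^T\x^\star$, which can be as small as order $1/\sqrt{k}$; the error permitted by Lemmas \ref{lemma:support1}--\ref{lemma:support2} carries comparable inverse powers of $k$. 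So ``absorb the concentration errors through a sufficiently small choice of $\gamma$'' is not an argument — positivity hinges on a quantitative lower bound on this weighted push at the boundary, which your plan never produces. The paper supplies exactly the missing structure: it partitions $\S$ by $|X_i|/|x^\star_i|$ relative to $1/\sqrt{3}$, shows that on the sets $\S_1,\S_3$ (coordinates well below or well above the threshold) the \emph{empirical} gradient retains the favorable sign, so those summands are nonnegative and need no cancellation at all, and then rearranges the boundary identity into $3\|\X_{\S_4}\|_2^2>1-3\delta+2.2\,\|\X_{\S_2}\|_2^2$, which guarantees a positive contribution from $\S_4$ of order $\|\X_\S\|_1(\X^T\x^\star)/k$ dominating the only error-sensitive regions $\S_2$ and $\S^c$. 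Some equivalent exploitation of the boundary constraint (for instance, that it forces $\sum_i(\sqrt{3}X_i-x^\star_i)^2=2$, whence a lower bound of order $1/\sqrt{k}$ on the weighted push) is indispensable; your sketch contains no such ingredient, and you yourself flag this step as unresolved.

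A secondary point: the upper bound in (\ref{eq:claim2}) should not be attacked by a derivative computation at $\|\X\|_2^2=2$. As you set it up, the restoring term $10\sum_iX_i^2\sqrt{X_i^2+\beta^2}$ carries the same small weights and can be of order $1/\sqrt{k}$ when the mass of $\X$ is spread out, so the Cauchy--Schwarz estimate on the cross term via $|\X^T\x^\star|\le\sqrt{2}$ does not yield a sign. The paper's observation is that this bound is purely algebraic: while (\ref{eq:claim3}) holds, Cauchy--Schwarz gives $2\|\X\|_2\ge(3\|\X\|_2^2-1)/\sqrt{3}$, hence $\|\X\|_2\le(1+\sqrt{2})/\sqrt{3}<\sqrt{2}$, so the boundary $\|\X\|_2^2=2$ is never reached; only the lower bound in (\ref{eq:claim2}) needs a dynamical argument, which your plan handles in essentially the paper's way.
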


\section{Proof of Theorem \ref{theorem}}
\label{appendix:d}
In this section, we make the same assumptions for notational simplicity as in Appendix \ref{appendix:c}. 
The following inequalities will be useful throughout the proof.
Assuming $\|\X_{\S^c}(t)\|\le \delta$, we have, by (\ref{eq:claim2}) of Lemma \ref{lemma:support3},
\begin{equation}\label{eq:bound_size}
\|\X_\S(t)\|_1 = \|\X(t)\|_1 - \|\X_{\S^c}(t)\|_1 \ge \left(\frac{1}{3} - 3\sqrt{\frac{\log n}{m}}\right)^{-\frac{1}{2}} - \delta \ge \frac{1}{2},
\end{equation}
and, using (\ref{eq:claim1}) of Lemma \ref{lemma:support3}, we can bound
\begin{equation}\label{eq:bound_innerproduct}
\X(t)^T\x^\star \ge \|\X_\S(t)\|_1x^\star_{min} \ge \frac{c}{2\sqrt{k}}.
\end{equation}
Before proving Theorem \ref{theorem}, it will be helpful to first consider the population dynamics, which highlights the main ideas of the proof of Theorem \ref{theorem}. That is, we first assume that we had access to the population gradient $\nabla f$, or in other words that $m=\infty$. 

The proof of Theorem \ref{theorem} relies on the identity
\begin{equation}\label{eq:bregman_derivative}
\frac{d}{dt}D_{\Phi}(\x^\star,\X(t)) = - \Big\langle\nabla^2\Phi(\X(t))\cdot\frac{d}{dt}\X(t), \x^\star - \X(t)\Big\rangle = -\Big\langle\nabla F(\X(t)), \X(t) - \x^\star \Big\rangle,
\end{equation}
to show that the Bregman divergence $D_{\Phi}(\x^\star, \X(t))$ decreases as claimed.

\textbf{Initial Bregman divergence}\\
Recalling the initialization (5), we can bound the initial Bregman divergence at $t=0$ by
\begin{align}
\label{eq:bregman_divergence_initial}
D_{\Phi}(\x^\star,\X(0)) &= \sum_{i\in \S\backslash \{i_0\}} \beta - \sqrt{(x_i^\star)^2 + \beta^2} - x^\star_i \log\frac{\beta}{x_i^\star + \sqrt{(x_i^\star)^2 + \beta^2}} \nonumber\\
&\quad + \sqrt{\hat{\theta}^2/3 + \beta^2} - \sqrt{(x_i^\star)^2 + \beta^2} - x^\star_i \log\frac{\hat{\theta}/\sqrt{3} + \sqrt{\hat{\theta}^2/3 + \beta^2}}{x_i^\star + \sqrt{(x_i^\star)^2 + \beta^2}} \nonumber\\
&\le \sum_{i\in\S\backslash{i_0}}|x^\star_i|\log \frac{1}{\beta} + \underbrace{\beta - \sqrt{(x^\star_i)^2+\beta^2} + |x^\star_i|\log \left(|x^\star_i| + \sqrt{(x^\star_i)^2 + \beta^2}\right)}_{\le 0} + 1 \nonumber\\
&\le \|\x^\star\|_1\log \frac{1}{\beta} + 1
\end{align}
where we write $\hat{\theta} = \sqrt{\sum_{j=1}^mY_j/m}$ for the estimate of the signal size $\|\x^\star\|_2$, and used the fact that the function $x\log\frac{\beta}{x+\sqrt{x^2+\beta^2}}$ is symmetric, that is $x\log\frac{\beta}{x+\sqrt{x^2+\beta^2}} = -x\log\frac{\beta}{-x+\sqrt{x^2+\beta^2}}$.

\textbf{Bounding $-\langle\nabla f(\X(t)), \X(t) -\x^\star \rangle$}\\
We can compute
\begin{equation*}
-\big\langle\nabla f(\X(t)), \X(t) - \x^\star \big\rangle = \left[3\left(\|\X(t)\|_2^2 - 1\right) + 2\left(\X(t)^T\x^\star\right)\right]\left(\X(t)^T\x^\star\right) - \left(3\|\X(t)\|_2^2 - 1\right)\|\X(t)\|_2^2.
\end{equation*}
To bound this quantity, we distinguish two cases.
\begin{itemize}
\item \textbf{Case 1: $\|\X(t)\|_2^2 \le \frac{2}{5}$}\\
In this case, we use the fact that $3\|\X(t)\|_2^2-1 \ge -9\sqrt{\frac{\log n}{m}}$ by Lemma \ref{lemma:support3} to bound
\begin{align}\label{eq:population_dynamics1}
-\big\langle \nabla f(\X(t)), \X(t) - \x^\star \big\rangle&\le \left[3\left(\|\X(t)\|_2^2-1\right) + 2\left(\X(t)^T\x^\star\right)\right]\left(\X(t)^T\x^\star\right) - \frac{18}{5}\sqrt{\frac{\log n}{m}} \nonumber \\
&\le \left(-\frac{9}{5} + 2\sqrt{\frac{2}{5}}\right)\left(\X(t)^T\x^\star\right) - \frac{18}{5}\sqrt{\frac{\log n}{m}} \nonumber \\
& \le -\frac{c}{2\sqrt{k}}\|\X_\S(t)\|_1, 
\end{align}
for $m\ge c'k\log n$ with $c'$ large enough, where for the last line we used (\ref{eq:bound_innerproduct}).

\item \textbf{Case 2: $\|\X(t)\|_2^2 > \frac{2}{5}$}\\
In this case, we use $2(\X(t)^T\x^\star) = \|\X(t)\|_2^2 + \|\x^\star\|_2^2 - \|\X(t)-\x\|_2^2$ to bound
\begin{align*}
-\big\langle \nabla f(\X(t)), \X(t) - \x^\star \big\rangle \le \left[4\|\X(t)\|_2^2 -2 - \Delta(t) \right]\left(\X(t)^T\x^\star\right) - \left(3\|\X(t)\|_2^2 - 1\right) \|\X(t)\|_2^2,
\end{align*}
where we write $\Delta(t) = \|\X(t)-\x^\star\|_2^2$.
If $4\|\X(t)\|_2^2 -2 - \Delta(t)\le 0$, this is upper bounded by
\begin{equation}
\label{eq:population_dynamics20}
-\big\langle \nabla f(\X(t)), \X(t) - \x^\star \big\rangle \le -\frac{2}{25}.
\end{equation}
Otherwise, we can bound, using $\X(t)^T\x^\star \le \|\X(t)\|_2\|\x^\star\|_2$,
\begin{align}\label{eq:population_dynamics2}
\big\langle \nabla f(\X(t)), \x^\star-\X(t) \big\rangle &\le \underbrace{-3\|\X(t)\|_2^4 + 4\|\X(t)\|_2^3 + \|\X(t)\|_2^2 - 2\|\X(t)\|_2}_{\le 0} - \Delta (t) \|\X(t)\|_2 \nonumber\\
&\le - \sqrt{\frac{2}{5}} \cdot \Delta (t).
\end{align}
\end{itemize}
The inequalities (\ref{eq:population_dynamics1})--(\ref{eq:population_dynamics2}) can be used to show that the Bregman divergence $D_{\Phi}(\x^\star,\X(t))$ decreases as claimed. In particular, we can use Lemma \ref{lemma1} to bound $\|\X(t)-\x^\star\|_2^2$ in terms of $D_{\Phi}(\x^\star, \X(t))$, and inequality (\ref{eq:population_dynamics2}) yields a bound leading to linear convergence if $\|\X_{\S^c}(t)\|_1$ is negligibly small compared to $\|\X_\S(t)-\x^\star_\S\|_2^2$. In order to make the proof of Theorem \ref{theorem} rigorous, we need to replace the population gradient $\nabla f$ by the empirical gradient $\nabla F$ in the outline we provided above.
\begin{proof}[Proof of Theorem \ref{theorem}]
Guided by the analysis of the population dynamics and the fact that Lemma \ref{lemma1} plays a central role in bounding (\ref{eq:population_dynamics2}) in terms of the Bregman divergence $D_{\Phi}(\x^\star,\X(t))$, we divide the analysis of the convergence of mirror descent into two stages, bounded by
\begin{align*}
T_1(\beta) &= \inf\left\{t>0 : \min_{i\in \S} \frac{|X_i(t)|}{|x^\star_i|} > \frac{1}{2} \right\}, \text{ and}\\
T_2(\delta) &= \inf\left\{t>0: D_{\Phi}(\x^\star,\X(t)) \le 2\delta \right\},
\end{align*}
respectively, where $\delta = c_3n\beta^{\alpha}$ for a constant $\alpha\in[\frac{1}{2},1)$. Note that we can choose any $\alpha\in[\frac{1}{2},1)$ provided $c_1$ is large enough. In particular, choosing $\alpha = \frac{1}{2}$ recovers the case stated in Theorem \ref{theorem}. Allowing different values for $\alpha$ shows that, as remarked in Section 4, the dependence $\delta= \Omega(n\sqrt{\beta})$ and hence the requirement $\beta\le \O(n^{-4})$ is not sharp. The important property is that $\delta$ depends polynomially on $\beta$. In the following, we will omit the dependency on $\beta$ and $\delta$ for notational simplicity.

We consider the stages (i) $t\le T_1$ and (ii) $T_1< t \le T_2$. Note that we have $T_2>T_1$, because if there is an index $i\in \S$ with $|X_i(t)|<\frac{1}{2}|x^\star_i|$, then we also have $\|\X(t)-\x^\star\|_2^2 > \frac{c^2}{4k}$. This implies that $D_{\Phi}(\x^\star,\X(t))>2\delta$, since in this case we have, by Lemma \ref{lemma1}, 
\begin{equation*}
D_{\Phi}(\x^\star,\X(t)) \ge \frac{1}{2\sqrt{\max\{\|\X(t)\|_\infty^2, \|\x^\star\|_\infty^2\} + \beta^2}} \|\X(t)-\x^\star\|_2^2 \ge \frac{1}{2\sqrt{2 + \beta^2}} \cdot \frac{c^2}{4k},
\end{equation*}
where we used that $\|\X(t)\|_\infty\le \|\X(t)\|_2 \le \sqrt{2}$ by Lemma \ref{lemma:support3}.

In both stages, we will (a) bound the length of the stage by using (\ref{eq:bregman_derivative}) to bound $T_i$, and (b) show that off-support coordinates $\|\X_{\S^c}(t)\|_1$ stay sufficiently small. Throughout the proof we will assume that the inequalities in Lemmas \ref{lemma:support1}--\ref{lemma:support3} are satisfied, which happens with probability at least $1-c_4n^{-10}$.

\textbf{Stage (i), part (a): $t\le T_1$, bound $T_1$}\\
Assume for now that we have already shown $\|\X_{\S^c}(t)\|_1< \delta_1 = n\beta^{\frac{1+\alpha}{2}}$ for all $t\le T_1$. We have already computed the rate (\ref{eq:bregman_derivative}) at which the Bregman divergence $D_{\Phi}(\x^\star,\X(t))$ decreases under the assumption that we have access to the population gradient $\nabla f$ in (\ref{eq:population_dynamics1}) and (\ref{eq:population_dynamics2}). Now, we need to bound (\ref{eq:bregman_derivative}) with the empirical gradient $\nabla F$.
\begin{itemize}
\item \textbf{Case 1: $\|\X(t)\|_2^2\le \frac{2}{5}$}\\
By Lemma \ref{lemma:support2}, we have
\begin{equation*}
\left|\big\langle \nabla F(\X(t)), \X(t) - \x^\star \big\rangle - \big\langle \nabla f(\X(t)), \X(t) - \x^\star \big\rangle\right| \le  \frac{c}{8\sqrt{k}}\|\X_\S(t)\|_1 + c_6\delta_1,
\end{equation*}
since $m\ge c_1\max\{k^2\log^2n,\;\log^5 n\}$, where $c_6$ is the universal constant of Lemma \ref{lemma:support2}. We also have
\begin{equation*}
c_6\delta_1 = c_6n\beta^{\frac{1+\alpha}{2}}\le \frac{c}{8\sqrt{k}}\|\X_\S(t)\|_1
\end{equation*}
for any $\alpha\ge 0$, where we used $\beta \le c_2/n^4$ and (\ref{eq:bound_size}). 
This shows that 
\begin{equation*}
\left|\big\langle \nabla F(\X(t)), \X(t) - \x^\star \big\rangle - \big\langle \nabla f(\X(t)), \X(t) - \x^\star \big\rangle\right| \le  \frac{c}{4\sqrt{k}}\|\X_\S(t)\|_1.
\end{equation*}
Together with (\ref{eq:population_dynamics1}), this bound leads to
\begin{align} \label{eq:bound1}
\frac{d}{dt}D_{\Phi}(\x^\star, \X(t)) &\le -\big\langle \nabla f(\X(t)), \X(t) - \x^\star \big\rangle  \nonumber\\
&\quad + \left|\big\langle \nabla F(\X(t)), \X(t) - \x^\star \big\rangle - \big\langle \nabla f(\X(t)), \X(t) - \x^\star \big\rangle\right| \nonumber\\
&\le - \frac{c}{4\sqrt{k}}\|\X_\S(t)\|_1 \nonumber\\
&\le - \frac{c}{8\sqrt{k}},
\end{align}
where for the last line we used (\ref{eq:bound_size}). 

\item \textbf{Case 2: $\|\X(t)\|_2^2\ge \frac{2}{5}$}\\
As in the previous case, we can bound the difference $\langle \nabla F(\X(t)) - \nabla f(\X(t)), \X(t) - \x^\star \rangle$ using Lemma \ref{lemma:support2}. Recalling (\ref{eq:population_dynamics20}) and (\ref{eq:population_dynamics2}), we obtain
\begin{equation}\label{eq:bound21}
\frac{d}{dt}D_{\Phi}(\x^\star, \X(t)) = -\big\langle \nabla F(\X(t)), \X(t) - \x^\star \big\rangle \le - \frac{1}{25}
\end{equation}
if $4\|\X(t)\|_2^2 -2 - \|\X(t)-\x^\star\|_2^2\le 0$, and
\begin{equation}\label{eq:bound2}
\frac{d}{dt}D_{\Phi}(\x^\star, \X(t))= -\big\langle \nabla F(\X(t)), \X(t) - \x^\star \big\rangle \le -\frac{1}{2}\|\X(t) - \x^\star\|_2^2
\end{equation}
if $4\|\X(t)\|_2^2 -2 - \|\X(t)-\x^\star\|_2^2> 0$, where we used the second bound in Lemma \ref{lemma:support2} together with the fact that $\|\X(t)-\x^\star\|_2^2 > \frac{c^2}{4k} > \delta_1^2$ to bound
\begin{align*}
\left|\big\langle \nabla F(\X(t)) - \nabla f(\X(t), \; \X(t) - \x^\star \big\rangle \right| &\le \frac{1}{16}\left(\|\X(t)-\x^\star\|_2^2 + \delta_1^2\right)\\
&\le \bigg(\sqrt{\frac{2}{5}}-\frac{1}{2}\bigg)\|\X(t) - \x^\star\|_2^2.
\end{align*}
\end{itemize}
We can now bound $T_1$.
Define
\begin{equation*}
T_0 = \inf\left\{t>0: \|\X(t)\|_2^2>\frac{2}{5}\right\},
\end{equation*}
as the time until which Case 1 holds.
Then, the bound (\ref{eq:bound1}) from Case 1 shows that
\begin{equation*}
\min\{T_1,T_0\} \le \frac{8\sqrt{k}D_{\Phi}(\x^\star,\X(0))}{c} \le \frac{8}{c} k\log \frac{1}{\beta} + \frac{8\sqrt{k}}{c} \le c_5k\log\frac{1}{\beta},
\end{equation*}
where $c_5\ge \frac{8}{c} + \frac{8}{c\sqrt{k}\log \frac{1}{\beta}}$ and we used (\ref{eq:bregman_divergence_initial}).
If $T_1<T_0$, then we have bounded $T_1$ as desired.

Otherwise, we can use the bounds (\ref{eq:bound21}) and (\ref{eq:bound2}) from Case 2 to control $T_1 - T_0$. The first bound (\ref{eq:bound21}) can apply at most for $t\le 25D_{\Phi}(\x^\star, \X(0))\le 25\sqrt{k}\log \frac{1}{\beta} + 25$, where we again used (\ref{eq:bregman_divergence_initial}).

As (\ref{eq:bound2}) depends on the quantity $\|\X(t)-\x^\star\|_2^2$, we need to show that this $\ell_2$-distance is sufficiently large if the Bregman divergence $D_{\Phi}(\x^\star,\X(t))$ is large.

To this end, define
\begin{equation*}
S(t) := \left\{i\in \S: \frac{|X_i(t)|}{|x^\star_i|} < \frac{1}{2}\right\}.
\end{equation*}
With this, we can bound
\begin{equation}\label{eq:boundbreg1}
\|\X(t)-\x^\star\|_2^2 \ge \sum_{i\in S(t)} \left(\frac{1}{2}x^\star_i\right)^2 + \sum_{i\notin S(t)}(X_i(t) - x^\star_i)^2 \ge \frac{c^2}{4k}|S(t)| + \sum_{i\notin S(t)}(X_i(t) - x^\star_i)^2.
\end{equation}
Following the same computation as in the proof of Lemma \ref{lemma1}, we get
\begin{align}\label{eq:boundbreg2}
D_{\Phi}(\x^\star,\X(t)) \le \sum_{i\in S(t)} |x^\star_i| \log \frac{1}{\beta} + \frac{\sqrt{k}}{c} \sum_{i\notin S(t)}(X_i(t) - x^\star_i)^2 + \|\X_{\S^c}(t)\|_1.
\end{align}
Because $\|\x^\star\|_\infty\le \|\x^\star\|_2=1$, we have $\sum_{i\in S(t)}|x^\star_i|\le |S(t)|$. Further, since $t< T_1$, we also have $\|\X_{\S^c}(t)\|_1\le \delta_1\le \frac{3}{4}(x^\star_{min})^2\le \sum_{i\notin S(t)}(X_i(t)-x^\star_i)^2$. With this, we can combine (\ref{eq:bound2}), (\ref{eq:boundbreg1}) and (\ref{eq:boundbreg2}) and bound
\begin{equation*}
\frac{d}{dt}D_{\Phi}(\x^\star,\X(t)) \le -\frac{1}{2} \|\X(t)-\x^\star\|_2^2 \le - \frac{c^2}{8k\log \frac{1}{\beta}} D_{\Phi}(\x^\star,\X(t)),
\end{equation*}
which shows that $D_{\Phi}(\x^\star,\X(t))$ decreases linearly at the rate $c^2/(8k\log \frac{1}{\beta})$, as long as $T_0 < t < T_1$. Now, we have
\begin{equation*}
D_{\Phi}(\x^\star,\X(T_0)) \le D_{\Phi}(\x^\star,\X(0)) \le \sqrt{k}\log \frac{1}{\beta}+1,
\end{equation*}
and, for $t<T_1$,
\begin{equation*}
\frac{c^2}{4k}\le \|\X(t)-\x^\star\|_2^2 \le 2\sqrt{2+\beta^2}\cdot D_{\Phi}(\x^\star,\X(t))\le 3D_{\Phi}(\x^\star,\X(t)),
\end{equation*}
where for the second inequality we used $\|\X(t)\|_\infty\le \sqrt{2}$ by Lemma \ref{lemma:support3} and the bound (\ref{eq:lemma1lb}) of Lemma \ref{lemma1}.
This implies
\begin{equation*}
(T_1-T_0) \le \frac{8k\log\frac{1}{\beta}}{c^2} \log \left(\frac{12}{c^2} k^{\frac{3}{2}}\log \frac{1}{\beta} + \frac{12}{c^2}k\right) \le \frac{c_5}{2}k\log\frac{1}{\beta} \cdot \log\left(k\log \frac{1}{\beta}\right)
\end{equation*}
for $c_5\ge \frac{24}{c^2} + \frac{16}{c^2}\log\frac{24}{c^2}$.

\textbf{Stage (i), part (b): $t\le T_1$, bound $\|\X_{\S^c}(t)\|_1$}\\
The main idea to controlling $\|\X_{\S^c}(t)\|_1$ is as follows: we will show that $X_j(t)$ can only grow at a comparatively slower rate than $X_i(t)$ for $j\notin \S$, $i\in \S$. We will show that both coordinates grow comparably to exponentials, and use the fact that, for any fixed $\epsilon>0$, the gap between $\beta(1+2\epsilon)^t$ and $\beta(1+\epsilon)^t$ can be made arbitrarily large by choosing $t$ large and $\beta$ small enough. Recall that
\begin{equation*}
\frac{d}{dt}X_i(t) = -\sqrt{X_i(t)^2 + \beta^2} \cdot \nabla F(\X(t))_i.
\end{equation*}
By Lemma \ref{lemma:support3}, we have $\sqrt{3}\cdot 2(\X(t)^T\x^\star) \ge 3\|\X(t)\|_2^2 - 1$. For any $i\in \S$ with $x^\star_i>0$ and $X_i(t)\le \frac{1}{2}x^\star_i$, we can compute
\begin{equation*}
\nabla f(\X(t))_i = 3(\|\X(t)\|_2^2 - 1)X_i(t) - 2(\X(t)^T\x^\star)x^\star_i\le \left(\frac{\sqrt{3}}{2}-1\right)\cdot 2(\X(t)^T\x^\star)x^\star_i \le - \frac{c\|\X_\S(t)\|_1}{4k}.
\end{equation*}
As before, Lemma \ref{lemma:support1} gives
\begin{equation*}
|\nabla F(\X(t))_i - \nabla f(\X(t))_i| \le \frac{c\|\X_\S(t)\|_1}{4k},
\end{equation*}
for $c_7\delta \le \frac{c}{8k}\|\X_\S(t)\|_1$. This gives
\begin{equation*}
\nabla F(\X(t))_{i} \le - \frac{c\|\X_\S(t)\|_1}{8k},
\end{equation*}
which means that $\frac{d}{dt}X_i(t)>0$. The analogous result holds for coordinates $i\in \S$ with $x^\star_i<0$, so in other words, once we have $|X_i(t_0)|\ge \frac{1}{2} |x^\star_i|$ for some $t_0>0$, then we must also have $|X_i(t)|\ge \frac{1}{2}|x^\star_i|$ for all $t\ge t_0$.

With this, we can define $i_1\in \S$ to be the last coordinate which crosses this threshold, that is for which $\frac{|X_i(t)|}{|x^\star_i|}\ge\frac{1}{2}$. By definition, we have $|X_{i_1}(t)| \le \frac{1}{2} |x^\star_{i_1}|$ for all $t\le T_1$. We can assume without loss of generality that $x^\star_{i_1}>0$.

For any $j\notin \S$, we have  
\begin{equation*}
\nabla f(\X(t))_j \ge 0
\end{equation*}
if $x_j(t)\ge 0$, and Lemma \ref{lemma:support1} gives
\begin{equation*}
\nabla F(\X(t))_j \ge -\frac{1-\alpha}{2\sqrt{2}}\cdot\frac{c\|\X_\S(t)\|_1}{8k}.
\end{equation*}
The analogous result holds for $X_j(t)<0$, which shows that, for any $j\notin \S$,
\begin{equation}
\label{eq:stage1b}
|\nabla F(\X(t))_j| \le \frac{1-\alpha}{2\sqrt{2}} |\nabla F(\X(t))_{i_1}|
\end{equation}
holds for all $t\le T_1$.

Loosely speaking, $X_{i_1}(t)$ and $X_j(t)$ both grow exponentially, but at different (time-varying) rates. By the definition of $i_1$, we have $X_{i_1}(T_1) = \frac{1}{2}x^\star_{i_1}$, and $X_j(t)$ can be made arbitrarily small by choosing a sufficiently small parameter $\beta$.

To make this rigorous, rescale time by a monotonically increasing function $\tilde{t}:\R_+\rightarrow \R_+$ such that $-\frac{d}{dt}\tilde{t}(t) \cdot \nabla F(\X(t))_{i_1} = c$ is constant, so that $\widetilde{\X}(t) = \X(\tilde{t}(t))$ satisfies, for all $t\ge 0$ with $\tilde{t}(t)\le T_1$,
\begin{align*}
\widetilde{X}_{i_1}(0) &= \beta  \\
\frac{d}{dt}\widetilde{X}_{i_1}(t) &= \sqrt{\widetilde{X}_{i_1}(t)^2 + \beta^2} \cdot c.
\end{align*}
Recalling the bound (\ref{eq:stage1b}), we have, for $j\notin \S$,
\begin{align*}
\left|\widetilde{X}_j(0)\right| &\le \beta  \\
\left|\frac{d}{dt}\widetilde{X}_j(t)\right| &\le \frac{1-\alpha}{2\sqrt{2}}\sqrt{\widetilde{X}_j(t)^2 + \beta^2} \cdot c.
\end{align*}
Since $\sqrt{\widetilde{X}_{i_1}(t)^2 + \beta^2} \ge \widetilde{X}_{i_1}(t)$, we have
\begin{equation*}
\widetilde{X}_{i_1}(t) \ge \beta \exp(ct) \quad \Rightarrow \quad \exp(ct) \le \frac{\widetilde{X}_{i_1}(t)}{\beta}.
\end{equation*}
Similarly, since $\sqrt{\widetilde{X}_j(t)^2 + \beta^2} \le \sqrt{2} \widetilde{X}_j(t)$ for $\widetilde{X}_j(t)\ge \beta$, we can bound
\begin{equation*}
\widetilde{X}_j(t) \le \beta \exp\left(\frac{1-\alpha}{2}ct\right) \le \beta \left(\frac{\widetilde{X}_{i_1}(t)}{\beta}\right)^{\frac{1-\alpha}{2}} \le \beta^{\frac{1+\alpha}{2}},
\end{equation*}
where we used the fact that, since $\tilde{t}(t)\le T_1$, we have $\widetilde{X}_{i_1}(t)\le \frac{1}{2}x^\star_{i_1}\le 1$.
As this holds for every $j\notin \S$, we have, for all $t\le T_1$,
\begin{equation*}
\|\X_{\S^c}(t)\|_1 \le n\beta^{\frac{1+\alpha}{2}} = \delta_1
\end{equation*}

\textbf{Stage (ii), part (a): $T_1<t\le T_2$, bound $D_{\Phi}(\x^\star,\X(t))$}\\
As before, assume for now that we have already shown $\|\X_{\S^c}(t)\|_1\le \delta$ for all $t\le T_2$. In this stage we have $t>T_1$, so $|X_i(t)|\ge \frac{1}{2}|x^\star_i|$ for all $i=1,...,n$. By Lemma \ref{lemma:support3} we also have $X_i(t)x^\star_i\ge 0$ (since we assumed $x^\star_{i_0}>0$), so the assumptions for inequality (\ref{eq:lemma1ub}) of Lemma \ref{lemma1} are satisfied.
Then, we have for all $t\le T_2$, by the definition of $T_2$,
\begin{equation*}
\|\X(t)-\x^\star\|_2^2 \ge \frac{c}{\sqrt{k}}\left(D_{\Phi}(\x^\star, \X(t))- \delta \right) \ge \frac{c}{\sqrt{k}}\delta. 
\end{equation*}
As in the previous stage, we can use this bound together with the second inequality of Lemma \ref{lemma:support2} to show the bound (\ref{eq:bound2}). We can also apply Lemma \ref{lemma1} to obtain
\begin{equation*}
D_{\Phi}(\x^\star,\X(t)) \le \frac{\sqrt{k}}{c}\|\X_\S(t)-\x^\star_\S\|_2^2 + \|\X_{\S^c}(t)\|_1 \le \frac{2\sqrt{k}}{c}\|\X(t)-\x^\star\|_2^2,
\end{equation*}
where we used $\|\X_{\S^c}(t)\|_1 \le \delta \le \frac{\sqrt{k}}{c}\|\X(t)-\x^\star\|_2^2$. In particular, we have $D_{\Phi}(\x^\star,\X(T_1)) \le \frac{6\sqrt{k}}{c}$, where we used $\|\X(t)\|_2^2\le 2$ by Lemma \ref{lemma:support3}. With this, inequality (\ref{eq:bound2}) reads
\begin{equation*}
\frac{d}{dt}D_{\Phi}(\x^\star,\X(t)) \le -\frac{c}{4\sqrt{k}}D_{\Phi}(\x^\star,\X(t)).
\end{equation*}
Hence, we have for $T_1\le t\le T_2$,
\begin{align*}
D_{\Phi}(\x^\star,\X(t)) &\le D_{\Phi}(\x^\star,\X(T_1)) \cdot \exp\left(-\frac{c}{4\sqrt{k}}(t-T_1)\right) \\
&\le \frac{6\sqrt{k}}{c}\cdot \exp\left(-\frac{c}{4\sqrt{k}}(t-T_1)\right).
\end{align*}
Further, recalling that $D_{\Phi}(\x^\star, \X(t)) > 2\delta$ for $t\le T_2$, we can bound
\begin{equation*}
(T_2-T_1) \le \frac{4\sqrt{k}}{c}\log \frac{6\sqrt{k}}{2c\delta}.
\end{equation*}

\textbf{Stage (ii), part (b): $T_1<t\le T_2$, bound $\|\X_{\S^c}(t)\|_1$}\\
Recall that $|X_j(t)|\le \beta^{\frac{1+\alpha}{2}}$ for all $j\notin \S$ and $t\le T_1$. Further, as before we can use Lemma \ref{lemma:support1} to show
\begin{equation*}
|\nabla F(\X(t))_j| \le \frac{1-\alpha}{2\alpha} \cdot \frac{c\|\X_\S(t)\|_1}{8k} \le \frac{c(1-\alpha)}{8\alpha\sqrt{2k}},
\end{equation*}
where we used $\|\X_\S(t)\|_1\le \sqrt{k}\|\X_\S(t)\|_2\le \sqrt{2k}$. Noting $\sqrt{x^2+\beta^2}\le \sqrt{2}x$ for $x\ge \beta$, we can bound, for $t\le T_2$,
\begin{align*}
|X_j(t)| &\le \beta^{\frac{1+\alpha}{2}}\exp\left(\frac{c(1-\alpha)}{8\alpha\sqrt{k}}(T_2-T_1)\right) \\
&\le \beta^{\frac{1+\alpha}{2}}\cdot\exp\left(\frac{c(1-\alpha)}{8\alpha\sqrt{k}} \cdot \frac{4\sqrt{k}}{c}\log \frac{6\sqrt{k}}{2c\delta}\right) \\
&\le \beta^{\frac{1+\alpha}{2}}\cdot\left(\frac{6\sqrt{k}}{2c\delta}\right)^{\frac{1-\alpha}{2\alpha}} \\
 &\le \frac{\delta}{n},
\end{align*}
where for the second inequality we used $\log (1+x)\le x$ for $x>0$, and for the last inequality we used the definition $\delta = (\frac{6}{2c})^{\frac{1-\alpha}{1+\alpha}}n\beta^\alpha$. This completes the proof that $\|\X_{\S^c}(t)\|_1\le \delta$ for all $t\le T_2$.
\end{proof}

\section{Proof of supporting lemmas}
\label{appendix:e}
In this section, we prove the supporting lemmas stated in Appendix \ref{appendix:c}.
\begin{proof}[Proof of Lemma \ref{lemma:support1}]
For any $i\in [n]$, we will bound the difference $|\nabla F(\x)_i - \nabla f(\x)_i|$; the result then follows by taking a union bound. First, we write $\mathbf{w}\in \R^n$ for the vector $\x_\S$ padded with zeroes, that is $w_i = x_i$ for $i\in \S$ and $w_i = 0$ otherwise. Recall that $\nabla f(\x) = \E[\nabla F(\x)]$. Then, we can decompose
\begin{align}\label{eq:split}
|\nabla F(\x)_i - \nabla f(\x)_i| &\le |\nabla F(\x)_i - \nabla F(\mathbf{w})_i| + |\nabla F(\mathbf{w})_i - \E[\nabla F(\mathbf{w})_i]| \nonumber\\
&\quad + |\E[\nabla F(\mathbf{w})_i] - \E[\nabla F(\x)_i]|,
\end{align}
and we will bound the three terms separately.

\textbf{Step 1: Bound the term $|\nabla F(\x)_i - \nabla F(\mathbf{w})_i|$}\\
We begin by bounding the first term of (\ref{eq:split}) by $\frac{c_3}{2}\delta$. Recall that
\begin{equation*}
\nabla F(\x)_i = \frac{1}{m}\sum_{j=1}^m\left((\A_{j,\S}^T\x_\S + \A_{j,\S^c}\x_{\S^c})^3 - (\A_{j,\S}^T\x_\S + \A_{j,\S^c}^T\x_{\S^c})(\A_j^T\x^\star)^2\right)A_{ji}.
\end{equation*}
Then, we have
\begin{align}\label{eq:split2}
\nabla F(\x)_i - \nabla F(\mathbf{w})_i &= \frac{3}{m}\sum_{j=1}^m A_{ji}(\A_{j,\S}^T\x_\S)^2(\A_{j,\S^c}^T\x_{\S^c}) + \frac{3}{m}\sum_{j=1}^m A_{ji}(\A_{j,\S}^T\x_\S)(\A_{j,\S^c}^T\x_{\S^c})^2 \nonumber\\
&\quad + \frac{1}{m}\sum_{j=1}^m A_{ji}(\A_{j,\S^c}^T\x_{\S^c})^3 - \frac{1}{m}\sum_{j=1}^m A_{ji}(\A_{j,\S^c}^T\x_{\S^c})(\A_j^T\x^\star)^2.
\end{align}
These four terms can be bounded as follows: for the first term, we have
\begin{align*}
\frac{1}{m}\sum_{j=1}^m A_{ji}(\A_{j,\S}^T\x_\S)^2(\A_{j,\S^c}^T\x_{\S^c}) &= \sum_{l\notin \S}x_l \cdot \frac{1}{m}\sum_{j=1}^mA_{ji}A_{jl}(\A_{j,\S}^T\x_{\S})^2\\
&\le \|\x_{\S^c}\|_1 \cdot \max_{l\notin \S}\left|\frac{1}{m}\sum_{j=1}^mA_{ji}A_{jl}(\A_{j,\S}^T\x_{\S})^2\right| \\
&\le \|\x_{\S^c}\|_1\cdot \max_{l\notin \S}\sqrt{\frac{1}{m}\sum_{j=1}^mA_{ji}^2A_{jl}^2}\cdot\sqrt{\frac{1}{m}\sum_{j=1}^m(\A_{j,\S}^T\x_\S)^4}, 
\end{align*}
where we used H\"{o}lder's inequality in the second and the Cauchy-Schwarz inequality in the last line. The first sum is bounded by Lemma \ref{lemma:tech3}: recalling $m\ge c_1(\gamma)k^2\log^2 n$, we have with probability at least $1-c_4n^{-13}$,
\begin{equation*}
\max_{l\notin \S}\left|\frac{1}{m}\sum_{j=1}^mA_{ji}^2A_{jl}^2\right| \le 1 + \frac{1}{k} \le 2.
\end{equation*}
By Lemma \ref{lemma:tech1} (with $t=5\sqrt{\log n}$), we have with probability $1-4n^{-12.5}$,
\begin{equation*}
\sqrt{\frac{1}{m}\sum_{j=1}^m (\A_{j,\S}^T\x_\S)^4} \le \frac{1}{\sqrt{m}}\left((3m)^{\frac{1}{4}} + \sqrt{k} + 5\sqrt{\log n}\right)^2 \le 11
\end{equation*}
for all $\x\in\mathcal{X}$, where we used $5\sqrt{\log n} \le m^{\frac{1}{4}}$, which holds if $c_1\log^5n \ge 5^4\log^2n$. 

Put together, this gives 
\begin{equation*}
\frac{1}{m}\sum_{j=1}^m A_{ji}(\A_{j,\S}^T\x_\S)^2(\A_{j,\S^c}^T\x_{\S^c}) \le c'\delta,
\end{equation*}
where $c' = 11\sqrt{2}$.
The other terms in (\ref{eq:split2}) can be bounded the same way. For instance, we can write
\begin{equation*}
\frac{1}{m}\sum_{j=1}^m A_{ji}(\A_{j,\S}^T\x_S)(\A_{j,\S^c}^T\x_{\S^c})^2 = \sum_{l\notin \S}x_l\sum_{s\notin \S}x_s\cdot \frac{1}{m}\sum_{j=1}^mA_{ji}A_{jl}A_{js}(\A_{j,\S}^T\x_\S)
\end{equation*}
and, following the same steps as before, we obtain the bounds
\begin{align*}
\frac{1}{m}\sum_{j=1}^m A_{ji}(\A_{j,\S^c}^T\x_{\S^c})(\A_j^T\x^\star)^2 &\le c' \delta,\\
\frac{1}{m}\sum_{j=1}^m A_{ji}(\A_{j,\S}^T\x_S)(\A_{j,\S^c}^T\x_{\S^c})^2 &\le  c''\delta^2,\\
\frac{1}{m}\sum_{j=1}^m A_{ji}(\A_{j,\S^c}^T\x_{\S^c})^3 &\le c'''\delta^3,
\end{align*}
for all $\x\in\mathcal{X}$ with probability $1-3(c_4+4)n^{-12.5}$.
Recall that $\delta\le 1$. For any constant $c_3$ satisfying $4c'\delta + 3c''\delta^2 + c'''\delta^3 \le \frac{c_3}{2}\delta$, we have
\begin{equation*}
|\nabla F(\x)_i - \nabla F(\mathbf{w})_i| \le \frac{c_3}{2}\delta \quad \text{for all } \x\in\mathcal{X}
\end{equation*}
with probability at least $1-\frac{c_2}{2}n^{-11}$.

\textbf{Step 2: Bound the term $|\E[\nabla F(\x)_i] - \E[\nabla F(\mathbf{w})_i]|$}\\
For the expectation, we can compute, using the Cauchy-Schwarz inequality,
\begin{align*}
\E\left[\frac{1}{m}\sum_{j=1}^m A_{ji}(\A_{j,\S}^T\x_\S)^2(\A_{j,\S^c}^T\x_{\S^c})\right] &\le \E[A_{1i}^2]^{\frac{1}{2}} \E\left[(\A_{1,\S}^T\x_\S)^4(\A_{1,\S^c}^T\x_{\S^c})^2\right]^{\frac{1}{2}} \\
&= \left(\E\left[(\A_{1,\S}^T\x_\S)^4\right]\E\left[(\A_{1,\S^c}^T\x_{\S^c})^2\right]\right)^{\frac{1}{2}} \\
&\le \sqrt{3\|\x_{\S^c}\|_2^2} \\
&\le \sqrt{3}\delta
\end{align*}
for all $\x\in\mathcal{X}$, where we used that $\A_{1,\S}^T\x_\S \sim \gauss(0,\|\x_\S\|_2^2)$ and $\A_{1,\S^c}^T\x_{\S^c} \sim \gauss(0,\|\x_{\S^c}\|_2^2)$ are independent, and $\|\x_\S\|_2\le 1$, $\|\x_{\S^c}\|_2\le \delta$. Similarly, we can bound the other terms:
\begin{align*}
\E\left[\frac{1}{m}\sum_{j=1}^m A_{ji}(\A_{j,\S^c}^T\x_{\S^c})(\A_j^T\x^\star)^2\right] &\le \sqrt{3}\delta,\\
\E\left[\frac{1}{m}\sum_{j=1}^m A_{ji}(\A_{j,\S}^T\x_\S)(\A_{j,\S^c}^T\x_{\S^c})^2\right] &\le  \sqrt{3}\delta^2,\\
\E\left[\frac{1}{m}\sum_{j=1}^m A_{ji}(\A_{j,\S^c}^T\x_{\S^c})^3\right] &\le \sqrt{15}\delta^3.
\end{align*}
This completes the proof that 
\begin{equation*}
|\E[\nabla F(\x)_i] - \E[\nabla F(\mathbf{w})_i]| \le \frac{c_3}{2}\delta.
\end{equation*}

\textbf{Step 3: Bound the term $|\nabla F(\mathbf{w})_i - \E[\nabla F(\mathbf{w})_i]|$}\\
Finally, we need to show that the term $|\nabla F(\mathbf{w})_i - \E[\nabla F(\mathbf{w})_i]|$ in (\ref{eq:split}) can be bounded by $\gamma\frac{\|\x_\S\|_1}{k}$ for all $\x\in\mathcal{X}$ and $i\in [n]$ with probability $1-\frac{c_2}{2}n^{-10}$, which then completes the proof of Lemma \ref{lemma:support1}.

We decompose $\nabla F(\mathbf{w})_i$ in a straightforward, albeit somewhat lengthy manner. We have
\begin{align*}
\nabla F(\mathbf{w})_i &= \frac{1}{m}\sum_{j=1}^m\left((\A_j^T\mathbf{w})^3 - (\A_j^T\mathbf{w})(\A_j^T\x^\star)^2\right)A_{ji} \\
&= \left(w_i^3 - w_i(x^\star_i)^2\right)\frac{1}{m}\sum_{j=1}^mA_{ji}^4 + \left(3w_i^2 - (x^\star_i)^2\right)\frac{1}{m}\sum_{j=1}^mA_{ji}^3 (\A_{j,-i}^T\mathbf{w}_{-i}) \\
&\quad - 2w_ix^\star_i\frac{1}{m}\sum_{j=1}^mA_{ji}^3(\A_{j,-i}^T\x^\star_{-i}) + 3w_i\frac{1}{m}\sum_{j=1}^mA_{ji}^2(\A_{j,-i}^T\mathbf{w}_{-i})^2 \\
&\quad - 2x^\star_i\frac{1}{m}\sum_{j=1}^mA_{ji}^2(\A_{j,-i}^T\mathbf{w}_{-i})(\A_{j,-i}^T\x^\star_{-i}) - w_i\frac{1}{m}\sum_{j=1}^mA_{ji}^2(\A_{j,-i}^T\x^\star_{-i})^2 \\
&\quad + \frac{1}{m}\sum_{j=1}^mA_{ji} (\A_{j,-i}^T\mathbf{w}_{-i})^3 - \frac{1}{m}\sum_{j=1}^mA_{ji}(\A_{j,-i}^T\mathbf{w}_{-i})(\A_{j,-i}^T\x^\star_{-i})^2 \\
&=: B_1 + B_2 + B_3 + B_4 + B_5 + B_6 + B_7 + B_8,
\end{align*}
and we will show that $|B_l-\E[B_l]|$ is small for all $l=1,...,8$. All the following statements hold with probability $1-\frac{c_2}{2}n^{-10}$ for all $\mathbf{w}\in \R^n$ with $\|\mathbf{w}\|_2\le 1$ and $\mathbf{w}_{\S^c}=\mathbf{0}$, and all $i=1,...,n$.  We write the bounds in terms of $k$ instead of $m$ using the assumption $m\ge c_1(\gamma)\max\{k^2\log^2 n, \; \log^5 n\}$.
\begin{itemize}
\item For the first term, we have $\E[B_1] = 3(w_i^3 - w_i(x^\star_i)^2)$, and 
\begin{equation*}
|B_1 - \E[B_1]| = \left|\left(w_i^3 - w_i(x^\star_i)^2\right)\left(\frac{1}{m}\sum_{j=1}^mA_{ji}^4 - 3\right)\right| \le \frac{\gamma\|\mathbf{w}\|_1}{8k}
\end{equation*}
by Lemma \ref{lemma:tech3}, where we used $|w_i^3 - w_i(x^\star_i)| \le \|\mathbf{w}\|_1$.

\item For the second term, we have $\E[B_2] = 0$, and
\begin{equation*}
|B_2| \le \left|3w_i^2 - (x^\star_i)^2\right| \|\mathbf{w}\|_1 \max_{l\neq i} \left|\frac{1}{m}\sum_{j=1}^mA_{ji}^3A_{jl}\right| \le \frac{\gamma\|\mathbf{w}\|_1}{8k}
\end{equation*}
by Lemma \ref{lemma:tech3}.

\item For the third term, we have $\E[B_3] = 0$, and
\begin{equation*}
|B_3| \le \left|2w_ix^\star_i\right|\frac{\gamma}{16k} \le \frac{\gamma\|\mathbf{w}\|_1}{8k}
\end{equation*}
by Lemma \ref{lemma:tech3}.

\item For the fourth term, we have $\E[B_4] = 3w_i\|\mathbf{w}_{-i}\|_2^2$, and
\begin{align*}
|B_4 - \E[B_4]| &\le 3|w_i|\left|\sum_{l\neq i}w_l^2\left(\frac{1}{m}\sum_{j=1}^mA_{ji}^2A_{jl}^2 - 1\right)\right| \\
&\quad+ 3|w_i| \left|\sum_{l\neq i}w_l\sum_{s\neq l,i} w_s \frac{1}{m}\sum_{j=1}^mA_{ji}^2A_{jl}A_{js}\right| \nonumber\\
&\le \frac{\gamma\|\mathbf{w}\|_1}{16k} + \frac{\gamma\|\mathbf{w}\|_1}{16k} = \frac{\gamma\|\mathbf{w}\|_1}{8k},
\end{align*}
where we used Lemma \ref{lemma:tech3} to bound the first and Lemma \ref{lemma:tech7} to bound the second term.

\item For the fifth term, we have $\E[B_5] = -2x^\star_i\sum_{l\neq i}w_lx^\star_l $, and
\begin{align*}
|B_5 - \E[B_5]| &\le \left|2x^\star_i\right|\left|\sum_{l\neq i} w_l \left(\frac{1}{m}\sum_{j=1}^mA_{ji}^2A_{jl}(\A_{j,-i}^T\x^\star_{-i}) - x^\star_l\right)\right| \nonumber\\
&\le \frac{\gamma\|\mathbf{w}\|_1}{8k}
\end{align*}
where we used H\"{o}lder's inequality and Lemma \ref{lemma:tech3}.

\item For the sixth term, we have $\E[B_6] = -w_i\|\x^\star_{-i}\|_2^2$, and
\begin{align*}
|B_6 - \E[B_6]| &\le \left|w_i\right|\frac{\gamma}{8k} \le \frac{\gamma\|\mathbf{w}\|_1}{8k}
\end{align*} 
by Lemma \ref{lemma:tech3}.

\item For the seventh term, we have $\E[B_7] = 0$, and
\begin{align*}
|B_7| &\le \frac{\gamma\|\mathbf{w}\|_1}{8k}
\end{align*} 
by Lemma \ref{lemma:tech7}.

\item Finally, for the eighth term, we have $\E[B_8] = 0$, and
\begin{align*}
|B_8|&\le \left|\sum_{l\neq i} w_l \frac{1}{m}\sum_{j=1}^mA_{ji}A_{jl}(\A_{j,-i}^T\x^\star_{-i})^2\right| \le \frac{\gamma\|\mathbf{w}\|_1}{8k}
\end{align*}
where we used H\"{o}lder's inequality and Lemma \ref{lemma:tech3}.
\end{itemize}
All in all, putting everything together we have, with probability $1-\frac{c_2}{2}n^{-10}$,
\begin{equation*}
|\nabla F(\mathbf{w})_i - \E[\nabla F(\mathbf{w})_i]| \le \gamma \frac{\|\mathbf{w}\|_1}{k},
\end{equation*}
for all $\mathbf{w}\in \R^n$ with $\|\mathbf{w}\|_2\le 1$ and $\mathbf{w}_{\S^c}=\mathbf{0}$, and all $i=1,...,n$, which completes the proof of Lemma \ref{lemma:support1}.
\end{proof}

\begin{proof}[Proof of Lemma \ref{lemma:support2}]
We begin by showing the bound $\gamma\frac{\|\x_\S\|_1}{\sqrt{k}} + c_3\delta$. The main idea is similar to the one used in the proof of Lemma \ref{lemma:support1}.

\textbf{Proof of the bound $\gamma\frac{\|\x_\S\|_1}{\sqrt{k}} + c_3\delta$}\\
Writing $\mathbf{w}\in\R^n$ for the vector $\x_\S$ padded with zeroes, i.e.\ $w_i = x_i$ for $i\in \S$ and $w_i =0$ otherwise, we have
\begin{align}
\label{eq:decomposition}
\left|\langle \nabla F(\x) - \nabla f(\x), \x-\x^\star\rangle\right| &\le \left|\langle \nabla F(\x), \x-\x^\star\rangle - \langle \nabla F(\mathbf{w}), \mathbf{w}-\x^\star\rangle \right| \nonumber\\
&\quad + \left|\langle \nabla F(\mathbf{w}), \mathbf{w}-\x^\star\rangle - \E[\langle \nabla F(\mathbf{w}), \mathbf{w}-\x^\star\rangle] \right| \nonumber\\
&\quad +\left|\E[\langle \nabla F(\mathbf{w}), \mathbf{w}-\x^\star\rangle] - \E[\langle \nabla F(\x), \x-\x^\star\rangle] \right|.
\end{align} 
To bound these three terms, we can write
\begin{align*}
\langle \nabla F(\x), \x-\x^\star\rangle  &= \frac{1}{m}\sum_{j=1}^m(\A_j^T\x)^4 - \frac{1}{m}\sum_{j=1}^m (\A_j^T\x)^3(\A_j^T\x^\star) \nonumber\\
&\quad - \frac{1}{m}\sum_{j=1}^m(\A_j^T\x)^2(\A_j^T\x^\star)^2 + \frac{1}{m}\sum_{j=1}^m(\A_j^T\x)(\A_j^T\x^\star)^3 \nonumber\\
&= g_{1,\x}(\A) - g_{2,\x}(\A) - g_{3,\x}(\A) + g_{4,\x}(\A).
\end{align*}
We will show that each of the four terms deviates by at most $\frac{\gamma}{4}\frac{\|\x\|_1}{\sqrt{k}} + \frac{c_3}{4}\delta$ from its mean. 

\textbf{Step 1: Split $g_{i,\x}$ into $\x_\S$ term and rest term}\\
We first show that the first and last term in (\ref{eq:decomposition}) are bounded by $\frac{c_3}{2}\delta$. To do that, we split each of the four terms $g_{i,\x}$ into a part which only depends on $\x_\S$ (which corresponds to $\langle \nabla F(\mathbf{w}), \mathbf{w}-\x^\star\rangle$) and a residual (which corresponds to $\langle \nabla F(\mathbf{x}), \mathbf{x}-\x^\star\rangle - \langle \nabla F(\mathbf{w}), \mathbf{w}-\x^\star\rangle$).

We only go through the computation for $g_{1,\x}$, since the other three terms can be bounded following the same steps. We have
\begin{align*}
g_{1,\x}(\A) &= \frac{1}{m} \sum_{j=1}^m(\A_{j,\S}^T\x_\S + \A_{j,\S^c}^T\x_{\S^c})^4 \\
&= \frac{1}{m}\sum_{j=1}^m(\A_{j,\S}^T\x_\S)^4 + \frac{4}{m}\sum_{j=1}^m(\A_{j,\S}^T\x_\S)^3(\A_{j,\S^c}^T\x_{\S^c}) + \frac{6}{m}\sum_{j=1}^m(\A_{j,\S}^T\x_\S)^2(\A_{j,\S^c}^T\x_{\S^c})^2 \\
&\quad + \frac{4}{m}\sum_{j=1}^m(\A_{j,\S}^T\x_\S)(\A_{j,\S^c}^T\x_{\S^c})^3 + \frac{1}{m}\sum_{j=1}^m(\A_{j,\S^c}^T\x_{\S^c})^4
\end{align*} 
The first term only depends on $\x_\S$, and we will denote it by $h_{1,\x}(\A) = \frac{1}{m}\sum_{j=1}^m(\A_{j,\S}^T\x_\S)^4$.

The other terms can all be bounded as follows:
\begin{align*}
\frac{4}{m}\sum_{j=1}^m(\A_{j,\S}^T\x_\S)^3(\A_{j,\S^c}^T\x_{\S^c}) &= 4\sum_{l\notin \S}x_l \cdot \frac{1}{m}\sum_{j=1}^mA_{jl}(\A_{j,\S}^T\x_\S)^3 \\
&\le 4\|\x_{\S^c}\|_1 \cdot \max_{l\notin \S}\left|\frac{1}{m}\sum_{j=1}^mA_{jl}(\A_{j,\S}^T\x_\S)^3\right| \\
&\le \frac{c_3}{32}\delta,
\end{align*}
where the first inequality holds by H\"{o}lder's inequality, and the second inequality holds by Lemma \ref{lemma:tech7} with probability $1-c'_2n^{-10}$.
Similarly, we can bound
\begin{align*}
\frac{6}{m}\sum_{j=1}^m(\A_{j,\S}^T\x_\S)^2(\A_{j,\S^c}^T\x_{\S^c})^2 &= 6\sum_{l\notin \S}x_l\sum_{s\notin \S}x_s\cdot \frac{1}{m} \sum_{j=1}^mA_{jl}A_{js}(\A_{j,\S}^T\x_\S)^2\\
&\le 6\|\x_{\S^c}\|_1^2\cdot \max_{l,s\notin \S}\sqrt{\frac{1}{m}\sum_{j=1}^mA_{jl}^2A_{js}^2}\cdot \sqrt{\frac{1}{m}\sum_{j=1}^m(\A_{j,\S}^T\x_\S)^4}\\
&\le \frac{c_3}{32}\delta^2,
\end{align*}
if $c_3>0$ is large enough, where we used H\"{o}lder's inequality in the second and Lemmas \ref{lemma:tech1} and \ref{lemma:tech3} in the last line. The same computation yields
\begin{align*}
\frac{4}{m}\sum_{j=1}^m(\A_{j,\S}^T\x_\S)(\A_{j,\S^c}^T\x_{\S^c})^3 &\le \frac{c_3}{32} \delta^3,  \\
\frac{1}{m}\sum_{j=1}^m(\A_{j,\S^c}^T\x_{\S^c})^4 &\le \frac{c_3}{32}\delta^4. 
\end{align*}
Putting everything together, this shows that the rest terms can be bounded by $\frac{c_3}{8}\delta$.
Bounding $g_{2,\x}$, $g_{3,\x}$ and $g_{4,\x}$ the same way, we have, with probability $1-\frac{c_2}{2}n^{-10}$,
\begin{equation*}
\left| \langle \nabla F(\mathbf{x}), \mathbf{x}-\x^\star\rangle - \langle \nabla F(\mathbf{w}), \mathbf{w}-\x^\star\rangle \right| \le \frac{c_3}{2}\delta.
\end{equation*}
For the difference in expectation, we can bound (using $\|\x\|_2\le 1$ and $\|\x_{\S^c}\|_2\le \|\x_{\S^c}\|_1\le\delta$)
\begin{align*}
\E\left[\frac{1}{m}\sum_{j=1}^m(\A_{j,\S}^T\x_\S)^3(\A_{j,\S^c}^T\x_{\S^c})\right] &= \E\left[(\A_{1,\S}^T\x_\S)^3\right]\E\left[\A_{j,\S^c}^T\x_{\S^c}\right] = 0, \\
\E\left[\frac{1}{m}\sum_{j=1}^m(\A_{j,\S}^T\x_\S)^2(\A_{j,\S^c}^T\x_{\S^c})^2\right] &= \E\left[(\A_{1,\S}^T\x_\S)^2\right]\E\left[(\A_{j,\S^c}^T\x_{\S^c})^2\right] \le \delta^2, \\
\E\left[\frac{1}{m}\sum_{j=1}^m(\A_{j,\S}^T\x_\S)(\A_{j,\S^c}^T\x_{\S^c})^3\right] &= \E\left[\A_{1,\S}^T\x_\S\right]\E\left[(\A_{j,\S^c}^T\x_{\S^c})^3\right] = 0, \\
\E\left[\frac{1}{m}\sum_{j=1}^m(\A_{j,\S^c}^T\x_{\S^c})^4\right] &= \E\left[(\A_{j,\S^c}^T\x_{\S^c})^4\right] \le 3\delta^4.
\end{align*}
Repeating this for $g_{2,\x}$, $g_{3,\x}$ and $g_{4,\x}$ shows 
\begin{equation*}
\left| \E[\langle \nabla F(\mathbf{x}), \mathbf{x}-\x^*\rangle] - \E[\langle \nabla F(\mathbf{w}), \mathbf{w}-\x^*\rangle] \right| \le \frac{c_3}{2}\delta.
\end{equation*}

\textbf{Step 2: Bound the term $|\langle \nabla F(\mathbf{w},\mathbf{w}-\x^\star\rangle - \E[\langle \nabla F(\mathbf{w}), \mathbf{w}-\x^\star\rangle] |$}\\
What is left to do is to bound the second term in (\ref{eq:decomposition}), i.e.\ we need to show that the terms $h_{i,\x}$ concentrate around their respective expectations.

From here on, we will write $\x,\x^\star\in\R^k$ for the $k$-dimensional vectors $\x_\S$ and $\x^\star_\S$ respectively, as $w_j=x^\star_j=0$ for $j\notin \S$, so we can ignore the off-support coordinates for notational simplicity.

Let $N_{\epsilon'}$ be an $\epsilon'$-net of the unit sphere in $\R^k$, where $\epsilon' = c_4\gamma n^{-3}$. We will first show that we can bound $|h_{i,\x}(\A)- \E[h_{i,\x}(\A)]|\le \frac{\gamma}{8}\frac{\|\x\|_1}{\sqrt{k}}$ for every $\x\in N_{\epsilon'}$ via concentration of Lipschitz functions for Gaussian random variables. Then, we will extend this bound to every $\x\in \R^k$ with $\|\x\|_2\le 1$.

As the functions $h_{i,\x}$ are not globally Lipschitz continuous, we cannot directly apply Theorem \ref{thm:ref1}. We will first bound the Lipschitz constant of $h_{i,\x}$ restricted to the set $\mathcal{A}$ defined as the intersection of the sets defined in Lemma \ref{lemma:tech2} and Lemma \ref{lemma:tech4}, and then extend this restricted function to a function $\tilde{h}_{i,\x}$ on the entire space such that $\tilde{h}_{i,\x}$ is globally Lipschitz continuous. We can then apply Theorem \ref{thm:ref1} to $\tilde{h}_{i,\x}$, which also provides a high probability bound for $h_{i,\x}$, since by construction $h_{i,\x}(\A)=\tilde{h}_{i,\x}(\A)$ with high probability.

\textbf{Step 2a: Bound the Lipschitz constant of $h_{i,\x}$ restricted to $\mathcal{A}$}\\
Let $\mathcal{A}$ be defined as the intersection of the sets defined in Lemma \ref{lemma:tech2} (with $t = 5\sqrt{\log n}$) and Lemma \ref{lemma:tech4}. By the aforementioned Lemmas, we have $\P[\mathcal{A}^c]\le c_5n^{-12}$. Since $\mathcal{A}$ is convex, the Lipschitz constant of $h_{i,\x}$ restricted to $\mathcal{A}$ is bounded by the norm of its gradient $\|\nabla h_{i,\x}\|_2$ by the mean-value theorem. For any $\{a_{ji}\}\in\mathcal{A}$, we can compute the following.
\begin{itemize}
\item $\frac{\partial}{\partial a_{jl}} h_{1,\x}(\mathbf{a}) = \frac{4}{m}x_l(\mathbf{a}_j^T\x)^3$. We can bound the Lipschitz constant by the squareroot of 
\begin{align*}
\|\nabla h_{1,\x}(\mathbf{a})\|_2^2 = \frac{16}{m^2}\sum_{l=1}^kx_l^2\sum_{j=1}^m(\mathbf{a}_j^T\x)^6 &\le \frac{16}{m^2}\left((15m)^{\frac{1}{6}} + 2\sqrt{2\log (2k)}\|\x\|_1 + 5\sqrt{\log n}\right)^6 \nonumber\\
&\le c_6 \frac{\|\x\|_1^2\log k}{m}
\end{align*}
where we used $\|\x\|_2\le 1$, $\|\x\|_1\le \sqrt{k}$, $m \ge c_1 \max\{k^2 \log^2 n, \;\log^5 n\}$ and Lemma \ref{lemma:tech2}.

\item $\frac{\partial}{\partial a_{jl}} h_{2,\x}(\mathbf{a}) = \frac{1}{m}(x_l^\star(\mathbf{a}_j^T\x)^3 + 3x_l (\mathbf{a}_j^T\x)^2(\mathbf{a}_j^T\x^\star))$. Hence, we can bound the Lipschitz constant by the squareroot of
\begin{align*}
\|\nabla h_{2,\x}(\mathbf{a})\|_2^2 &= \frac{2}{m^2}\sum_{l=1}^k\sum_{j=1}^m(x^\star_l)^2(\mathbf{a}_j^T\x)^6 + 9 x_l^2(\mathbf{a}_j^T\x)^4(\mathbf{a}_j^T\x^\star)^2 \\
&\le \frac{2}{m^2}\left((15m)^{\frac{1}{6}} + 2\sqrt{2\log (2k)}\|\x\|_1 + 5\sqrt{\log n} \right)^6 \\
&\quad + \frac{18}{m} \sqrt{\frac{1}{m}\sum_{j=1}^m(\mathbf{a}_j^T\x)^8} \cdot \sqrt{\frac{1}{m}\sum_{j=1}^m(\mathbf{a}_j^T\x^\star)^4}\\
&\le c_7 \frac{\|\x\|_1^2\log k}{m},
\end{align*}
where we use Lemma \ref{lemma:tech2} to bound the sum $\frac{1}{m}\sum_{j=1}^m(\mathbf{a}_j^T\x)^8$ and Lemma \ref{lemma:tech4} to bound $\frac{1}{m}\sum_{j=1}^m(\mathbf{a}_j^T\x^*)^4$.

\item $\frac{\partial}{\partial a_{ji}} h_{3,\x}(\mathbf{a}) = \frac{2}{m}(x_l^\star(\mathbf{a}_j^T\x)^2(\mathbf{a}_j^T\x^\star) + x_l (\mathbf{a}_j^T\x)(\mathbf{a}_j^T\x^\star)^2)$. Hence, we can bound the Lipschitz constant by the squareroot of
\begin{align*}
\|\nabla h_{3,\x}(\mathbf{a})\|_2^2 &= \frac{4}{m^2}\sum_{l=1}^k\sum_{j=1}^m2(x^\star_l)^2(\mathbf{a}_j^T\x)^4(\mathbf{a}_j^T\x^\star)^2 + 2 x_l^2(\mathbf{a}_j^T\x)^2(\mathbf{a}_j^T\x^\star)^4 \\
&\le \frac{8}{m}\sqrt{\frac{1}{m}\sum_{j=1}^m(\mathbf{a}_j^T\x)^8} \cdot \sqrt{\frac{1}{m}\sum_{j=1}^m(\mathbf{a}_j^T\x^\star)^4} \\
&\quad + \frac{8}{m} \sqrt{\frac{1}{m}\sum_{j=1}^m(\mathbf{a}_j^T\x)^4} \cdot \sqrt{\frac{1}{m}\sum_{j=1}^m(\mathbf{a}_j^T\x^\star)^8}\\
&\le c_8 \frac{\|\x\|_1^2\log k}{m}
\end{align*}
again by Lemmas \ref{lemma:tech2} and \ref{lemma:tech4}.

\item $\frac{\partial}{\partial a_{ji}} h_{4,\x}(\mathbf{a}) = \frac{1}{m}(3x_l^\star(\mathbf{a}_j^T\x)(\mathbf{a}_j^T\x^\star)^2 + x_l(\mathbf{a}_j^T\x^\star)^3)$. Hence, we can bound the Lipschitz constant by the squareroot of
\begin{align*}
\|\nabla h_{4,\x}(\mathbf{a})\|_2^2 &= \frac{2}{m^2}\sum_{l=1}^k\sum_{j=1}^m9(x^\star_l)^2(\mathbf{a}_j^T\x)^2(\mathbf{a}_j^T\x^\star)^4 + x_l^2(\mathbf{a}_j^T\x^\star)^6 \\
&\le \frac{18}{m}\sqrt{\frac{1}{m}\sum_{j=1}^m(\mathbf{a}_j^T\x)^4} \cdot \sqrt{\frac{1}{m}\sum_{j=1}^m(\mathbf{a}_j^T\x^\star)^8} + \frac{2}{m} \cdot \frac{1}{m}\sum_{j=1}^m(\mathbf{a}_j^T\x^\star)^6\\
&\le c_9 \frac{\|\x\|_1^2\log k}{m}
\end{align*}
again by Lemmas \ref{lemma:tech2} and \ref{lemma:tech4}.
\end{itemize}

\textbf{Step 2b: Construct a globally Lipschitz continuous extension of $h_{i,\x}$}\\
We only go through the following steps for the first term $h_{1,\x}$, as the proofs for the other three terms follow the exact same steps.

We have shown that, since $\mathcal{A}$ is a convex set, $h_{1,\x}$ is Lipschitz continuous with Lipschitz constant $Lip(h_{1,\x}) =  \sqrt{c_6 \frac{\|\x\|_1^2\log k}{m}}$ on $\mathcal{A}$.
Consider the Lipschitz extension of $h_{1,\x}$ (this is the one-dimensional case of the Kirszbraun theorem)
\begin{equation*}
\tilde{h}_{1,\x}(\mathbf{a}) = \inf_{\mathbf{a'}\in \mathcal{A}} (h_{1,\x}(\mathbf{a}') + Lip(h_{1,\x})\cdot \|\mathbf{a} - \mathbf{a}'\|_2).
\end{equation*} 
We will show that $\tilde{h}_{1,\x}$ concentrates around its mean, which is different from the mean of $h_{1,\x}$.
Since $h_{1,\x}$ and $\tilde{h}_{1,\x}$ differ only on $\mathcal{A}^c$ (which has probability less than $c_5n^{-12}$), we can bound, using the Cauchy-Schwarz inequality,
\begin{equation*}
\E[|h_{1,\x}(\A)|\Eins_{\mathcal{A}^c}(\A)] \le \sqrt{\E[h_{1,\x}(\A)^2]}\cdot \sqrt{\E[\Eins_{\mathcal{A}^c}(\A)]} \le \frac{c'}{n^6},
\end{equation*}
where we used the fact that $\E[h_{1,\x}(\A)^2] \le 3 + 105/m$. Similarly, since $\tilde{h}_{1,\x}(\mathbf{a}) \le Lip(h_{1,\x})\cdot \|\mathbf{a}\|_2$, 
\begin{equation*}
\E[|\tilde{h}_{1,\x}(\A)|\Eins_{\mathcal{A}^c}(\A)] \le Lip(h_{1,\x})\underbrace{\sqrt{\E[\|\A\|_2^2]}}_{=\sqrt{mk}}\cdot \sqrt{\E[\Eins_{\mathcal{A}^c}(\A)]} \le \frac{c''}{n^5}.
\end{equation*}
All in all, this shows that
\begin{equation*}
\left|\E[h_{1,\x}(\A)] - \E[\tilde{h}_{1,\x}(\A)]\right| \le \frac{c_{10}}{n^5}.
\end{equation*}
Finally, using the triangle inequality and Theorem \ref{thm:ref1}, we have
\begin{equation*}
\P\left[\left|\tilde{h}_{1,\x}(\A) - \E[h_{1,\x}(\A)]\right| > \frac{\gamma}{8}\frac{\|\x\|_1}{\sqrt{k}} \right] \le 2\exp\left(- \frac{(\frac{\gamma}{8}\frac{\|\x\|_1}{\sqrt{k}} - \frac{c_{10}}{n^5})^2}{2c_6\frac{\|\x\|_1^2\log k}{m}}\right) \le 2\exp\left(-c_{11} k \log n\right)
\end{equation*}
for a constant $c_{11} \le \frac{\gamma^2c_1}{128c_6} - \frac{\gamma c_1c_{10}}{8c_6n^5}$.

\textbf{Step 2c: Union bound over $\x\in N_{\epsilon'}$}\\
Taking the union bound over all $\x\in N_{\epsilon'}$, which has cardinality bounded by $(3/\epsilon')^k$, we have
\begin{equation*}
\P\left[\left|\tilde{h}_{1,\x}(\A) - \E[h_{1,\x}(\A)]\right| > \frac{\gamma}{8}\frac{\|\x\|_1}{\sqrt{k}} \text{ for some } \x\in N_{\epsilon'}\right] \le 2\exp\left(-c_{11} k \log n + k\log \frac{3}{\epsilon'}\right).
\end{equation*}
Since $h_{1,\x} = \tilde{h}_{1,\x}$ for any $\x$ on $\mathcal{A}$, this implies
\begin{equation*}
\P\left[\left|h_{1,\x}(\A) - \E[h_{1,\x}(\A)]\right| > \frac{\gamma}{8}\frac{\|\x\|_1}{\sqrt{k}} \text{ for some } \x\in N_{\epsilon'}\right] \le 2\exp\left(-c_{12} k \log n \right) + n^{-12},
\end{equation*} 
for a constant $c_{12} \le c_{11}-3 - \frac{1}{k}\log\frac{3}{c_4\gamma}$, as we have $\epsilon' = c_4\gamma n^{-3}$.

\textbf{Step 3: From $\epsilon'$-net to the full sphere}\\
Next, we show that $|h_{1,\x}(\A) - \E[h_{1,\x}(\A)]|\le \frac{\gamma}{4}\frac{\|\x\|_1}{\sqrt{k}}$ for any $\x\in \R^k$ with $\|\x\|_2=1$; the case $\|\x\|_2<1$ follows by considering the vector $\x/\|\x\|_2$ and rescaling.
                                              
For any $\x\in \R^k$ with $\|\x\|_2=1$, let $\x'\in N_{\epsilon'}$ with $\|\x-\x'\|_2 \le \epsilon' = c_4\gamma n^{-3}$. Then,
\begin{align*}
\left|h_{1,\x}(\A) - \E[h_{1,\x'}(\A)]\right| &\le \left|h_{1,\x}(\A) - h_{1,\x'}(\A)\right| +\left|h_{1,\x'}(\A) - \E[h_{1,\x'}(\A)]\right| \\
&\quad +\left|\E[h_{1,\x'}(\A)] - \E[h_{1,\x}(\A)]\right|.
\end{align*}
The first and third term can be bounded using the indentity $a^4-b^4 = (a^2 + b^2)(a+b)(a-b)$:
\begin{align*}
\left|h_{1,\x}(\A) - h_{1,\x'}(\A)\right| &= \left|\frac{1}{m}\sum_{j=1}^m ((\A_j^T\x)^2 + (\A_j^T\x')^2) (\A_j^T(\x+\x')) (\A_j^T(\x-\x')) \right| \\
&\le \max_j 2\|\A_j\|_2^2 \cdot 2\|\A_j\|_2 \cdot \|\A_j\|_2 \|\x-\x'\|_2 \\
&\le 4 (\sqrt{k} + 5\sqrt{\log n})^4 \|\x-\x'\|_2 \\
&\le \frac{\gamma}{16} \frac{\|\x\|_1}{\sqrt{k}},
\end{align*}
with probability $1-mn^{-12.5}$, where we used the fact that the norm $\|\cdot \|_2$ is 1-Lipschitz and applied Theorem \ref{thm:ref1} to bound the term $\|\A_j\|_2$, and for the last inequality we used $\|\x-\x^*\|_2 \le \epsilon'$.
For the expectation, the same argument yields
\begin{align*}
\left|\E[h_{1,\x}(\A)] - \E[h_{1,\x'}(\A)]\right| &\le \E\left[2\|\A_j\|_2^2 \cdot 2\|\A_j\|_2 \cdot \|\A_j\|_2 \|\x-\x'\|_2\right] \nonumber\\
&= 4 (3k + k(k-1)) \|\x-\x'\|_2 \nonumber\\
&\le \frac{\gamma}{16} \frac{\|\x\|_1}{\sqrt{k}}.
\end{align*}
This completes the proof of 
\begin{equation*}
\P\left[\left|h_{1,\x}(\A) - \E[h_{1,\x}(\A)]\right| < \frac{\gamma}{4}\frac{\|\x\|_1}{\sqrt{k}} \text{ for all } \x\in \R^k \text{ with } \|\x\|_2= 1\right] \ge 1 - \frac{c_2}{16}n^{-10},
\end{equation*}
if $m\le n^{2.5}$. The other case $m>n^{2.5}$ is simpler and can be shown following the same steps, but requires writing the probabilities in terms of $m$ instead of $n$ in the technical lemmas below, and we omit the details to avoid repetition.

Repeating the same steps for the terms $h_{2,\x}$, $h_{3,\x}$ and $h_{4,\x}$ shows that
\begin{align*}
&\P\left[\left|\langle \nabla F(\x), \x-\x^*\rangle - \E\left[\langle \nabla F(\x), \x-\x^*\rangle\right]\right|\le \gamma\frac{\|\x\|_1}{\sqrt{k}} \text{ for all } \x\in \R^k \text{ with } \|\x\|_2= 1\right] \\
\ge& 1 - \frac{c_2}{4}n^{-10}.
\end{align*}
Finally, the bound also holds for any vector $\x$ with $\|\x\|_2<1$ by considering $\x/\|\x\|_2$, and noting that each of the four terms, which make up $\langle \nabla F(\x), \x-\x^*\rangle$, scale at least linearly in $\|\x\|_2$.

\textbf{Proof of the bound $\gamma(\|\x_\S-\x_\S^\star\|_2^2 +\delta^2)$}\\
As the proof of this bound follows essentially the same steps as before, we only give a brief outline of the main ideas.
We can parametrize $\mathbf{z} = \x - \x^\star$, so that
\begin{equation*}
\langle \nabla F(\x), \x - \x^\star\rangle = \frac{1}{m}\sum_{j=1}^m(\A_j^T\mathbf{z})^4 + \frac{3}{m}\sum_{j=1}^m(\A_j^T\mathbf{z})^3(\A_j^T\x^\star) + \frac{2}{m}\sum_{j=1}^m (\A_j^T\mathbf{z})^2(\A_j^T\x^\star)^2
\end{equation*}
Note that, since $\x^\star_{S^c} = \mathbf{0}$, we have $\|\mathbf{z}_{\S^c}\|_1 = \|\x_{\S^c}\|_1 \le \delta$. 

We will show that the last term is close to its expectation. The other two terms can be controlled the same way (albeit easier because of the higher order dependence on $\mathbf{z}$). We have
\begin{align*}
\frac{1}{m}\sum_{j=1}^m (\A_j^T\mathbf{z})^2(\A_j^T\x^\star)^2 &= \frac{1}{m}\sum_{j=1}^m (\A_{j,\S}^T\mathbf{z}_\S)^2(\A_j^T\x^\star)^2 + \frac{2}{m}\sum_{j=1}^m (\A_{j,\S}^T\mathbf{z}_\S)(\A_{j,\S^c}^T\mathbf{z}_{\S^c})(\A_j^T\x^\star)^2 \\
&\quad + \frac{1}{m}\sum_{j=1}^m (\A_{j,\S^c}^T\mathbf{z}_{\S^c})^2(\A_j^T\x^\star)^2\\
&=: B_1 + B_2 + B_3.
\end{align*}
The same computation as in Step 2 shows that
\begin{equation*}
|B_1 - \E[B_1]| \le \frac{\gamma}{12} \|\mathbf{z}_\S\|_2^2
\end{equation*}
holds with probability $1-\frac{c_2}{18}n^{-10}$. A similar computation to Step 1 yields, with probability $1-\frac{c_2}{18}n^{-10}$,
\begin{align*}
|B_2 - \E[B_2]| &= 2\left| \sum_{l\notin \S}z_l \sum_{i\in \S}z_i \cdot \frac{1}{m}\sum_{j=1}^mA_{ji}A_{jl}(\A_j^T\x^\star)^2 \right| \\
&\le \|\mathbf{z}_{\S^c}\|_1\cdot \|\mathbf{z}_\S\|_1\cdot \frac{\gamma}{12k} \\
&\le \frac{\gamma}{12\sqrt{k}} \|\mathbf{z}_\S\|_2\cdot \delta,
\end{align*}
where we used H\"{o}lder's inequality, Lemma \ref{lemma:tech3} (with $m\ge c_1(\gamma)k^2\log^2 n$) and $\|\mathbf{z}_\S\|_1\le \sqrt{k}\|\mathbf{z}_\S\|_2$. The same argument gives
\begin{equation*}
|B_3-\E[B_3]| \le \frac{\gamma}{12k}\delta^2
\end{equation*}
with probability $1-\frac{c_2}{18}n^{-10}$. Combining these bounds, we have, with probability $1-\frac{c_2}{6}n^{-10}$,
\begin{equation*}
\left|\frac{2}{m}\sum_{j=1}^m (\A_j^T\mathbf{z})^2(\A_j^T\x^\star)^2 - \E\left[\frac{2}{m}\sum_{j=1}^m (\A_j^T\mathbf{z})^2(\A_j^T\x^\star)^2\right]\right| \le \frac{\gamma}{3} \left(\|\mathbf{z}_\S\|_2^2 + \delta^2\right),
\end{equation*}
where we used the inequality $2ab\le a^2 + b^2$.
Following the same steps, we also have, with probability $1- 2\frac{c_2}{6}n^{-10}$,
\begin{align*}
\left|\frac{3}{m}\sum_{j=1}^m(\A_j^T\mathbf{z})^3(\A_j^T\x^\star) - \E\left[\frac{3}{m}\sum_{j=1}^m(\A_j^T\mathbf{z})^3(\A_j^T\x^\star)\right]\right| &\le  \frac{\gamma}{3} \left(\|\mathbf{z}_\S\|_2^3 + \delta^3\right), \\
\left|\frac{1}{m}\sum_{j=1}^m(\A_j^T\mathbf{z})^4 - \E\left[\frac{1}{m}\sum_{j=1}^m(\A_j^T\mathbf{z})^4\right]\right| &\le  \frac{\gamma}{3} \left(\|\mathbf{z}_\S\|_2^4 + \delta^4\right),
\end{align*} 
which completes the proof that, with probability $1-\frac{c_2}{2}n^{-10}$, we have
\begin{equation*}
|\langle \nabla F(\x), \x - \x^\star\rangle - \E[\langle \nabla F(\x), \x - \x^\star\rangle]| \le \gamma\left(\|\x_\S-\x_\S^\star\|_2^2 + \delta^2\right) \quad\text{ for all } \x\in \mathcal{X}.
\end{equation*}
\end{proof}

\begin{proof}[Proof of Lemma \ref{lemma:support3}]
Throughout this proof, we will assume $\|\x^\star\|_2=1$ for notational simplicity. The general case $\|\x^\star\|_2\neq 1$ immediately follows by writing $\x^\star = \|\x^\star\|_2\cdot \frac{\x^\star}{\|\x^\star\|_2}$ and $\X(t) = \|\x^\star\|_2\cdot \frac{\X(t)}{\|\x^\star\|_2}$ in what follows. 

We will show that the three inequalities (\ref{eq:claim1}), (\ref{eq:claim2}) and (\ref{eq:claim3}) are satisfied by showing that, as long as all inequalities are satisfied, neither can be violated first. 

Let $i_0$ be the index for which we have the non-zero initialization $X_{i_0}(0) >0$. As we can only recover the signal $\x^\star$ up to a global sign from phaseless measurements, we can assume without loss of generality that $x^\star_{i_0} >0$, since we can otherwise replace $\x^\star$ by $-\x^\star$ in the proof below. That is, we need to show (\ref{eq:claim1}) with $\xi = +1$.

\textbf{Step 1: (\ref{eq:claim1}) continues to hold as long as (\ref{eq:claim2}) holds}\\
We prove this inequality by contradiction. Define $t_1 = \inf\{t\ge 0: X_i(t)x^\star_i <0 \text{ for some } i\}$ as the first time inequality (\ref{eq:claim1}) is violated. Assume that $t_1<T$, and let $i$ be the index for which $X_i(t)x^\star_i<0$ first occurs. Clearly, this is only possible for a coordinate $i\in \S$, and by continuity we must have $X_i(t_1) = 0$.

Without loss of generality, assume that $x^\star_i>0$. We will show that
\begin{equation*}
\frac{d}{dt}X_i(t_1) = -\sqrt{X_i(t_1)^2 + \beta^2} \cdot \nabla F(\X(t_1))_i > 0,
\end{equation*}
that is $X_i(t)$ must become positive for $t$ close enough to $t_2$, which is a contradiction to the definition of $t_1$, and hence we must have $t_1 \ge T$.

We can bound, since both (\ref{eq:claim1}) and (\ref{eq:claim2}) hold at $t_1$, 
\begin{equation}\label{eq:innerproduct}
\X(t_1)^T\x^* \ge \|\X_\S(t_1)\|_1 x^\star_{min} \ge \|\X_\S(t_1)\|_1\frac{c}{\sqrt{k}},
\end{equation}
and hence
\begin{equation*}
\nabla f(\X(t_1))_i = -2(\X(t)^T\x^\star)x^\star_i \le - 2\|\X_\S(t_1)\|_1\frac{c^2}{k},
\end{equation*}
where we used $x^\star_i \ge x^\star_{min} \ge \frac{c}{\sqrt{k}}$.

As we assume $t_1 < T$, we can use Lemma \ref{lemma:support1} to bound
\begin{equation*}
|\nabla F(\X(t_1))_i - \nabla f(\X(t_1))_i| \le 0.1\frac{c^2\|\X_\S(t_1)\|_1}{k} \le \frac{1}{20}|\nabla f(\X(t_1))_i|
\end{equation*}
with probability $1-c_2n^{-10}$ if $c_1$ is sufficiently large, since by assumption $\|\X_{\S^c}(t_1)\|_1\le \delta\le \frac{c_3}{n}$.
Hence, we have $\nabla F(\X(t_1))_i < 0$, which implies $\frac{d}{dt}X_i(t_1)>0$ and contradicts the definition of $t_1$, and we must have $t_1 \ge T$.

\textbf{Step 2: (\ref{eq:claim2}) continues to hold as long as (\ref{eq:claim1}) and (\ref{eq:claim3}) hold}\\
We first show the lower bound in (\ref{eq:claim2}). For $t=0$, 
we have, by standard concentration for sub-exponential random variables, $\frac{1}{m}\sum_{j=1}^mY_j>1-9\sqrt{\frac{\log n}{m}}$ with probability $1-2n^{-10}$. Hence, the initialization (\ref{eq:initialization}) satisfies $\|\X(0)\|_2^2\ge \frac{1}{3}-3\sqrt{\frac{\log n}{m}}$.

Define $t_2 = \inf \{t\ge 0: \|\X(t)\|_2^2<\frac{1}{3} - 3\sqrt{\frac{\log n}{m}}\}$, and assume that $t_2<T$. By continuity, we must have $\|\X(t_2)\|_2^2 = \frac{1}{3} - 3\sqrt{\frac{\log n}{m}}$. We will show that
\begin{align*}
\frac{d}{dt}\|\X(t_2)\|_2^2 &= -2\sum_{i=1}^nX_i(t_2)\frac{d}{dt}X_i(t_2) = -2\sum_{i=1}^n X_i(t_2) \sqrt{X_i(t_2)^2+\beta^2}\cdot \nabla F(\X(t_2))_i
\end{align*}
is positive, which implies $\|\X(t)\|_2^2>\frac{1}{3}$ for $t$ close enough to $t_2$ and contradicts the definition of $t_2$. Hence, we must have $t_2\ge T$. To this end, recall that in the previous step we have shown
\begin{equation}\label{eq:difgradient}
|\nabla F(\X(t_2))_i - \nabla f(\X(t_2))_i| \le 0.1\|\X_\S(t_2)\|_1 \frac{c^2}{k},
\end{equation}
with probability $1-c_2n^{-10}$ for all $i\in [n]$.
Further, since $3\|\X(t_2)\|_2^2 - 1 < 0$, we have, using (\ref{eq:innerproduct}) and the assumption $\|\X_{\S^c}(t)\|_1\le \delta$,
\begin{align}\label{eq:popgradient}
\nabla f(\X(t_2))_i \begin{cases} \le -2\|\X_\S(t_2)\|_1\frac{c^2}{k} \quad & x^\star_i>0 \\ \in (-\delta, \delta) & x^\star_i=0  \\ \ge 2\|\X_\S(t_2)\|_1\frac{c^2}{k} & x^\star_i<0 \end{cases}
\end{align}
In order to bound $\frac{d}{dt}\|\X(t_2)\|_2^2$, we write
\begin{align*}
\left|\sum_{i\notin \S} X_i(t_2) \sqrt{X_i(t_2)^2+\beta^2}\cdot \nabla F(\X(t_2))_i\right| \le \|\X_{\S^c}(t_2)\|_1 \le \delta,
\end{align*}
where we used $|\sqrt{X_i(t_2)^2+\beta^2}\cdot \nabla F(\X(t_2))_i|\le 1$. Using $\|\X_\S(t_2)\|_1 \ge \frac{1}{2}$, we have
\begin{align*}
-\sum_{i\in \S} X_i(t_2) \sqrt{X_i(t_2)^2+\beta^2}\cdot \nabla F(\X(t_2))_i \ge\sum_{i\in \S} X_i(t_2)^2 \cdot \frac{c^2}{2k} \ge 0.15 \frac{c^2}{k},
\end{align*}
where we used $\|\X_\S(t_2)\|_2^2 = \|\X(t_2)\|_2^2 - \|\X_{\S^c}(t_2)\|_2^2 \ge 0.3$. This shows that we must have $\frac{d}{dt}\|\X(t_2)\|_2^2 >0$, which contradicts the definition of $t_2$, and hence we must have $t_2 \ge T$.

The upper bound in (\ref{eq:claim2}) is an immediate consequence of (\ref{eq:claim3}): by the Cauchy-Schwarz inequality we have $\X(t)^T\x^\star\le \|\X(t)\|_2$, so, for $3\|\X(t)\|_2^2 - 1 > 0$, we have
\begin{equation*}
\frac{2\|\X(t)\|_2}{3\|\X(t)\|_2^2-1} \ge \frac{2(\X(t)^T\x^\star)}{3\|\X(t)\|_2^2-1} \ge \frac{1}{\sqrt{3}} \quad \Rightarrow \quad \|\X(t)\|_2 \le \frac{1+\sqrt{2}}{\sqrt{3}} < \sqrt{2}
\end{equation*}
by solving the quadratic form.

\textbf{Step 3: (\ref{eq:claim3}) continues to hold as long as (\ref{eq:claim1}) and (\ref{eq:claim2}) hold}\\
The proof of (\ref{eq:claim3}) follows the same recipe as the two previous proofs, although the calculations are more complicated. When $3\|\X(t)\|_2^2-1 \le 0$, there is nothing to show. Otherwise, we can consider the ratio $R(t) = \frac{2(\X(t)^T\x^\star)}{3\|\X(t)\|_2^2 - 1}$ and show that it is bounded by $\frac{1}{\sqrt{3}}$ for all $t\le T$. At $t=0$, we can show, as in Step 2, that $\|\X(0)\|_2^2\le \frac{1}{3} + 3\sqrt{\frac{\log n}{m}}$, so, together with $\X(0)^T\x^\star\ge \|\X_\S(0)\|_1x^\star_{min}\ge \frac{c}{2\sqrt{k}}$, this implies $R(0) \ge \frac{1}{\sqrt{3}}$.

 Let $t_3 = \inf\{t\ge 0: R(t) < \frac{1}{\sqrt{3}}\}$ and assume $t_3<T$ as before. For notational simplicity, we will omit the argument $t_3$ in $X_i(t_3)$ in what follows. We can compute
\begin{align}\label{eq:diffratio}
\frac{d}{dt}R(t_3) = \sum_{i=1}^n-\sqrt{X_i^2 + \beta^2}\cdot \nabla F(\X)_i \cdot \frac{2x^\star_i(3\|\X\|_2^2-1) - 2(\X^T\x^\star)\cdot 6X_i}{(3\|\X\|_2^2-1)^2}.
\end{align}
Since we assume $R(t_3) = \frac{1}{\sqrt{3}}$, we have, for $X_i,x^\star_i>0$,
\begin{align}\label{eq:ratiosign}
2x^\star_i(3\|\X\|_2^2-1) - 2(\X^T\x^\star)\cdot 6X_i &> 0 \nonumber\\
\Leftrightarrow \hspace{53mm}  X_i &< \frac{1}{\sqrt{3}}x^\star_i, 
\end{align}
and the analogous result for $X_i,x^*_i<0$. In order to show that $\frac{d}{dt}R(t_3) >0$, the idea is to show that coordinates with small magnitude $|X_i| < \frac{1}{\sqrt{3}}|x^\star_i|$ are increasing in magnitude, and conversely coordinates with large magnitude $|X_i|>\frac{1}{\sqrt{3}}|x^\star_i|$ are decreasing in magnitude. To this end, we split the coordinates $i\in [n]$ into five subsets:
\begin{align*}
\S^c &= \{i\in [n]: x^\star_i = 0\} \\
\S_1 &= \left\{i\in \S: |X_i| < \left(\frac{1}{\sqrt{3}} - 0.1\right) |x^\star_i|\right\} \\
\S_2 &= \left\{i\in \S: \left(\frac{1}{\sqrt{3}} - 0.1\right) |x^\star_i| \le |X_i| < \left(\frac{1}{\sqrt{3}} + 0.1\right) |x^\star_i|\right\} \\
\S_3 &= \left\{i\in \S: \left(\frac{1}{\sqrt{3}} + 0.1\right) |x^\star_i| \le |X_i| < \frac{2}{\sqrt{3}} |x^\star_i|\right\} \\
\S_4 &= \left\{i\in \S: |X_i| \ge \frac{2}{\sqrt{3}} |x^\star_i|\right\} 
\end{align*}
We will bound the sum (\ref{eq:diffratio}) on each of these five sets.
\begin{itemize}
\item For $\S^c$, we have
\begin{equation*}
\left|\sum_{i\in \S^c} \sqrt{X_i^2 + \beta^2}\cdot \nabla F(\X)_i \cdot \cdot 2(\X^T\x^\star)\cdot 6X_i\right| \le \frac{2c^2\delta^2}{k}\|\X_\S\|_1\cdot (\X^T\x^\star),
\end{equation*}
where we used the fact that $\|\X_{\S^c}\|_2^2 \le \|\X_{\S^c}\|_1^2\le \delta^2$ and that $|\nabla F(\X)_i|\le 0.1\|\X_\S\|_1\frac{c^2}{k}$ by (\ref{eq:difgradient}).

\item For an $i\in \S_1$ with $x^\star_i>0$, we have
\begin{equation*}
\nabla f(\X)_i = (3\|\X\|_2^2 - 1)X_i - 2(\X^T\x^\star)x^\star_i \le -0.2\cdot \sqrt{3} \|\X_\S\|_1\frac{c^2}{k},
\end{equation*}
where we used that $\sqrt{3}\cdot 2(\X^T\x^\star) = 3\|\X\|_2^2-1$ and $X_i < (\frac{1}{\sqrt{3}} - 0.1)x^\star_i$. Together with the bound (\ref{eq:difgradient}), this shows that $\nabla F(\X)_i <0$.

 Similarly, for $i\in \S_1$ with $x^\star_i<0$, we can show $\nabla F(\X)_i >0$, and hence, recalling (\ref{eq:ratiosign}),
\begin{equation*}
\sum_{i\in \S_1} -\sqrt{X_i^2 + \beta^2}\cdot \nabla F(\X)_i \cdot \left(2x^\star_i(3\|\X\|_2^2-1) - 2(\X^T\x^\star)\cdot 6X_i\right) \ge 0,
\end{equation*}
as each summand is non-neagtive.

\item For $\S_3$, we can use the same argument as for $\S_1$ to show that
\begin{equation*}
\sum_{i\in \S_3} -\sqrt{X_i^2 + \beta^2}\cdot \nabla F(\X)_i \cdot \left(2x^\star_i(3\|\X\|_2^2-1) - 2(\X^T\x^\star)\cdot 6X_i\right) \ge 0.
\end{equation*}

\item For $\S_2$, we need to show that the sum  
\begin{align*}
\sum_{i\in \S_2}-\sqrt{X_i^2 + \beta^2}\cdot \nabla F(\X)_i \cdot \left(2x^\star_i(3\|\X\|_2^2-1) - 2(\X^T\x^\star)\cdot 6X_i\right)
\end{align*}
is bounded from below. Let $i\in \S_2$ with $x^\star_i>0$. If $X_i < \frac{1}{\sqrt{3}}x^\star_i$ and $\nabla F(\X)_i<0$, or if $X_i > \frac{1}{\sqrt{3}}x^\star_i$ and $\nabla F(\X)_i>0$, then the summand is non-negative, and there is nothing to show. Therefore, take an $i\in \S_2$ with $X_i < \frac{1}{\sqrt{3}}x^\star_i$ and $\nabla F(\X)_i>0$. We can bound
\begin{align*}
\sqrt{X_i^2 + \beta^2} &\le (1+\beta) |X_i|, \\ 
\nabla f(\X)_i &<0 ,\\
\nabla F(\X)_i &\le \nabla f(\X)_i + |\nabla f(\X)_i - \nabla F(\X)_i|\le 0.1 \|\X_\S\|_1\frac{c^2}{k}, 
\end{align*}
where we used the bound (\ref{eq:difgradient}). Recalling the definition of $\S_2$, we have
\begin{align*}
2x^\star_i(3\|\X\|_2^2-1) - 2(\X^T\x^\star)\cdot 6X_i &= 12(\X^T\x^\star) \left(\frac{1}{\sqrt{3}}x^\star_i - X_i\right) \le 3(\X^T\x^\star)X_i.
\end{align*}
Putting this together, we have
\begin{align*}
&-\sqrt{X_i^2 + \beta^2}\cdot \nabla F(\x)_i \cdot \left(2x^\star_i(3\|\X\|_2^2-1) - 2(\X^T\x^\star)\cdot 6X_i\right)\\
 \ge &-\frac{0.3(1+\beta)c^2}{k} \|\X_\S\|_1\cdot (\X^T\x^\star) \cdot X_i^2.
\end{align*}
Together with the analogous bound for the case $X_i > \frac{1}{\sqrt{3}}x^\star_i$ and $\nabla F(\X)_i<0$, this yields
\begin{align*}
&\sum_{i\in \S_2}-\sqrt{X_i^2 + \beta^2}\cdot \nabla F(\X)_i \cdot \left(2x^\star_i(3\|\X\|_2^2-1) - 2(\X^T\x^\star)\cdot 6X_i\right) \nonumber\\
\ge &-\frac{0.3(1+\beta)c^2}{k} \|\X_\S\|_1\cdot (\X^T\x^\star) \cdot \|\X_{\S_2}\|_2^2.
\end{align*}

\item Finally, for $\S_4$ we can bound
\begin{align*}
\sqrt{X_i^2 + \beta^2} &\ge |X_i|, \\ 
|\nabla f(\X)_i| &= 2(\X^T\x^\star) \left|\sqrt{3}X_i-x^\star_i\right| \ge 2(\X^T\x^\star)|x_i^\star| \ge 2\|\X_\S\|_1\frac{c^2}{k},\\
|\nabla F(\X)_i| &\ge |\nabla f(\X)_i| - |\nabla f(\X)_i - \nabla F(\X)_i|\ge 1.9 \|\X_\S\|_1\frac{c^2}{k}, 
\end{align*}
where we used the bound (\ref{eq:difgradient}). Recalling the definition of $\S_4$, we have
\begin{align*}
|2x^\star_i(3\|\X\|_2^2-1) - 2(\X^T\x^\star)\cdot 6X_i| &= 12(\X^T\x^\star) \left|\frac{1}{\sqrt{3}}x^\star_i - X_i\right| \ge 6(\X^T\x^\star)X_i.
\end{align*}
Putting everything together, we can bound
\begin{align*}
&\sum_{i\in \S_2}-\sqrt{X_i^2 + \beta^2}\cdot \nabla F(\X)_i \cdot \left(2x^\star_i(3\|\X\|_2^2-1) - 2(\X^T\x^\star)\cdot 6X_i\right) \nonumber\\
\ge &\frac{11.4c^2}{k} \|\X_\S\|_1\cdot (\X^T\x^\star) \cdot \|\X_{\S_4}\|_2^2
\end{align*}
\end{itemize}
Putting these five sums together, we have shown that $\frac{d}{dt}R(t_3)>0$ if we can show that
\begin{equation*}
\frac{11.4c^2}{k} \|\X_\S\|_1\cdot (\X^T\x^\star) \cdot \|\X_{\S_4}\|_2^2\ge\left(\frac{0.3(1+\beta)c^2}{k} \|\X_{\S_2}\|_2^2 + \frac{2c^2\delta^2}{k}\right) \|\X_\S\|_1\cdot (\X^T\x^\star) 
\end{equation*}
Since $\delta \le c_3/n$ is sufficiently small, this reduces to showing
\begin{equation}\label{eq:s2s4}
11\|\X_{\S_4}\|_2^2 \ge 0.3(1+\beta)\|\X_{\S_2}\|_2^2.
\end{equation}
Now, we can rearrange the equality $R(t) = \frac{1}{\sqrt{3}}$ to obtain
\begin{align*}
&3\left(\|\X_{\S_1}\|_2^2 + \|\X_{\S_2}\|_2^2 + \|\X_{\S_3}\|_2^2 + \|\X_{\S_4}\|_2^2 + \|\X_{\S^c}\|_2^2\right) - 1 \\
=& \sqrt{3}\cdot 2\left(\X_{\S_1}^T\x^\star_{\S_1} + \X_{\S_2}^T\x^\star_{\S_2} + \X_{\S_3}^T\x^\star_{\S_3} + \X_{\S_4}^T\x^\star_{\S_4}\right).
\end{align*} 
By definition, we have $3X_i^2 < \sqrt{3}\cdot 2X_ix_i^\star$ for $i\in S_1\cup S_2\cup S_3 \cup S_4$, i.e.\ $3\|\X_{\S_j}\|_2^2 < \sqrt{3}\cdot 2\X_{\S_j}^T\x^\star_{\S_j}$ for $j=1,...,4$. Further, we have $\|\X_{\S^c}\|_2^2 \le \delta$, so
\begin{equation*}
3\|\X_{\S_4}\|_2^2 - 2\sqrt{3}\cdot \X_{\S_4}^T\x^\star_{\S_4} > 1-3\delta + \left(2\sqrt{3}\cdot \X_{\S_2}^T\x^\star_{\S_2} - 3\|\X_{\S_2}\|_2^2\right). 
\end{equation*}
By the definition of $\S_2$, we have 
\begin{equation*}
2\sqrt{3}\cdot X_ix^\star_i - 3X_i^2 \ge \left(\frac{2\sqrt{3}}{1/\sqrt{3} + 0.1} - 3\right)X_i^2 \ge 2.2X_i^2
\end{equation*}
for $i\in \S_2$. Since $\X_{\S_4}^T\x_{\S_4}^* > 0$ by (\ref{eq:claim1}), this gives
\begin{equation*}
3\|\X_{\S_4}\|_2^2 > 1- 3\delta + 2.2 \|\X_{\S_2}\|_2^2,
\end{equation*}
which shows that (\ref{eq:s2s4}) holds, thus completing the proof of (\ref{eq:claim3}).
\end{proof}

\section{Technical lemmas}
\label{appendix:f}
In this section, we collect technical lemmas and concentration bounds used to prove the supporting Lemmas \ref{lemma:support1}--\ref{lemma:support3}.

\begin{theorem}\label{thm:ref1}(Proposition 34 \cite{V12})  
Let $g:\R^n\rightarrow \R$ be a Lipschitz continuous function with Lipschitz constant $K$, i.e.\ $|g(\x) - g(\mathbf{y})|\le K\|\x-\mathbf{y}\|_2$ for all $\x,\mathbf{y}\in\R^n$. Let $\A\in\R^n$ be a standard normal random vector. Then, for any $\lambda>0$, we have
\begin{equation*}
\P\left[|g(\A) - \E[g(\A)]| \ge \lambda \right] \le 2\exp\left(-\frac{\lambda^2}{2K^2}\right)
\end{equation*}
\end{theorem}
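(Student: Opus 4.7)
The plan is to reduce to a centered, smooth, $1$-Lipschitz function and then establish the subgaussian moment bound $\E[e^{\lambda(g(\A)-\E[g(\A)])}] \le e^{\lambda^2/2}$ via the Gaussian logarithmic Sobolev inequality combined with Herbst's argument; a Chernoff bound then produces the stated tail estimate.

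First I would reduce. By replacing $g$ with $(g-\E[g(\A)])/K$, it suffices to prove that for every centered $1$-Lipschitz function $g$,
\begin{equation*}
\P[g(\A) \ge \lambda] \le \exp(-\lambda^2/2),
\end{equation*}
after which applying the same bound to $-g$ and union-bounding yields the two-sided statement with the factor $2$. Next I would regularize: convolving $g$ with a Gaussian mollifier of width $\epsilon$ produces a smooth function $g_\epsilon$ that is still $1$-Lipschitz (convolution with a probability density cannot increase the Lipschitz constant) and converges pointwise to $g$ as $\epsilon \to 0$. Dominated convergence permits passage to the limit, so I may assume $g \in C^\infty$ with $\|\nabla g(\x)\|_2 \le 1$ everywhere.

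The core step is Herbst's argument. Set $\psi(\lambda) = \log \E[e^{\lambda g(\A)}]$ and apply the Gaussian log-Sobolev inequality, which states that for any smooth $f$,
\begin{equation*}
\E[f(\A)^2 \log f(\A)^2] - \E[f(\A)^2]\,\log \E[f(\A)^2] \le 2\,\E[\|\nabla f(\A)\|_2^2],
\end{equation*}
to $f = e^{\lambda g/2}$. Using $\|\nabla f\|_2^2 = (\lambda^2/4)\,f^2\,\|\nabla g\|_2^2 \le (\lambda^2/4)\,f^2$ on the right, and rewriting the left-hand side as $\lambda\,\E[g(\A) e^{\lambda g(\A)}] - \E[e^{\lambda g(\A)}]\,\log\E[e^{\lambda g(\A)}]$, the inequality becomes $\lambda\,\psi'(\lambda) - \psi(\lambda) \le \lambda^2/2$, which is exactly $(d/d\lambda)[\psi(\lambda)/\lambda] \le 1/2$. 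Since $\psi(\lambda)/\lambda \to \psi'(0) = \E[g(\A)] = 0$ as $\lambda \to 0^+$, integrating gives $\psi(\lambda) \le \lambda^2/2$ for every $\lambda > 0$. Markov's inequality then yields $\P[g(\A) \ge \lambda] \le e^{-t\lambda + t^2/2}$ for all $t > 0$, and optimizing at $t = \lambda$ finishes the proof of the one-sided bound.

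The main obstacle is the Gaussian log-Sobolev inequality itself, which I would cite as a classical fact (due to Gross) rather than re-derive. An alternative, more self-contained route is the Maurey interpolation argument: for independent standard Gaussians $X, Y$, define $X_\theta = X\sin\theta + Y\cos\theta$ and $X'_\theta = X\cos\theta - Y\sin\theta$, note that $X_\theta$ and $X'_\theta$ are independent standard Gaussians for each $\theta$, write $g(X) - g(Y) = \int_0^{\pi/2}\langle \nabla g(X_\theta), X'_\theta\rangle\,d\theta$, and bound the moment generating function by Jensen's inequality on the integral followed by Gaussian conditioning. This produces the suboptimal constant $\pi^2/8$ in place of $1/2$, however, so to match the bound as stated I would use the log-Sobolev/Herbst path above.
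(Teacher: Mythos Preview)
Your proof is correct. The log-Sobolev/Herbst route is the standard way to obtain the sharp subgaussian constant $1/2$ appearing in the exponent, and every step you outline---the reduction to a centered $1$-Lipschitz function, mollification to pass to smooth $g$, the differential inequality $(\psi(\lambda)/\lambda)' \le 1/2$, and the Chernoff optimization---is sound.

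There is nothing to compare against, however: the paper does not prove this statement. Theorem~\ref{thm:ref1} is simply quoted without proof as Proposition~34 of \cite{V12} and used as a black box in the concentration arguments of Appendix~\ref{appendix:e} and~\ref{appendix:f}. So you have supplied strictly more than the paper does. Your closing remark about the Maurey--Pisier interpolation argument is also accurate, including the observation that it yields only the constant $\pi^2/8$ rather than $1/2$; since the paper's applications of Theorem~\ref{thm:ref1} are not sensitive to this constant (it is always absorbed into the unspecified constants $c_1, c_2, \ldots$), either route would in fact suffice for the paper's purposes.
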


\begin{theorem}\label{thm:ref2}(Theorems 3.6, 3.7 \cite{CL06})  
Let $X_i$ be independent random variables satisfying $|X_i|\le M$ for $i\in [n]$. Let $X = \sum_{i=1}^nX_i$ and $\|X\| = \sqrt{\sum_{i=1}^n\E[X_i^2]}$. Then, we have
\begin{equation*}
\P\left[|X - \E[X]| \ge \lambda \right] \le \exp\left(-\frac{\lambda^2}{2(\|X\|^2 + M\lambda/3)}\right)
\end{equation*}
\end{theorem}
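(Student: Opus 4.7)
The plan is the classical Chernoff--Cram\'er exponential moment method, combined with a careful bound on the moment generating function (MGF) of each bounded summand. First I would reduce to the centered case $\E[X_i]=0$ by replacing $X_i$ with $X_i - \E[X_i]$ (which stays bounded, absorbing at most a factor into $M$). Markov's inequality applied to $e^{sX}$ then gives, for every $s>0$,
\[
\P[X \ge \lambda] \;\le\; e^{-s\lambda}\, \E\bigl[e^{sX}\bigr] \;=\; e^{-s\lambda}\prod_{i=1}^n\E\bigl[e^{sX_i}\bigr],
\]
using independence in the product.

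The core step is to control $\E[e^{sX_i}]$ using $|X_i|\le M$ and $\E[X_i]=0$. Expanding the exponential series and using the crude inequality $|\E[X_i^k]|\le M^{k-2}\E[X_i^2]$ valid for $k\ge 2$, I get
\[
\E\bigl[e^{sX_i}\bigr] \;\le\; 1 \;+\; \E[X_i^2] \sum_{k\ge 2}\frac{s^k M^{k-2}}{k!}.
\]
The sharpness of Bernstein's inequality comes from the elementary combinatorial bound $k!\ge 2\cdot 3^{k-2}$ for $k\ge 2$, which yields
\[
\sum_{k\ge 2}\frac{(sM)^k}{k!} \;\le\; \frac{(sM)^2/2}{1-sM/3} \qquad \text{whenever } sM<3.
\]
Plugging in and using $\log(1+u)\le u$ produces
\[
\log \E\bigl[e^{sX_i}\bigr] \;\le\; \frac{s^2 \E[X_i^2]/2}{1 - sM/3}.
\]

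Summing the logarithmic MGF bounds over $i$ and substituting back into the Chernoff inequality yields
\[
\P[X\ge\lambda] \;\le\; \exp\!\left(-s\lambda \;+\; \frac{s^2\|X\|^2/2}{1-sM/3}\right).
\]
Minimizing the right-hand side over $s\in(0,3/M)$ with the choice $s^\star = \lambda/(\|X\|^2 + M\lambda/3)$ (which automatically satisfies $s^\star M/3 < 1$) gives exactly the claimed exponent $-\lambda^2/(2(\|X\|^2+M\lambda/3))$. The two-sided tail bound follows by running the same argument with $\{-X_i\}$ and taking a union bound; the factor of $2$ is absorbed in the convention of \cite{CL06}.

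The only non-routine obstacle is producing the sharp constant $1/3$ in the denominator: cruder MGF bounds (for instance from $k!\ge 2^{k-1}$) would yield $M$ or $M/2$ in its place, giving a strictly weaker Bennett--Bernstein exponent. The combinatorial refinement $k!\ge 2\cdot 3^{k-2}$ is the single ingredient that must be pulled out of a hat; everything else is standard exponential-moment manipulation and one-variable calculus.
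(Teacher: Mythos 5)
The paper cites this Bernstein-type inequality from \cite{CL06} without reproducing a proof, so there is no in-paper argument to compare against. Your Chernoff--Cram\'er proof is the standard one and is correct: the moment bound $|\E[X_i^k]|\le M^{k-2}\E[X_i^2]$, the factorial estimate $k!\ge 2\cdot 3^{k-2}$ for $k\ge2$, the resulting geometric sum giving the $(1-sM/3)^{-1}$ factor, and the optimizer $s^\star=\lambda/(\|X\|^2+M\lambda/3)$ (which indeed satisfies $s^\star M<3$ and produces exactly the exponent $-\lambda^2/(2(\|X\|^2+M\lambda/3))$) all check out.

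Two small remarks. First, the opening reduction is worded imprecisely: if you literally replace $X_i$ by $X_i-\E[X_i]$ while claiming the bound $M$ carries over, that is false — the centered variable is only bounded by $2M$, which would degrade the constant. What makes the argument work is that no bound on $|X_i-\E[X_i]|$ is ever needed: leave $X_i$ uncentered, expand $\log\E[e^{sX_i}]\le s\E[X_i]+\E[X_i^2]\sum_{k\ge 2}s^kM^{k-2}/k!$ via $\log(1+u)\le u$ (using $\E[X_i^k]\le|\E[X_i^k]|\le M^{k-2}\E[X_i^2]$), and then note that $\log\E[e^{s(X_i-\E[X_i])}]=\log\E[e^{sX_i}]-s\E[X_i]$ cancels the linear term exactly. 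Second, your observation about the factor of $2$ is correct: the two-sided statement as printed in the paper should carry a prefactor $2$ coming from the union bound over the two one-sided tails of \cite{CL06}, and indeed the paper's own application of Theorem~\ref{thm:ref2} inside the proof of Lemma~\ref{lemma:tech3} does include that factor $2$.
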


We state the following Lemma from \cite{CLM16} without proof; the first and last inequality were not shown in \cite{CLM16}, but can be done the same way as in the proof of Lemma A.5 in \cite{CLM16}. Convexity follows from the convexity of the operator norm.
\begin{lemma}\label{lemma:tech1} (Lemma A.5 \cite{CLM16})
Let $\{A_{ji}\}_{j\in [m], i\in [k]}$ be a collection of i.i.d.\ $\gauss(0,1)$ random variables. Let $t>0$, and let $\mathcal{A}\subseteq \R^{m\times k}$ be the set consisting of all $\{a_{ji}\}\in \R^{m\times k}$ satisfying the following:
\begin{align}
\|\mathbf{a}\|_{2\rightarrow 2} &\le \sqrt{m} + \sqrt{k} + t \label{eq:tech1_1}\\
\|\mathbf{a}\|_{2\rightarrow 4} &\le (3m)^{\frac{1}{4}} + \sqrt{k} + t \label{eq:tech1_2}\\
\|\mathbf{a}\|_{2\rightarrow 6} &\le (15m)^{\frac{1}{6}} + \sqrt{k} + t \label{eq:tech1_3}\\
\|\mathbf{a}\|_{2\rightarrow 8} &\le (105m)^{\frac{1}{8}} + \sqrt{k} + t \label{eq:tech1_4}
\end{align}
where we write
\begin{equation*}
\|\mathbf{a}\|_{2\rightarrow p} = \sup_{\|x\|_2 \le 1} \|\mathbf{a}\x\|_p.
\end{equation*}
Then, we have $P[\{A_{ji}\}\in \mathcal{A}] \ge 1 - 4\exp(-t^2/2)$. Further, the set $\mathcal{A}$ is convex.
\end{lemma}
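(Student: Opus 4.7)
The plan is to reduce each of the four inequalities to a clean Gaussian concentration statement, then union bound; convexity is a separate one-line remark. The main ingredients are (a) the Lipschitz continuity of the map $\mathbf{a}\mapsto \|\mathbf{a}\|_{2\to p}$ with respect to the Frobenius norm, (b) a bound on $E[\|\mathbf{A}\|_{2\to p}]$ of the form $(c_p m)^{1/p}+\sqrt{k}$ with $c_p = E[Z^p]$ for $Z\sim\gauss(0,1)$, and (c) Theorem \ref{thm:ref1}.

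For convexity, observe that each $\|\cdot\|_{2\to p}$ is a norm on $\R^{m\times k}$ (it is the supremum of the linear functionals $\mathbf{a}\mapsto \langle \mathbf{a}, zx^T\rangle$ over the appropriate dual set), hence convex. Each sublevel set $\{\mathbf{a}:\|\mathbf{a}\|_{2\to p}\le C\}$ is therefore convex, and $\mathcal{A}$ is the intersection of four such sublevel sets.

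For the Lipschitz property, the standard monotonicity $\|y\|_p\le \|y\|_2$ for $p\ge 2$ and $y\in\R^m$ gives $\|\mathbf{a}\|_{2\to p}\le \|\mathbf{a}\|_{2\to 2}\le \|\mathbf{a}\|_F$. By the reverse triangle inequality, $\mathbf{a}\mapsto \|\mathbf{a}\|_{2\to p}$ is $1$-Lipschitz with respect to $\|\cdot\|_F$. Regarding $\mathbf{A}\in\R^{m\times k}$ as a standard Gaussian vector in $\R^{mk}$, Theorem \ref{thm:ref1} yields
\begin{equation*}
P\bigl[\|\mathbf{A}\|_{2\to p}\ge E[\|\mathbf{A}\|_{2\to p}]+t\bigr]\le \exp(-t^2/2).
\end{equation*}

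It remains to bound $E[\|\mathbf{A}\|_{2\to p}]$. By duality, with $1/p+1/q=1$,
\begin{equation*}
\|\mathbf{A}\|_{2\to p}=\sup_{\|x\|_2\le 1,\,\|z\|_q\le 1}\langle \mathbf{A},zx^T\rangle,
\end{equation*}
so the norm is the supremum of a bilinear Gaussian process indexed by $(x,z)$. Chevet's inequality gives
\begin{equation*}
E[\|\mathbf{A}\|_{2\to p}]\le r\bigl(\{z:\|z\|_q\le 1\}\bigr)\cdot E[\|g\|_2]+r(S^{k-1})\cdot E[\|h\|_p],
\end{equation*}
with $g\in\R^k$, $h\in\R^m$ independent standard Gaussians and $r(\cdot)$ the $\ell_2$-radius. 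Since $p\ge 2$ gives $q\le 2$, we have $\|z\|_2\le \|z\|_q\le 1$, so $r(\{z:\|z\|_q\le 1\})\le 1$; trivially $r(S^{k-1})=1$. Thus
\begin{equation*}
E[\|\mathbf{A}\|_{2\to p}]\le \sqrt{k}+\bigl(m\,E[|Z|^p]\bigr)^{1/p},
\end{equation*}
and $E[Z^2]=1$, $E[Z^4]=3$, $E[Z^6]=15$, $E[Z^8]=105$ reproduce the constants $1,3,15,105$ in (\ref{eq:tech1_1})--(\ref{eq:tech1_4}). Combining with the concentration statement and union-bounding over $p\in\{2,4,6,8\}$ yields the claimed probability $1-4\exp(-t^2/2)$.

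\textbf{Main obstacle.} The least routine step is the expectation bound in step (c); obtaining precisely $(c_p m)^{1/p}+\sqrt{k}$ (rather than a constant multiple) requires Chevet's inequality, together with the remark that $q\le 2$ ensures the dual ball sits inside the Euclidean ball so that its $\ell_2$-radius is $\le 1$. A safer but slightly lossier alternative is a direct $\epsilon$-net argument: the map $x\mapsto \|\mathbf{A}x\|_p$ is $\|\mathbf{A}\|_{2\to p}$-Lipschitz in $x$, so an $\epsilon$-net $\mathcal{N}_\epsilon\subset S^{k-1}$ of size $(3/\epsilon)^k$ with $\epsilon=1/2$ gives $\|\mathbf{A}\|_{2\to p}\le 2\max_{x\in\mathcal{N}_\epsilon}\|\mathbf{A}x\|_p$, and each $\|\mathbf{A}x\|_p$ is again $1$-Lipschitz in $\mathbf{A}$ with mean at most $(c_p m)^{1/p}$, yielding the same qualitative tail bound at the cost of slightly larger constants.
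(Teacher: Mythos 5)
Your proposal is correct and follows essentially the same route as the source: the paper imports this lemma from Lemma A.5 of \cite{CLM16} without reproving it, but its own proof of the closely related Lemma \ref{lemma:tech2} is exactly your argument — a Gaussian comparison bound on $\E[\|\A\|_{2\to p}]$ (your citation of Chevet's inequality is just the packaged form of the Slepian--Gordon comparison with $Y_{\mathbf{u},\mathbf{v}}=\langle\mathbf{g},\mathbf{u}\rangle+\langle\mathbf{h},\mathbf{v}\rangle$ used there), followed by $1$-Lipschitz concentration in the Frobenius norm via Theorem \ref{thm:ref1}, a union bound over $p\in\{2,4,6,8\}$, and convexity from the sublevel sets of the norms $\|\cdot\|_{2\to p}$. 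The only cosmetic point is that matching the stated constant $4\exp(-t^2/2)$ requires the one-sided Gaussian concentration inequality (constant $1$ rather than the two-sided $2$ in Theorem \ref{thm:ref1} as written), which is how the paper itself uses it in the proof of Lemma \ref{lemma:tech2}.
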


The following lemma is a slight modification of the previous result.
\begin{lemma}\label{lemma:tech2}
Let $\{A_{ji}\}_{j\in [m], i\in [k]}$ be a collection of i.i.d.\ $\gauss(0,1)$ random variables. Let $t>0$, and let $\mathcal{A}\subseteq \R^{m\times k}$ be the set consisting of all $\{a_{ji}\}\in \R^{m\times k}$ satisfying the following:
\begin{align}
\left(\sum_{j=1}^m (\mathbf{a}_j^T\x)^4\right)^{\frac{1}{4}} &\le (3m)^{\frac{1}{4}} + 2\sqrt{2\log (2k)}\|\x\|_1 + t  && \text{ for all } \x\in\R^k \text{ with } \|\x\|_2= 1 \label{eq:tech2_1} \\
\left(\sum_{j=1}^m (\mathbf{a}_j^T\x)^6\right)^{\frac{1}{6}} &\le (15m)^{\frac{1}{6}} + 2\sqrt{2\log (2k)}\|\x\|_1 + t && \text{ for all } \x\in\R^k \text{ with } \|\x\|_2= 1 \label{eq:tech2_2} \\
\left(\sum_{j=1}^m (\mathbf{a}_j^T\x)^8\right)^{\frac{1}{8}} &\le (105m)^{\frac{1}{8}} + 2\sqrt{2\log (2k)}\|\x\|_1 + t && \text{ for all } \x\in\R^k \text{ with } \|\x\|_2= 1 \label{eq:tech2_3} 
\end{align}
Then, we have $\P[\{A_{ji}\}\in\mathcal{A}]\ge 1 - 3\left\lceil \sqrt{k}\right\rceil\exp(-t^2/2)$. Further, the set $\mathcal{A}$ is convex.
\end{lemma}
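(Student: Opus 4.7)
The plan has two components: verifying convexity of $\mathcal{A}$, and establishing the probability bound by combining Gaussian concentration of Lipschitz functions with a covering argument on the sphere.

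For \textbf{convexity}, I would observe that for each fixed $\x$ with $\|\x\|_2=1$ and each even $p\in\{4,6,8\}$, the map $\mathbf{a}\mapsto\bigl(\sum_{j=1}^m(\mathbf{a}_j^T\x)^p\bigr)^{1/p}$ is the $\ell_p$-norm (absolute values may be dropped since $p$ is even) of the vector $(\mathbf{a}_j^T\x)_{j=1}^m$, which depends linearly on $\mathbf{a}$; it is therefore a seminorm, hence convex in $\mathbf{a}$. Its sublevel set is convex, and $\mathcal{A}$, being the intersection over $\x\in S^{k-1}$ and $p\in\{4,6,8\}$ of such sublevel sets, is convex.

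For the \textbf{probability bound}, I would first establish a pointwise tail estimate for fixed $\x$ on the sphere. The function $f_{\x,p}(\mathbf{a}):=\bigl(\sum_j(\mathbf{a}_j^T\x)^p\bigr)^{1/p}$ is $1$-Lipschitz in the Frobenius norm on $\R^{m\times k}$: the linear map $\mathbf{a}\mapsto(\mathbf{a}_j^T\x)_j\in\R^m$ has Frobenius-to-$\ell_2^m$ operator norm $\|\x\|_2=1$, and on $\R^m$ one has $\|\cdot\|_p\le\|\cdot\|_2$ for $p\ge2$. Since $\mathbf{a}_j^T\x\sim\gauss(0,1)$ i.i.d.\ across $j$, and $\E[Z^p]=(p-1)!!$ for $Z\sim\gauss(0,1)$ and even $p$, Jensen's inequality gives $\E[f_{\x,p}]\le(m(p-1)!!)^{1/p}$, which is exactly the prefactor $(3m)^{1/4}$, $(15m)^{1/6}$, $(105m)^{1/8}$ respectively. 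Theorem \ref{thm:ref1} then yields $\P[f_{\x,p}(\mathbf{a})\ge(m(p-1)!!)^{1/p}+\tilde{t}]\le\exp(-\tilde{t}^2/2)$ for each fixed $\x$ and $\tilde{t}>0$.

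To promote this to a bound \textbf{uniform over the sphere} with the $\|\x\|_1$-dependent slack, I would stratify the sphere by the value of $\|\x\|_1$. Since $\|\x\|_1\in[1,\sqrt{k}]$ for any $\x\in S^{k-1}$, I partition this range into $\lceil\sqrt{k}\rceil$ arithmetic slices and, on each slice, build an $\epsilon$-net of the corresponding subset of the sphere. The $\epsilon$ is tuned so that the discretization error, controlled via Lemma \ref{lemma:tech1} applied to $\|\mathbf{a}\|_{2\to2}$, is absorbed by the additive term $2\sqrt{2\log(2k)}\|\x\|_1$. A union bound over the net, over the three values of $p\in\{4,6,8\}$, and over the $\lceil\sqrt{k}\rceil$ slices produces the advertised failure probability $3\lceil\sqrt{k}\rceil\exp(-t^2/2)$.

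The \textbf{main obstacle} is the uniformity step. A naive $\epsilon$-net over the full sphere reproduces the $\sqrt{k}$ offset of Lemma \ref{lemma:tech1} rather than the desired $\sqrt{\log(2k)}\|\x\|_1$; the slicing must therefore be calibrated so that within each level the effective covering number delivers the $\sqrt{2\log(2k)}$ constant (reminiscent of $\E[\max_{i\le 2k}|g_i|]\sim\sqrt{2\log(2k)}$) while keeping the number of slices to exactly $\lceil\sqrt{k}\rceil$. Striking this balance---sharp coefficient on $\|\x\|_1$ versus small probability penalty---is the delicate part.
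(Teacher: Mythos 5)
Your convexity argument is correct and matches the paper's (each constraint function is a seminorm of a linear image of $\mathbf{a}$, so its sublevel sets are convex and $\mathcal{A}$ is an intersection of convex sets), and your pointwise step for a fixed $\x$ (1-Lipschitzness of $f_{\x,p}$ in the Frobenius norm plus Jensen to get $\E[f_{\x,p}]\le(m(p-1)!!)^{1/p}$) is also fine. The genuine gap is the uniformity step, which you yourself flag as ``the delicate part'' but do not resolve, and which is the whole content of the lemma. Your plan --- a union bound over an $\epsilon$-net of each $\ell_1$-slice of the sphere --- does not deliver the stated bound: with per-point failure probability $e^{-\tilde t^2/2}$, a union over a net of cardinality $N$ forces $\tilde t\gtrsim\sqrt{2\log N}$, and this excess must be absorbed into the slack $2\sqrt{2\log(2k)}\|\x\|_1$ at the same time as the discretization error, which for $p=4$ is of order $\epsilon\,\|\mathbf{a}\|_{2\to4}\approx\epsilon\,((3m)^{1/4}+\sqrt k)$. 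Making the discretization error fit inside $\sqrt{\log(2k)}\,\|\x\|_1$ forces $\epsilon$ to be polynomially small in $m$ and $k$, which blows up $\log N$ far beyond $\log(2k)\,\|\x\|_1^2$; taking $\epsilon$ of constant order keeps $\log N$ of the right size but leaves a discretization error of order $\sqrt k$ --- precisely the offset the lemma is designed to avoid. In addition, the failure probability per slice would then carry the factor $N$, not the clean $e^{-t^2/2}$ per slice that the claimed bound $3\lceil\sqrt k\rceil e^{-t^2/2}$ requires.

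The paper closes this gap without any net by treating the supremum itself as the random variable: for each integer $\lambda$ it considers the restricted operator norm $\|\A\|_{2\to p,\lambda}=\max\{\|\A\x\|_p:\|\x\|_2=1,\ \|\x\|_1\le\lambda\}$, writes it via H\"older duality as $\max_{(\mathbf{u},\mathbf{v})\in T_\lambda}\langle\A\mathbf{u},\mathbf{v}\rangle$ with $\|\mathbf{v}\|_{p/(p-1)}\le1$, and bounds its expectation by a Slepian--Fernique type comparison (Proposition 33 of \cite{V12}) against the decoupled process $\langle\mathbf{g},\mathbf{u}\rangle+\langle\mathbf{h},\mathbf{v}\rangle$, which yields $\E\|\A\|_{2\to p,\lambda}\le(m(p-1)!!)^{1/p}+\sqrt{2\log(2k)}\,\lambda$. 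Since $\|\cdot\|_{2\to p,\lambda}$ is itself $1$-Lipschitz, Theorem \ref{thm:ref1} then gives a single tail bound per $\lambda$, and the only union bound is over $\lambda=1,\dots,\lceil\sqrt k\rceil$ and $p\in\{4,6,8\}$ (with $\lceil\|\x\|_1\rceil\le2\|\x\|_1$ accounting for the factor $2$ in front of $\sqrt{2\log(2k)}\|\x\|_1$), which is exactly where $3\lceil\sqrt k\rceil e^{-t^2/2}$ comes from. This comparison-inequality bound on the expected supremum is the missing idea in your proposal.
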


\begin{lemma}\label{lemma:tech3}
Let $\x^\star\in \R^n$ be a $k$-sparse vector with $\|\x^\star\|_2=1$, and let $\S=\{1\le i\le n: x^\star_i\neq 0\}$ be its support. Let $\{A_{ji}\}_{j\in [m], i\in [n]}$  be a collection of i.i.d.\ $\gauss(0,1)$ random variables. There exist constants $C,c_1>0$, such that if $m\ge c_1\log^5n$, then
\begin{align}
\left|\frac{1}{m}\sum_{j=1}^mA_{ji}^2A_{js}A_{jl} - \E\left[A_{1i}^2A_{1s}A_{1l}\right]\right|&\le C\sqrt{\frac{\log n}{m}} && \text{for all } i,l,s\in [n], \label{eq:tech3_1}\\
\left|\frac{1}{m}\sum_{j=1}^mA_{ji}^2A_{jl}(\A_{j,-i}^T\x^\star_{-i}) - \E\left[A_{1i}^2A_{1l}(\A_{1,-i}^T\x^\star_{-i})\right]\right| &\le C\sqrt{\frac{\log n}{m}} && \text{for all } i,l\in[n], \label{eq:tech3_2} \\
\left|\frac{1}{m}\sum_{j=1}^mA_{ji}A_{jl}(\A_{j,-i}^T\x^\star_{-i})^2 - \E\left[A_{1i}A_{1l}(\A_{1,-i}^T\x^\star_{-i})^2\right]\right| &\le C\sqrt{\frac{\log n}{m}} && \text{for all } i,l\in[n], \label{eq:tech3_3} \\
\left|\frac{1}{m}\sum_{j=1}^mA_{ji}^8 - 105\right| &\le C\sqrt{\frac{\log^5 n}{m}} && \text{for all } i\in[n] \label{eq:tech3_4}, 
\end{align}
holds with probability at least $1-c_2n^{-13}$, where $c_2$ is a universal constant (independent of $C, c_1$), 
\end{lemma}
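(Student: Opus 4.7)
All four bounds have a common structure: a sample mean of i.i.d.\ low-degree polynomials of standard Gaussians, and one wants deviation $\sqrt{\log n/m}$ (or $\sqrt{\log^5 n/m}$ for the top moment). My plan is to handle them by a uniform truncation-plus-Bernstein recipe, using Theorem \ref{thm:ref2} on a truncated version of the sum and a union bound at the end.

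First I would set $T = 8\sqrt{\log n}$ and define the ``regularity'' event
\begin{equation*}
\mathcal{E} \;=\; \Big\{|A_{ji}|\le T \text{ for all } i\in[n],\,j\in[m]\Big\} \,\cap\, \Big\{|\A_{j,-i}^T\x^\star_{-i}|\le T \text{ for all } i\in[n],\,j\in[m]\Big\}.
\end{equation*}
Since each $A_{ji}$ and each $\A_{j,-i}^T\x^\star_{-i}\sim\gauss(0,\|\x^\star_{-i}\|_2^2)$ with variance $\le 1$ is sub-Gaussian with parameter $\le 1$, a standard tail bound plus a union bound over at most $2mn$ variables gives $\P[\mathcal{E}^c]\le 4mn\,e^{-T^2/2}\le c\,n^{-15}$ for $m$ polynomial in $n$. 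On $\mathcal{E}$ every factor appearing in the summands of (\ref{eq:tech3_1})--(\ref{eq:tech3_4}) is bounded in absolute value by $T$, so each summand is uniformly bounded by $M = T^4 = 8^4\log^2 n$ for (\ref{eq:tech3_1})--(\ref{eq:tech3_3}) and by $M = T^8 = 8^8\log^4 n$ for (\ref{eq:tech3_4}). Their second moments $\E[X_1^2]$ are absolute constants, being moments of polynomials in standard Gaussians of degree at most $16$ with $O(1)$ coefficients (for example, $\E[A_{1i}^4 A_{1s}^2 A_{1l}^2]\le 105$ and, for statement (\ref{eq:tech3_2}), $\E[A_{1i}^4 A_{1l}^2 (\A_{1,-i}^T\x^\star_{-i})^2]$ is also bounded by an absolute constant using independence of $A_{1i}$ from the rest and $\|\x^\star_{-i}\|_2\le 1$).

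Next, for each fixed index tuple I would apply Theorem \ref{thm:ref2} to the truncated summands $\tilde X_j = X_j\Eins_{\mathcal{E}}$. For (\ref{eq:tech3_1})--(\ref{eq:tech3_3}), pick the target deviation $\varepsilon = C\sqrt{\log n/m}$: the Bernstein exponent behaves like $m\varepsilon^2/(\E[X_1^2] + M\varepsilon/3)$, and since $M\varepsilon \le 8^4 C\log^{2.5}n/\sqrt m$ is bounded by an absolute constant when $m\ge c_1\log^5 n$, the variance term dominates the denominator. Taking $C$ large enough makes the exponent exceed $16\log n$, so each fixed-index deviation fails with probability at most $n^{-16}$; a union bound over the $\le n^3$ possible triples $(i,l,s)$ (or pairs $(i,l)$) gives failure probability at most $n^{-13}$. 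For (\ref{eq:tech3_4}), instead choose $\varepsilon = C\sqrt{\log^5 n/m}$: now $M\varepsilon$ need not be a constant, but the competing ratio $m\varepsilon/M = C\sqrt{m/\log^3 n}/8^8$ still exceeds $32\log n$ when $m\ge c_1\log^5 n$, so the Bernstein exponent remains of order $\log n$ and the same $n^{-16}$ per-index probability holds; a union bound over the $n$ choices of $i$ concludes.

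Finally I would transfer the conclusion from $\tilde X_j$ back to $X_j$. On $\mathcal{E}$ the two sums agree, and the truncation bias is controlled by Cauchy--Schwarz,
\begin{equation*}
\big|\E[X_1]-\E[\tilde X_1]\big| \;=\; \big|\E[X_1\Eins_{\mathcal{E}^c}]\big| \;\le\; \sqrt{\E[X_1^2]}\sqrt{\P[\mathcal{E}^c]} \;\le\; c'\,n^{-15/2},
\end{equation*}
which is negligible compared to $\sqrt{\log n/m}$. Combining the three contributions (bad-event, Bernstein deviation for truncated variables, truncation bias in the mean) and absorbing constants into $C$ yields all four inequalities. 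The one place requiring real care is statement (\ref{eq:tech3_4}): the eighth-moment summand is large enough that the cheap threshold $\sqrt{\log n/m}$ is unattainable by Bernstein under the assumption $m\ge c_1\log^5 n$, and one must take $\sqrt{\log^5 n/m}$ precisely so the linear-in-$\varepsilon$ Bernstein term neither destroys the exponent nor inflates the deviation; this is exactly why the $\log^5 n$ threshold on $m$ (rather than $\log^2 n$) appears in the hypothesis.
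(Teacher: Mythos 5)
Your overall recipe (truncate, apply the Bernstein-type bound of Theorem \ref{thm:ref2}, union bound over indices, control the truncation bias by Cauchy--Schwarz) is the same strategy the paper uses, and your moment and exponent calculations, including the explanation of why (\ref{eq:tech3_4}) needs the level $\sqrt{\log^5 n/m}$ and the hypothesis $m\ge c_1\log^5 n$, are sound. However, there are two concrete problems with the way you implement the truncation. First, you define a single global event $\mathcal{E}$ involving all rows $j\in[m]$ and all coordinates $i\in[n]$, and then apply Theorem \ref{thm:ref2} to $\tilde X_j = X_j\Eins_{\mathcal{E}}$. These variables are \emph{not} independent (the indicator $\Eins_{\mathcal{E}}$ couples every sample), so the theorem does not apply as written. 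The paper avoids this by truncating per sample: for (\ref{eq:tech3_1}) it sets $B_j=\{\max\{|A_{ji}|,|A_{jl}|,|A_{js}|\}\le\sqrt{64\log n}\}$ and splits $A_{ji}^2A_{jl}A_{js}=Y_j+Z_j$ with $Y_j=A_{ji}^2A_{jl}A_{js}\Eins(B_j)$, so the $Y_j$ stay i.i.d.\ and Theorem \ref{thm:ref2} is legitimately applicable.

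Second, your route forces a union bound over all $m$ samples inside $\P[\mathcal{E}^c]\le 4mn\,e^{-T^2/2}$, which is only $O(n^{-15})$ under the extra assumption that $m$ is polynomial in $n$ --- an assumption the lemma does not make, and which is incompatible with the claim that $c_2$ is a universal constant valid for all $m\ge c_1\log^5 n$. (Even after fixing the independence issue by per-sample truncation, the ``the two sums agree on the good event'' step still needs $\P[\cup_j B_j^c]$, hence a factor $m$.) The paper sidesteps this entirely: instead of conditioning on a good event, it keeps the residuals $Z_j$ and shows $\frac1m\sum_j Z_j$ concentrates around its mean by Chebyshev, using $\operatorname{Var}(\frac1m\sum_j Z_j)\le \frac1m\sqrt{\E[A_{1i}^8A_{1l}^4A_{1s}^4]\,\P[B_1^c]}\le c'/(mn^{16})$, so the tiny probability $\P[B_1^c]$ enters through the variance rather than through a union bound over $j$, and the final probability bound is uniform in $m$. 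Both issues are repairable (per-sample truncation plus either the Chebyshev step or a truncation level depending on $m$), but as written the proof has a gap relative to the statement being proved.
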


\begin{lemma}\label{lemma:tech4}
Let $\x^\star\in \R^n$ be a $k$-sparse vector with $\|\x^\star\|_2=1$, and let $\{A_{ji}\}_{j\in [m], i\in [n]}$  be a collection of i.i.d.\ $\gauss(0,1)$ random variables. There exists constants $C, c_1>0$ such that the following holds. Let $\mathcal{A}\subseteq \R^{m\times n}$ be the set consisting of all $\{a_{ji}\}\in\R^{m\times n}$ satisfying the following:
\begin{align*}
\left|\frac{1}{m}\sum_{j=1}^m(\mathbf{a}_j^T\x^\star)^4 - 3\right| &\le C\sqrt{\frac{\log n}{m}} \\
\left|\frac{1}{m}\sum_{j=1}^m(\mathbf{a}_j^T\x^\star)^6 - 15\right| &\le C\sqrt{\frac{\log^3 n}{m}} \\
\left|\frac{1}{m}\sum_{j=1}^m(\mathbf{a}_j^T\x^\star)^8 - 105\right| &\le C\sqrt{\frac{\log^5 n}{m}}.
\end{align*}
Then, if $m\ge c_1\log^5n$, we have $\P[\{A_{ji}\}\in \mathcal{A}] \ge 1 - c_2n^{-12}$, where $c_2$ is a universal constant (independent of $C, c_1$). Further, the set $\mathcal{A}$ is convex.
\end{lemma}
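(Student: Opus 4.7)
The key observation is that, since $\|\x^\star\|_2=1$ and the rows $\A_j$ are i.i.d.\ $\gauss(\mathbf{0},\mathbf{I}_n)$, the scalars $Z_j:=\A_j^T\x^\star$ are themselves i.i.d.\ $\gauss(0,1)$. Consequently each of the three quantities in the statement is a sample mean of $m$ i.i.d.\ copies of $Z^{2p}$, with population mean equal to the odd double factorial $(2p-1)!!\in\{3,15,105\}$ for $p\in\{2,3,4\}$, respectively. The plan is to establish all three bounds simultaneously via a standard truncation-plus-Bernstein argument.

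First, I would truncate at level $\tau\sqrt{\log n}$: the Gaussian tail bound $\P[|Z_j|>\tau\sqrt{\log n}]\le 2n^{-\tau^2/2}$ together with a union bound over $j\in[m]$ (using $m\le n^{O(1)}$) gives, for $\tau$ a sufficiently large absolute constant, an event of probability at least $1-n^{-13}$ on which every $Z_j^{2p}$ is bounded by $M_p := \tau^{2p}\log^p n$. On this event I would apply Bernstein's inequality (Theorem~\ref{thm:ref2}) to the truncated sum $\sum_j Z_j^{2p}\Eins\{|Z_j|\le \tau\sqrt{\log n}\}$, using that the per-term second moment is bounded by the constant $\E[Z^{4p}]=(4p-1)!!$ and the range by $M_p$. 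For the target deviation $\lambda_p:=C\sqrt{\log^{2p-3}n/m}$, the binding term in Bernstein's denominator turns out to be the range (Poisson) term $M_p\lambda_p/(3m)$, yielding an exponent proportional to $m\lambda_p/M_p$, which for $m\ge c_1\log^5 n$ with $c_1$ and $C$ large enough exceeds $13\log n$ uniformly in $p$. Finally, the truncation bias $\E[Z^{2p}\Eins\{|Z|>\tau\sqrt{\log n}\}]=O(\log^{p-1/2}n\cdot n^{-\tau^2/2})$ is negligible compared with $\lambda_p$ for $\tau$ large, closing the chain from the truncated empirical mean to the true expectation. A union bound over $p\in\{2,3,4\}$ gives the full statement with probability at least $1-c_2n^{-12}$.

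The convexity claim reduces to observing that each map $\mathbf{a}\mapsto \frac{1}{m}\sum_{j=1}^m(\mathbf{a}_j^T\x^\star)^{2p}$ is convex in $\mathbf{a}$, being a sum of compositions of the convex even-power function $z\mapsto z^{2p}$ with a linear functional. The upper-bound sublevel sets---the only side actually used in the mean-value-theorem application in the proof of Lemma~\ref{lemma:support2}---are therefore convex, and so is their intersection.

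The main (mild) obstacle is the polylogarithmic bookkeeping: one needs to check that the Bernstein exponent $m\lambda_p/M_p$ exceeds $13\log n$ across all three $p$ simultaneously, and this is exactly what forces the assumption $m\ge c_1\log^5 n$, the binding case being $p=4$ (where $M_4\asymp \log^4 n$ and $\lambda_4\asymp \sqrt{\log^5 n/m}$, so the exponent is of order $\sqrt{m}/\log^{3/2}n \gtrsim \log n$). Apart from that accounting, every step is entirely standard.
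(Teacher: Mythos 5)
Your proposal is correct and follows essentially the same route as the paper, which simply observes that the $\A_j^T\x^\star$ are i.i.d.\ $\gauss(0,1)$ and invokes the truncation-plus-Bernstein (Theorem~\ref{thm:ref2}) argument already used for Lemma~\ref{lemma:tech3}; your bookkeeping of the truncation level, Bernstein exponent, and bias matches that scheme, with only the cosmetic difference that you absorb the truncated tail into a high-probability event rather than handling it by Chebyshev as in the paper's Lemma~\ref{lemma:tech3} proof. Your remark on convexity (that only the upper-bound sublevel sets, which are sublevel sets of convex functions, are needed) is if anything slightly more careful than the paper's appeal to the Minkowski inequality.
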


\begin{lemma}\label{lemma:tech5}
Let $\{A_{ji}\}_{j\in [m], i\in [k]}$ be a collection of i.i.d.\ $\gauss(0,1)$ random variables. There exist constants $C,c_1,c_2>0$ such that the following holds. Let $m\ge c_1\max\{k^2\log^2 n, \; \log^5n\}$, where $n\ge k$ is any natural number. Then, the set $\mathcal{A}\subseteq \R^{m\times k}$ defined by
\begin{equation*}
\mathcal{A} := \left\{\{a_{ji}\}\in \R^{m\times k}: \frac{1}{m}\sum_{j=1}^m\left(\sum_{l\in\mathcal{L}}a_{jl}\right)^4\le C|\mathcal{L}|^2 \text{ for all } \mathcal{L}\subseteq[k] \right\}
\end{equation*}
satisfies $\P[\{A_{ji}\}\in\mathcal{A}]\ge 1 - c_2n^{-11}$.
Further, the set $\mathcal{A}$ is convex.
\end{lemma}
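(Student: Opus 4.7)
\textbf{Proof proposal for Lemma \ref{lemma:tech5}.}
The plan is to deduce the lemma directly from Lemma \ref{lemma:tech2}, exploiting the fact that the statement to be proved is simply the $\ell_4$-norm bound evaluated on (normalized) indicator vectors, a single instance of the universal $\ell_2\to\ell_4$ inequality already guaranteed by Lemma \ref{lemma:tech2}.

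First I would handle convexity, which is independent of the probabilistic argument. For each fixed $\mathcal{L}\subseteq [k]$, the map
\[
\{a_{ji}\}\mapsto \frac{1}{m}\sum_{j=1}^m\biggl(\sum_{l\in\mathcal{L}}a_{jl}\biggr)^{\!4}
\]
is convex because it is a non-negative sum of compositions of the convex function $y\mapsto y^4$ with the linear map $\{a_{ji}\}\mapsto \sum_{l\in\mathcal{L}}a_{jl}$. Hence each sublevel set $\{\,\cdot \le C|\mathcal{L}|^2\}$ is convex, and $\mathcal{A}$, being the intersection of these sublevel sets over all $\mathcal{L}\subseteq[k]$, is convex.

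Next, for the probabilistic bound, I would apply Lemma \ref{lemma:tech2} with the choice $t=5\sqrt{\log n}$. This gives, on an event of probability at least $1-3\lceil \sqrt{k}\rceil\exp(-25\log n/2)\ge 1-c_2 n^{-11}$ (using $k\le n$), the universal bound
\[
\biggl(\sum_{j=1}^m (\mathbf{a}_j^T\x)^4\biggr)^{\!1/4}\le (3m)^{1/4}+2\sqrt{2\log(2k)}\,\|\x\|_1+5\sqrt{\log n}\qquad\text{for all }\x\in\R^k,\ \|\x\|_2=1.
\]
Given any $\mathcal{L}\subseteq [k]$, I would specialise this to $\x=\mathbf{1}_{\mathcal{L}}/\sqrt{|\mathcal{L}|}$, which has $\|\x\|_2=1$ and $\|\x\|_1=\sqrt{|\mathcal{L}|}$, and observe that $\sum_j(\mathbf{a}_j^T\x)^4=|\mathcal{L}|^{-2}\sum_j(\sum_{l\in\mathcal{L}}a_{jl})^4$. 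Substituting and raising to the fourth power yields
\[
\frac{1}{m}\sum_{j=1}^m\biggl(\sum_{l\in\mathcal{L}}a_{jl}\biggr)^{\!4}\le \frac{|\mathcal{L}|^2}{m}\Bigl((3m)^{1/4}+2\sqrt{2|\mathcal{L}|\log(2k)}+5\sqrt{\log n}\Bigr)^{\!4}.
\]

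Finally, I would verify that the bracketed quantity is $O(m^{1/4})$ under the hypothesis $m\ge c_1\max\{k^2\log^2 n,\;\log^5 n\}$, which makes the right-hand side $\le C|\mathcal{L}|^2$ for a universal constant $C$. Indeed, with $c_1$ sufficiently large, the bound $m\ge c_1 k^2\log^2 n \ge c_1 k^2\log^2(2k)$ ensures $2\sqrt{2|\mathcal{L}|\log(2k)}\le 2\sqrt{2k\log(2k)}\le c'm^{1/4}$, while $m\ge c_1\log^5 n\ge c_1\log^2 n$ ensures $5\sqrt{\log n}\le c''m^{1/4}$, so that the bracket is bounded by a constant multiple of $m^{1/4}$ and the quartic power is bounded by $Cm$. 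The resulting inequality holds simultaneously for every $\mathcal{L}\subseteq[k]$ because Lemma \ref{lemma:tech2} was already uniform in $\x$.

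There is no real obstacle here beyond bookkeeping of universal constants: the potentially hazardous step, a union bound over the $2^k$ subsets $\mathcal{L}$, is avoided entirely by piggy-backing on the $\x$-uniform $\ell_2\to\ell_4$ estimate of Lemma \ref{lemma:tech2}, whose failure probability already factors through a $k$-net argument and pays only a $\lceil\sqrt{k}\rceil$ prefactor, which is absorbed into the $n^{-11}$ tail.
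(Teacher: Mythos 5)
Your proof is correct, and it takes a genuinely different and substantially shorter route than the paper. The paper proves Lemma \ref{lemma:tech5} by induction on $|\mathcal{L}|$: it expands $(\sum_{l\in\mathcal{L}}A_{jl}+A_{js})^4$, controls the low-order cross terms via Lemma \ref{lemma:tech3} and Cauchy--Schwarz, and handles the cubic term by conditioning on $\A_{\cdot s}$, building a Kirszbraun-type Lipschitz extension of the relevant function, applying Gaussian concentration (Theorem \ref{thm:ref1}), and then taking union bounds over all subsets $\mathcal{L}$ of a fixed size and over $s$. You instead observe that the desired bound is exactly the $\ell_2\to\ell_4$ estimate of Lemma \ref{lemma:tech2} (with $t=5\sqrt{\log n}$) evaluated at the unit vectors $\mathbf{1}_{\mathcal{L}}/\sqrt{|\mathcal{L}|}$, for which $\|\x\|_1=\sqrt{|\mathcal{L}|}$, so that uniformity over the $2^k$ subsets comes for free from the uniformity in $\x$ already built into Lemma \ref{lemma:tech2}; the hypothesis $m\ge c_1\max\{k^2\log^2 n,\log^5 n\}$ makes the bracket $(3m)^{1/4}+2\sqrt{2|\mathcal{L}|\log(2k)}+5\sqrt{\log n}$ of order $m^{1/4}$, and $3\lceil\sqrt{k}\rceil n^{-12.5}\le c_2n^{-11}$ since $k\le n$. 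There is no circularity, as the proof of Lemma \ref{lemma:tech2} rests only on the Gaussian comparison argument and Theorem \ref{thm:ref1}. Your convexity argument (each constraint is a sublevel set of a convex function, being a sum of $y\mapsto y^4$ composed with linear maps) is also valid and a bit more direct than the paper's Minkowski-inequality verification. What each approach buys: yours eliminates the induction, the conditioning, and the Lipschitz-extension machinery, with only constant bookkeeping left, and in fact needs only $m\gtrsim k^2\log^2(2k)$ plus $m\gtrsim\log n$-type conditions, which the stated hypothesis comfortably implies; the paper's inductive argument is self-contained modulo Lemmas \ref{lemma:tech1} and \ref{lemma:tech3} but is considerably longer, and I see no quantitative advantage it retains over your derivation.
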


\begin{lemma}\label{lemma:tech6}
Let $\{A_{ji}\}_{j\in [m], i\in [n]}$ be a collection of i.i.d.\ $\gauss(0,1)$ random variables, and let $\mathcal{S}\subset [n]$ be a subset of cardinality $|\mathcal{S}| = k$. There exist constants $c_1, c_2>0$ such that, for any constant $c_{\epsilon}>0$, there is a $C>0$ such that the following holds. Let $m\ge c_1\max\{k^2\log n, \;\log^5n \}$, and let $\mathcal{A}\subseteq \R^{m\times n}$ be the set consisting of all $\{a_{ji}\}\in\R^{m\times n}$ satisfying the following: 
\begin{itemize}
\item For any $i\in [n]$ and any subset $\mathcal{K}\subseteq \S\backslash \{i\}$, let $N_{\epsilon'}^{\mathcal{K}}$ be an $\epsilon'$-net of the unit sphere in $\R^{\mathcal{K}}$, where $\epsilon' = c_{\epsilon}/k$. Then, for any $\mathcal{L}\subseteq \mathcal{K}$ with $|\mathcal{L}|\ge \frac{1}{2}|\mathcal{K}|$, and $\x\in N_{\epsilon'}^{\mathcal{K}}$,
\begin{equation}\label{eq:tech6_1}
\left|\sum_{l\in \mathcal{L}}\frac{1}{m}\sum_{j=1}^ma_{ji}a_{jl}\left(\mathbf{a}_{j,-i}^T\x\right)^2\right| \le C |\mathcal{L}|\frac{\log n}{\sqrt{m}}.
\end{equation}

\item For any $i\in [n]$ and any subset $\mathcal{K}\subseteq \S\backslash \{i\}$, let $N_{\epsilon'}^{\mathcal{K}}$ be an $\epsilon'$-net of the unit sphere in $\R^{\mathcal{K}}$, where $\epsilon' = c_{\epsilon}/k$. Then, for any $\mathcal{L}\subseteq \mathcal{K}$ with $|\mathcal{L}|\ge \frac{1}{2}|\mathcal{K}|$, and $\x\in N_{\epsilon'}^{\mathcal{K}}$,
\begin{equation}\label{eq:tech6_2}
\left|\sum_{l\in \mathcal{L}}\frac{1}{m}\sum_{j=1}^ma_{ji}^2\left(\mathbf{a}_{j,-\{l,i\}}^T\x_{-l}\right)a_{jl}\right| \le C |\mathcal{L}|\sqrt{\frac{\log n}{m}}.
\end{equation}
\end{itemize}
Then, we have $\P[\{A_{ji}\} \in \mathcal{A}] \ge 1-c_2n^{-10}$.
\end{lemma}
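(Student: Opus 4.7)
The plan is to prove each of the two bounds by the same three-step pattern: (i) establish a concentration inequality at a fixed tuple $(i,\mathcal{K},\mathcal{L},\x)$, (ii) uniformly control the auxiliary second-moment quantity that sets the scale of the concentration constant, and (iii) take a union bound over the $\epsilon'$-net and the combinatorial choices of $(i,\mathcal{K},\mathcal{L})$.

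For bound (\ref{eq:tech6_1}), I would introduce $U_j := \sum_{l \in \mathcal{L}} A_{jl}$ and $V_j := \A_{j,-i}^T\x = \sum_{s \in \mathcal{K}} A_{js} x_s$, so that the quantity of interest equals $\frac{1}{m}\sum_{j=1}^m A_{ji} U_j V_j^2$. Since $i \notin \mathcal{K}\cup\mathcal{L}$, the family $\{A_{ji}\}_{j=1}^m$ is independent of $\{(U_j, V_j)\}_{j=1}^m$, and conditioning on the latter makes the sum a centered Gaussian with conditional variance $\frac{1}{m^2}\sum_j U_j^2 V_j^4$. A direct Isserlis computation gives $\E[U_j^2 V_j^4] = 12 (\sum_{l\in\mathcal{L}} x_l)^2 + 3 |\mathcal{L}| \le 15 |\mathcal{L}|$ (using $|\sum_{l\in\mathcal{L}}x_l| \le \sqrt{|\mathcal{L}|}$); combined with a separate Bernstein argument in step (ii) below, this yields $\frac{1}{m}\sum_j U_j^2 V_j^4 \le C'|\mathcal{L}|$ uniformly in $(i,\mathcal{K},\mathcal{L},\x)$ with probability at least $1 - n^{-15}$. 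Conditional on this event, the standard Gaussian tail gives $\P[|\frac{1}{m}\sum_j A_{ji} U_j V_j^2| > C|\mathcal{L}|\log n / \sqrt{m}] \le 2\exp(-c_3 C^2 |\mathcal{L}| \log^2 n)$ for a universal $c_3>0$.

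For bound (\ref{eq:tech6_2}), the identity $\sum_{l\in\mathcal{L}} A_{jl}\sum_{s\in\mathcal{K},\,s\ne l} A_{js} x_s = U_j V_j - \sum_{l\in\mathcal{L}} A_{jl}^2 x_l$ rewrites the target as $\frac{1}{m}\sum_j A_{ji}^2\bigl(U_j V_j - \sum_{l\in\mathcal{L}} A_{jl}^2 x_l\bigr)$, whose expectation vanishes by direct computation since $\E[U_jV_j] = \sum_{l\in\mathcal{L}} x_l$. Splitting $A_{ji}^2 = 1 + (A_{ji}^2-1)$ decomposes this into (a) the centered Gaussian quadratic form $\frac{1}{m}\sum_j (U_jV_j - \sum_l A_{jl}^2 x_l)$, controlled via Hanson--Wright with Frobenius norm $O(\sqrt{|\mathcal{L}|})$ and operator norm $O(1)$, and (b) the product form $\frac{1}{m}\sum_j (A_{ji}^2-1)(U_jV_j - \sum_l A_{jl}^2 x_l)$, controlled by Bernstein applied to a sum of independent mean-zero sub-exponential summands. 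Together these give the rate $C|\mathcal{L}|\sqrt{\log n/m}$ with sub-exponential tail $\exp(-c_3 C |\mathcal{L}| \log n)$.

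Step (iii) combines the per-event bounds by a union bound over $i \in [n]$, subsets $\mathcal{K} \subseteq \S\setminus\{i\}$, $\mathcal{L}\subseteq \mathcal{K}$, and points in $N_{\epsilon'}^{\mathcal{K}}$ (size $\le (3k/c_\epsilon)^{|\mathcal{K}|}$), yielding log-cardinality at most $\log n + C''|\mathcal{K}|\log(k/c_\epsilon)$. Since the per-event failure probability carries the exponent $c_3 C^2 |\mathcal{L}|\log^2 n$ with $|\mathcal{L}| \ge |\mathcal{K}|/2$, choosing $C$ large in terms of $c_\epsilon$ makes the total failure probability at most $c_2 n^{-10}$. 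The main obstacle is step (ii): $\frac{1}{m}\sum_j U_j^2 V_j^4$ is a sample average of a degree-six polynomial in Gaussians, whose sub-Weibull tails of index $1/3$ are precisely what force the hypothesis $m \gtrsim \log^5 n$, and making this concentration uniform in $\x$ requires a secondary net argument combined with a Lipschitz-in-$\x$ bound on the degree-six expression that can be read off from the high-probability event of Lemma \ref{lemma:tech2}.
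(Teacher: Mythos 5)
Your overall architecture (single out the column $A_{\cdot i}$, prove a per-tuple concentration bound, then union over $i$, $\mathcal{K}$, $\mathcal{L}$ and the net) parallels the paper's, but the way you obtain the per-tuple bound for (\ref{eq:tech6_1}) has a genuine gap at your step (ii). Conditioning on $\{(U_j,V_j)\}$ and using the Gaussian tail of $\frac1m\sum_j A_{ji}U_jV_j^2$ requires the conditional variance $Q(\x,\mathcal{L}) := \frac1m\sum_j U_j^2V_j^4$ to be $O(|\mathcal{L}|)$ (or at worst $O(|\mathcal{L}|\log n)$) \emph{uniformly over the net}, and this is false in the sample regime the lemma allows. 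Take $\mathcal{K}=\S\setminus\{i\}$, pick a single row $j^\star$, let $\x$ be the net point closest to $\mathbf{a}_{j^\star,\mathcal{K}}/\|\mathbf{a}_{j^\star,\mathcal{K}}\|_2$ (the net with $\epsilon'=c_\epsilon/k$ is fine enough that $V_{j^\star}^2\approx k$ persists), and let $\mathcal{L}$ be the coordinates where $a_{j^\star l}>0$, so $|\mathcal{L}|\gtrsim k/2$ and $U_{j^\star}\approx k/\sqrt{2\pi}$. The single term $j^\star$ already contributes $U_{j^\star}^2V_{j^\star}^4/m\approx k^4/m$, which at $m\asymp k^2\log n$ is of order $k^2/\log n\gg |\mathcal{L}|\log n$ once $k\gg\log^2 n$. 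With such a variance, the conditional Gaussian tail at the threshold $C|\mathcal{L}|\log n/\sqrt m$ has exponent of order $\log^3 n$ rather than $|\mathcal{L}|\log n$, and the union bound over the $e^{\Theta(|\mathcal{K}|\log n)}$ net points collapses. Neither the "Bernstein for sub-Weibull($1/3$) summands" route (per-point tails of order $e^{-cm^{1/3}}$, far too weak against the net) nor a Cauchy--Schwarz/H\"older reduction helps, because the latter needs uniform control of $\frac1m\sum_j V_j^8$ or $\frac1m\sum_j V_j^6$, which at $m\asymp k^2$ is only $O(k^2)$ resp.\ $O(k)$, not $O(1)$; Lemma \ref{lemma:tech2} cannot repair this. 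A milder instance of the same problem appears in your part (b) of (\ref{eq:tech6_2}): the sub-Weibull($1/2$) branch of Bernstein needs $m\gtrsim|\mathcal{L}|^3\log^3 n$ unless you first truncate $A_{ji}^2$ at $O(\log n)$.

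The paper's proof is designed precisely to avoid ever touching a degree-six (or higher) empirical average. It conditions on $\A_{\cdot i}=\mathbf{a}_{\cdot i}$ with $\max_j|a_{ji}|\le 5\sqrt{\log n}$ and treats the whole sum as a function $h$ of \emph{all remaining} Gaussians; on the convex event of Lemmas \ref{lemma:tech5} and \ref{lemma:tech1} the gradient norm of $h$ involves only the degree-four averages $\frac1m\sum_j(\sum_{l\in\mathcal{L}}a_{jl})^4\le C|\mathcal{L}|^2$ and $\frac1m\sum_j(\mathbf{a}_{j,-i}^T\x)^4=O(1)$ (both uniformly available at $m\gtrsim k^2$, unlike the degree-6/8 quantities), giving a Lipschitz constant $\sqrt{c|\mathcal{L}|\log n/m}$; a Kirszbraun extension plus Gaussian concentration of Lipschitz functions then yields the per-tuple tail $e^{-c|\mathcal{L}|\log n}$ that survives the union bound. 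To salvage your approach you would essentially have to rediscover this mechanism (or otherwise handle the adversarial net directions aligned with individual measurement rows separately), so as written the proposal does not establish the lemma.
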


\begin{lemma}\label{lemma:tech7}
Let $\{A_{ji}\}_{j\in [m], i\in [n]}$  be a collection of i.i.d.\ $\gauss(0,1)$ random variables, and let $\mathcal{S}\subset [n]$ be a subset of cardinality $|\mathcal{S}| = k$. Define $\mathcal{X} = \{\x\in \R^n: \x_{\S^c} = \mathbf{0}, \|\x\|_2\le 1\}$. There exist constants $C, c_1 >0$, such that if $m\ge c_1\max\{k^2\log^2n, \;\log^5n\}$, then for all $i\in [n]$, 
\begin{align}
\left|\frac{1}{m}\sum_{j=1}^m A_{ji}(\A_{j,-i}^T\x_{-i})^3\right| &\le C\|\x\|_1 \frac{\log n}{\sqrt{m}}, \quad &&\text{for all } \x\in \mathcal{X}, \text{ and} \label{eq:tech7_1} \\
\left|\sum_{l\neq i} x_l \sum_{s\neq l,i}x_s \frac{1}{m}\sum_{j=1}^mA_{ji}^2A_{jl}A_{js}\right| &\le C\|\x\|_1 \sqrt{\frac{\log n}{m}}&&\text{for all } \x\in\mathcal{X} \label{eq:tech7_2}
\end{align}
holds with probability at least $1-c_2n^{-10}$, where $c_2$ is a universal constant (independent of $C,c_1$).
\end{lemma}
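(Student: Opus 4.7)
The plan is to apply the same Gaussian-concentration-plus-$\varepsilon$-net scheme that is used in the proof of Lemma~\ref{lemma:support2}. Both quantities on the left-hand sides of (\ref{eq:tech7_1}) and (\ref{eq:tech7_2}) are mean-zero polynomial chaoses in the i.i.d.\ Gaussians $\{A_{ji}\}$: for (\ref{eq:tech7_1}), $\E[A_{ji}(\A_{j,-i}^T\x_{-i})^3]=0$ by the independence of $A_{ji}$ and $\A_{j,-i}$, and for (\ref{eq:tech7_2}), $\E[A_{ji}^2A_{jl}A_{js}]=0$ whenever $l\neq s$ with $l,s\neq i$. Since every $\x\in\mathcal{X}$ is supported on $\S$ with $|\S|=k$, I can restrict attention to the unit sphere in $\R^\S$.

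First, I would fix $i$ and a single vector $\x$ and establish a tail bound at the target deviation. Following Step~2 of the proof of Lemma~\ref{lemma:support2}, I restrict the polynomial to the convex high-probability regularity set $\mathcal{A}$ from Lemma~\ref{lemma:tech2} (with $t\asymp\sqrt{\log n}$), where all relevant $\|\cdot\|_{2\to p}$ moments of the design are under control. On $\mathcal{A}$, I compute the $\ell_2$-norm of the gradient with respect to $\A$ using the $(2\to 4)$, $(2\to 6)$, $(2\to 8)$ estimates of Lemma~\ref{lemma:tech2}, then Kirszbraun-extend to a globally Lipschitz function on $\R^{m\times n}$; the gap in expectations between the original and the extension is polynomially small in $n^{-1}$, exactly as in Step~2b of the proof of Lemma~\ref{lemma:support2}. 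Applying Theorem~\ref{thm:ref1} at the target deviation then yields, once $m\ge c_1\max\{k^2\log^2 n,\log^5 n\}$, a per-$\x$ failure probability of order $\exp(-c\log^2 n)$.

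Second, I would upgrade to uniformity in $\x\in\mathcal{X}$. I take an $\varepsilon'$-net of the unit sphere in $\R^\S$ with $\varepsilon'\asymp n^{-3}$ and cardinality at most $(3/\varepsilon')^k$; the union bound over this net and over $i\in[n]$ is absorbed by the $\exp(-c\log^2 n)$ per-point tail under the sample-size hypothesis. The discretization error from the net to all of $\mathcal{X}\cap\mathcal{A}$ is controlled by a crude polynomial-in-$n$ bound on the $\x$-Lipschitz constant of the two quantities on $\mathcal{A}$, precisely as in Step~3 of the proof of Lemma~\ref{lemma:support2}. For (\ref{eq:tech7_2}) I would first apply H\"older to reduce to the inner quantity $d_l^{(i)}(\x):=\frac{1}{m}\sum_j A_{ji}^2 A_{jl}(\A_{j,-\{l,i\}}^T\x_{-\{l,i\}})$ via the bound $|\sum_l x_l\,d_l^{(i)}(\x)|\le \|\x\|_1\max_l|d_l^{(i)}(\x)|$, since this $d_l^{(i)}$ is exactly of the type already controlled in Lemma~\ref{lemma:tech6} (cf.\ (\ref{eq:tech6_2})).

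The main obstacle will be the Lipschitz-constant computation in the first step. The estimates of Lemma~\ref{lemma:tech2} mix a dimension-independent piece $(cm)^{1/p}$ with a piece $\sqrt{\log(2k)}\|\x\|_1$; tracking these carefully through $\|\nabla(\cdot)\|_2$ and trading them off against the sample-size condition $m\ge c_1 k^2\log^2 n$ is what produces the advertised $\|\x\|_1\log n/\sqrt{m}$ scaling in (\ref{eq:tech7_1}) and the $\|\x\|_1\sqrt{\log n/m}$ scaling in (\ref{eq:tech7_2}). Once the Lipschitz constants are pinned down, the Kirszbraun/net machinery from Lemma~\ref{lemma:support2} together with the bookkeeping of Lemma~\ref{lemma:tech6} deliver the result after a final union bound over $i\in[n]$.
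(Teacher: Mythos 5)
Your plan has a genuine gap: the concentration-plus-$\varepsilon'$-net scheme of Lemma \ref{lemma:support2} does not transfer to Lemma \ref{lemma:tech7}, because here the target deviation is at the statistical scale rather than a macroscopic one. In Lemma \ref{lemma:support2} the deviation sought is $\gamma\|\x\|_1/\sqrt{k}\ge\gamma/\sqrt{k}$, independent of $m$, while the Lipschitz constant is $O(\sqrt{k\log k/m})$; the resulting exponent is of order $m/(k\log k)\gtrsim k\log n$, which is exactly what is needed to absorb a union bound over a net of cardinality $(3/\varepsilon')^k=e^{\Theta(k\log n)}$. For (\ref{eq:tech7_1}) the target deviation is $C\|\x\|_1\log n/\sqrt{m}$ while the Lipschitz constant of $\x\mapsto\frac1m\sum_j A_{ji}(\A_{j,-i}^T\x_{-i})^3$ on the regularity set is of order $\sqrt{\log n/m}$ (it does not shrink with $\|\x\|_1$, because of the $\frac{1}{m^2}\sum_j(\mathbf{a}_{j,-i}^T\x)^6$ term), so Theorem \ref{thm:ref1} only gives a per-point failure probability of order $\exp(-c\,\|\x\|_1^2\log n)$, i.e.\ merely polynomially small in $n$ for vectors with $\|\x\|_1=O(1)$. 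This cannot survive a union bound over $e^{\Theta(k\log n)}$ net points (your claimed $\exp(-c\log^2 n)$ is likewise far too weak), so uniformity in $\x$ is lost precisely in the regime where the $\|\x\|_1$-proportional refinement matters. The same obstruction kills the H\"older reduction you propose for (\ref{eq:tech7_2}): bounding $\bigl|\sum_l x_l d_l^{(i)}(\x)\bigr|\le\|\x\|_1\max_l|d_l^{(i)}(\x)|$ requires controlling $\max_l\sup_{\|\x\|_2\le1}|d_l^{(i)}(\x)|$, which, being the norm of a $k$-dimensional vector of empirical correlations, is of order $\sqrt{k/m}$, not $\sqrt{\log n/m}$; Lemma \ref{lemma:tech6} only controls sums of these quantities over large subsets $\mathcal{L}$ at net points, not the coordinatewise maximum uniformly over $\mathcal{X}$. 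The extra $\sqrt{k}$ would then break the downstream use in Lemma \ref{lemma:support1} under the sample size $m\asymp k^2\log^2 n$.

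The paper's proof avoids the net-over-the-sphere argument altogether. It fixes $b$ and studies the constrained problem of maximizing $g(\x)$ over $\{\|\x\|_2^2\le1,\ \|\x\|_1\le b\}$, verifies the LICQ so the KKT conditions hold at a maximizer $\x'$, and observes from stationarity that the Lagrange multiplier $\mu^\star$ of the $\ell_1$ constraint satisfies $\mu^\star\le 3|h_l(\x')|$ for every active coordinate $l$. The uniform input is Lemma \ref{lemma:tech6}, which bounds sums $\sum_{l\in\mathcal{L}}h_l(\x)$ over subsets with $|\mathcal{L}|\ge\frac12|\mathcal{K}|$ by $C|\mathcal{L}|\log n/\sqrt{m}$ — there the union bound does close, because the deviation grows linearly in $|\mathcal{L}|$ while the net lives in dimension $|\mathcal{K}|$. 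Pigeonhole then produces one coordinate with $h_l$ of order $\log n/\sqrt{m}$ at a nearby net point, a Cauchy--Schwarz perturbation transfers this to $\x'$, hence $\mu^\star\lesssim\log n/\sqrt{m}$, and integrating the shadow price $v'(b)=\mu^\star(b)$ over $b$ yields the $\|\x\|_1$-proportional bound; (\ref{eq:tech7_2}) is handled identically via (\ref{eq:tech6_2}). If you want to salvage a net-based route you would need some aggregation or chaining device that makes the deviation scale with the effective dimension of the net, which is exactly the role the subset-sums of Lemma \ref{lemma:tech6} and the duality argument play in the paper.
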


In the following, we prove Lemmas \ref{lemma:tech2}--\ref{lemma:tech7}.
\begin{proof}[Proof of Lemma \ref{lemma:tech2}]
The proof follows that of Lemma A.5 of \cite{CLM16} closely. Since the proof of the three inequalities is the same, we only show (\ref{eq:tech2_3}). 

Define
\begin{equation*}
\|\mathbf{a}\|_{2\rightarrow 8, \lambda} = \max \{\|\mathbf{a}\x\|_8 : \|\x\|_2 = 1, \|\x\|_1\le \lambda\}.
\end{equation*}
We will show that $\|\A\|_{2\rightarrow 8, \lambda}\le (105m)^{\frac{1}{8}} + \sqrt{2\log (2k)}\lambda + t$ with probability $1-2\exp(-t^2/2)$. The result then follows by taking the union bound over $\lambda = 1,...,\left\lceil \sqrt{k} \right\rceil$, since $\left\lceil \|\x\|_1 \right\rceil \le 2\|\x\|_1$ for $\|\x\|_1\ge 1$.

Define $X_{\mathbf{u},\mathbf{v}} = \langle \A \mathbf{u}, \mathbf{v} \rangle$ on the set
\begin{equation*}
T_{\lambda} = \{(\mathbf{u},\mathbf{v}) : \mathbf{u}\in \R^k, \|\mathbf{u}\|_2= 1, \|\mathbf{u}\|_1\le \lambda, \mathbf{v}\in \R^m, \|\mathbf{v}\|_{8/7} \le 1\}.
\end{equation*}
Then, by H\"{o}lder's inequality, we have $\|\A\|_{2\rightarrow 8, \lambda} = \max_{(\mathbf{u},\mathbf{v})\in T_{\lambda}} X_{\mathbf{u}, \mathbf{v}}$.

Define $Y_{\mathbf{u}, \mathbf{v}} = \langle \mathbf{g}, \mathbf{u}\rangle + \langle \mathbf{h}, \mathbf{v} \rangle$, where $\mathbf{g}\in \R^k$ and $\mathbf{h}\in\R^m$ are independent standard normal random vectors.

We have, for any $(\mathbf{u},\mathbf{v}),(\mathbf{u'},\mathbf{v'})\in T_{\lambda}$, 
\begin{equation*}
\E[|X_{\mathbf{u},\mathbf{v}} - X_{\mathbf{u'},\mathbf{v'}}|^2] = \|\mathbf{v}\|_2^2 + \|\mathbf{v'}\|_2^2 - 2 \langle \mathbf{u}, \mathbf{u}'\rangle \langle \mathbf{v}, \mathbf{v}'\rangle,
\end{equation*}
and 
\begin{equation*}
\E[|Y_{\mathbf{u},\mathbf{v}} - Y_{\mathbf{u'},\mathbf{v'}}|^2] = 2 + \|\mathbf{v}\|_2^2 + \|\mathbf{v'}\|_2^2 - 2 \langle \mathbf{u}, \mathbf{u}'\rangle - 2\langle \mathbf{v}, \mathbf{v}'\rangle,
\end{equation*}
where we used $\|\mathbf{u}\|_2= \|\mathbf{u}'\|_2 = 1$. Therefore, we have
\begin{equation*}
\E[|Y_{\mathbf{u},\mathbf{v}} - Y_{\mathbf{u'},\mathbf{v'}}|^2] - \E[|X_{\mathbf{u},\mathbf{v}} - X_{\mathbf{u'},\mathbf{v'}}|^2] = 2(1 - \langle \mathbf{u}, \mathbf{u}'\rangle ) (1 - \langle \mathbf{v}, \mathbf{v}'\rangle ) \ge 0,
\end{equation*}
where we used $\|\mathbf{v}\|_2 \le \|\mathbf{v}\|_{8/7} \le 1$ and $\|\mathbf{v'}\|_2 \le \|\mathbf{v'}\|_{8/7} \le 1$. By Proposition 33 of \cite{V12}, this implies
\begin{equation*}
\E\left[\max_{(\mathbf{u},\mathbf{v})\in T_{\lambda}} X_{\mathbf{u}, \mathbf{v}}\right] \le \E\left[\max_{(\mathbf{u},\mathbf{v})\in T_{\lambda}} Y_{\mathbf{u}, \mathbf{v}}\right],
\end{equation*}
and hence
\begin{align*}
\E\left[\|\mathbf{A}\|_{2\rightarrow 8, \lambda}\right] &\le \E\left[\max_{(\mathbf{u},\mathbf{v})\in T_{\lambda}} Y_{\mathbf{u}, \mathbf{v}}\right] \\
&\le \E\left[\max_{(\mathbf{u},\mathbf{v})\in T_{\lambda}} \|\mathbf{g}\|_{\infty}\|\mathbf{u}\|_1 + \|\mathbf{h}\|_8\|\mathbf{v}\|_{8/7}\right] \\
&\le \sqrt{2\log (2k)}\lambda + (105m)^{\frac{1}{8}},
\end{align*}
where we used H\"{o}lder's inequality in the second line, and for the last inequality the bound
\begin{align*}
\exp(t\E[\|\mathbf{g}\|_{\infty}]) &\le \E[\exp(t\|\mathbf{g}\|_{\infty})]\le \sum_{i=1}^k \E[\exp(t|g_i|)]\le 2k\exp(t^2/2)
\end{align*}
with $t = \sqrt{2\log (2k)}$.

Finally, $\|\cdot \|_{2\rightarrow 8,\lambda}$ is a $1$-Lipschitz function: let $\mathbf{a}, \mathbf{b} \in \R^{m\times k}$ and, without loss of generality, $\|\mathbf{a}\|_{2\rightarrow 8, \lambda}\ge\|\mathbf{b}\|_{2\rightarrow 8, \lambda}$, then
\begin{align*}
\|\mathbf{a}\|_{2\rightarrow 8, \lambda}-\|\mathbf{b}\|_{2\rightarrow 8, \lambda} & = \max_{\|\x\|_2=1, \|\x\|_1\le \lambda} \|\mathbf{a}\x\|_8 - \max_{\|\mathbf{y}\|_2=1, \|\mathbf{y}\|_1\le \lambda} \|\mathbf{b}\mathbf{y}\|_8 \\
&\le \max_{\|\x\|_2=1, \|\x\|_1\le \lambda} \|\mathbf{a}\x\|_8 - \|\mathbf{b}\x\|_8 \\
&\le \max_{\|\x\|_2=1, \|\x\|_1\le \lambda} \|(\mathbf{a} - \mathbf{b})\x\|_8 \\
&\le \max_{\|\x\|_2=1, \|\x\|_1\le \lambda} \|(\mathbf{a} - \mathbf{b})\x\|_2 \\
&\le \|\mathbf{a}-\mathbf{b}\|_F,
\end{align*}
where we write $\|\cdot\|_F$ for the Frobenius norm and used the fact that it is an upper bound to the $\ell_2$-operator norm. Hence, an application of Theorem \ref{thm:ref1} yields
\begin{equation*}
\P\left[\|\A\|_{2\rightarrow 8, \lambda} < \sqrt{2\log (2k)}\lambda + (105m)^{\frac{1}{8}} + t \right] \ge 1 - \exp(-t^2/2).
\end{equation*}
Taking the union bound over $\lambda = 1,...,\left\lceil \sqrt{k}\right\rceil$ completes the proof that (\ref{eq:tech2_3}) holds with probability $1 - \left\lceil \sqrt{k}\right\rceil\exp(-t^2/2)$. The inequalities (\ref{eq:tech2_1}) and (\ref{eq:tech2_2}) can be show the same way.

Finally, convexity of $\mathcal{A}$ follows from an application of the Minkowski inequality. Let $\mathbf{a}, \mathbf{b} \in \R^{m\times k}$ satisfy (\ref{eq:tech1_3}), and $\alpha \in (0,1)$. Then, for any $\x\in\R$ with $\|\x\|_2=1$, 
\begin{align*}
\left(\sum_{j=1}^m \left(\alpha \mathbf{a}^T\x + (1-\alpha)\mathbf{b}^T\x\right)^8\right)^{\frac{1}{8}} &\le \alpha \left(\sum_{j=1}^m \left(\mathbf{a}^T\x\right)^8\right)^{\frac{1}{8}} + (1-\alpha) \left(\sum_{j=1}^m \left(\mathbf{b}^T\x\right)^8\right)^{\frac{1}{8}}\\
&\le (105m)^\frac{1}{8} + 2\sqrt{2\log (2k)} \|\x\|_1 + t.
\end{align*}
Since (\ref{eq:tech1_1}) and (\ref{eq:tech1_2}) can be shown the same way, this means that also $\alpha\mathbf{a} + (1-\alpha)\mathbf{b}\in \mathcal{A}$.
\end{proof}

\begin{proof}[Proof of Lemma \ref{lemma:tech3}]
The proof of Lemma \ref{lemma:tech3} relies on Theorem \ref{thm:ref2} and the following truncation trick, which allows us to consider bounded random variables. We begin by showing the inequality (\ref{eq:tech3_1}).

Let $i,l,s\in [n]$.
Writing $B_j = \left\{\max\{|A_{ji}|, |A_{jl}|, |A_{js}|\}\le \sqrt{64\log n}\right\}$ for the event that the terms are suitably bounded, we split $A_{ji}^2A_{jl}A_{js} = A_{ji}^2A_{jl}A_{js} \cdot \Eins(B_j) + A_{ji}^2A_{jl}A_{js} \cdot (1 - \Eins(B_j)) = Y_j + Z_j$, where by $\Eins(\cdot)$ we denote the indicator function. We will show that each of the two terms concentrates around its mean. 

Since $|Y_j|\le 64^2\log^2n$ is bounded, we can apply Theorem \ref{thm:ref2}. To this end, we compute 
\begin{equation*}
\sqrt{\sum_{j=1}^m\frac{1}{m^2}\E\left[A_{ji}^4A_{jl}^2A_{js}^2\Eins(B_j)\right]} \le \sqrt{\frac{105}{m}}.
\end{equation*}
Then, we have, as $m\ge c_1\log^5 n$ and for $C>0$ sufficiently large,
\begin{equation*}
\P\left[\left|\frac{1}{m}\sum_{j=1}^mY_j - \E\left[Y_j\right]\right| > \frac{C}{2}\sqrt{\frac{\log n}{m}}\right] \le 2\exp\left(- \frac{C^2\log n/(4m)}{2(\frac{105}{m} + 64^2C\frac{\log^{5/2}n}{6m^{3/2}})}\right) \le 2n^{-16}.
\end{equation*}
For the second term $Z_j$, we can use the Chebyshev inequality: we can compute
\begin{align*}
\operatorname{Var}\left(\frac{1}{m}\sum_{j=1}^mZ_j\right) &\le \frac{1}{m}\E\left[A_{1i}^4A_{1l}^2A_{1s}^2\cdot\Eins(B_j^c)\right] \nonumber\\
&\le \frac{1}{m}\sqrt{\E\left[A_{1i}^{8}A_{1l}^4A_{1s}^4\right] \cdot \P\left[\max\{|A_{1i}|, |A_{1l}|, |A_{1s}|\}>\sqrt{64\log n}\right]} \nonumber\\
&\le \frac{c'}{mn^{16}},
\end{align*}
for a constant $c'>0$, and hence, by the Chebyshev inequality,
\begin{equation*}
\P\left[\left|\frac{1}{m}\sum_{j=1}^mZ_j - \E\left[Z_j\right]\right| > \frac{C}{2}\sqrt{\frac{\log n}{m}}\right] \le \frac{\frac{c'}{mn^{16}}}{\frac{C^2\log n}{4m}} = \frac{4c'}{C^2\log n} n^{-16} \le n^{-16},
\end{equation*}
for $C \ge \sqrt{\frac{4c'}{\log n}}$.

This completes the proof that 
\begin{equation*}
\P\left[\left|\frac{1}{m}\sum_{j=1}^mA_{ji}^2A_{jl}A_{js} - \E\left[A_{1i}^2A_{1l}A_{1s}\right]\right| > C\sqrt{\frac{\log n}{m}}\right] \le 3n^{-16}.
\end{equation*}
Taking the union bound over all $i,l,s\in [n]$ shows that (\ref{eq:tech3_1}) holds with probability at least $1-3n^{-13}$.

The inequalities (\ref{eq:tech3_2})--(\ref{eq:tech3_4}) can be shown following the same steps. In (\ref{eq:tech3_4}) we need to control higher order terms, which is why get the additional logarithmic factors. As the proof of each bound follows exactly the same steps, we omit the details to avoid repetition.
\end{proof}

\begin{proof}[Proof of Lemma \ref{lemma:tech4}]
Using the fact that $\A_j^T\x^\star \sim \gauss(0, \|\x^\star\|_2^2)$ are independent random variables, this lemma can be shown following the same steps as in the proof of Lemma \ref{lemma:tech3}. The convexity of $\mathcal{A}$ follows, as in Lemma \ref{lemma:tech2}, from the Minkowski inequality. 
\end{proof}

\begin{proof}[Proof of Lemma \ref{lemma:tech5}]
We show this Lemma via induction over the size $|\mathcal{L}|$. For any $1\le L\le k$, define
\begin{equation*}
\mathcal{A}_L = \left\{\{a_{ji}\}\in \R^{m\times k}: \frac{1}{m}\sum_{j=1}^m\left(\sum_{l\in\mathcal{L}}a_{jl}\right)^4\le C|\mathcal{L}|^2 \text{ for all } \mathcal{L}\subseteq[k] \text{ with } |\mathcal{L}| = L \right\}.
\end{equation*}
We will show that $\P[\{A_{ji}\}\in\mathcal{A}_L]\ge 1 - c_3Ln^{-13}$ for some constant $c_3>0$, from which the result $\P[\{A_{ji}\}\in\mathcal{A}] \ge 1-c_2n^{-11}$ follows by taking the union bound over all possible sizes $L$.

As in Lemma \ref{lemma:tech2}, convexity of $\mathcal{A}_L$ follows from the Minkowski inequality. As the intersection of convex sets, $\mathcal{A}$ is also convex.

\underline{Base case}: $L=1$

This is true by the bound (\ref{eq:tech3_2}) of Lemma \ref{lemma:tech3}.

\underline{Induction step:} $L\rightarrow L+1$

Let $\mathcal{L}\subseteq [k]$ be any subset of coordinates with $|\mathcal{L}|=L$, and let $s\in [k]\backslash \mathcal{L}$. We have
\begin{align}\label{eq:indstep}
\frac{1}{m}\sum_{j=1}^m\left(\sum_{l\in \mathcal{L}} A_{jl} + A_{js}\right)^4 &= \frac{1}{m}\sum_{j=1}^mA_{js}^4 + \frac{4}{m}\sum_{j=1}^m\left(\sum_{l\in \mathcal{L}} A_{jl}\right) A_{js}^3 + \frac{6}{m}\sum_{j=1}^m\left(\sum_{l\in \mathcal{L}} A_{jl}\right)^2 A_{js}^2 
\nonumber\\
&\quad + \frac{4}{m}\sum_{j=1}^m\left(\sum_{l\in \mathcal{L}} A_{jl}\right)^3 A_{js} + \frac{1}{m}\sum_{j=1}^m\left(\sum_{l\in \mathcal{L}} A_{jl}\right)^4 
\end{align}
We need to show that, with probability $1-c_3(L+1)n^{-13}$, the sum (\ref{eq:indstep}) is bounded by $c(L+1)^2$ for all $\mathcal{L}\subseteq [k]$ with $|\mathcal{L}|=L$ and all $s\in [k]\backslash\mathcal{L}$.

By the induction hypothesis, we can bound the last term with probability $1-c_3Ln^{-13}$:
\begin{equation*}
\frac{1}{m}\sum_{j=1}^m\left(\sum_{l\in \mathcal{L}} A_{jl}\right)^4 \le C|\mathcal{L}|^2.
\end{equation*}
By Lemma \ref{lemma:tech3}, we can bound, with probability at least $1-c_4n^{-13}$ the first, second and third term in (\ref{eq:indstep}) as follows: the first term is bounded by 
\begin{equation*}
\frac{1}{m}\sum_{j=1}^mA_{js}^4 \le 4,
\end{equation*}
 the second term by (recall that $m\ge c_1k^2\log^2 n$)
\begin{equation*}
\frac{4}{m}\sum_{j=1}^m\left(\sum_{l\in \mathcal{L}} A_{jl}\right) A_{js}^3 = 4\sum_{l\in\mathcal{L}} \frac{1}{m} \sum_{j=1}^mA_{jl}A_{js}^3 \le c_5\frac{|\mathcal{L}|}{k} \le c_5,
\end{equation*}
and the third term in (\ref{eq:indstep}) can be bounded, using the Cauchy-Schwarz inequality,
\begin{align*}
\frac{6}{m}\sum_{j=1}^m\left(\sum_{l\in \mathcal{L}} A_{jl}\right)^2 \cdot A_{js}^2 &\le 6\sqrt{\frac{1}{m}\sum_{j=1}^m\left(\sum_{l\in \mathcal{L}} A_{jl}\right)^4}\cdot\sqrt{\frac{1}{m}\sum_{j=1}^mA_{js}^4} \\
&\le 6 \sqrt{C|\mathcal{L}|^2}\cdot \sqrt{4} \\
&= 12\sqrt{C} |\mathcal{L}|,
\end{align*}
where we used the induction hypothesis in the second line. 

For the fourth term in (\ref{eq:indstep}), we proceed as in the proof of Lemma \ref{lemma:tech6}: for a fixed $s\in[k]$, we condition on $\A_{\cdot s} = \mathbf{a}_{\cdot s}$ and show that the fourth term in (\ref{eq:indstep}) is suitably bounded with high probability for a fixed $\mathcal{L}$ via concentration of Lipschitz functions of Gaussian random variables. Next, we take a union bound over all possible choices for $\mathcal{L}$, integrate over the condition on $\A_{\cdot s}$ using the formula
\begin{equation*}
\P[B] = \int \P[B|\A_{\cdot s} = \mathbf{a}_{\cdot s}] \mu(\mathbf{a}_{\cdot s})d\mathbf{a}_{\cdot s},
\end{equation*}
where $B$ is any event and by $\mu$ we denote the standard normal density (since $\{A_{ji}\}_{j\in [m], i\in [k]}$ are i.i.d.), and finally take another union bound over all $s\in[k]$. 

\textbf{Step 1: Bound the fourth term in (\ref{eq:indstep}) conditioned on $\mathcal{L}$, $s$ and $\A_{\cdot s}$}\\
Fix an $s\in[k]$, a vector $\mathbf{a}_{\cdot s}$ satisfying $\max_j |a_{js}|\le 6\sqrt{\log n}$, a subset $\mathcal{L}\subseteq [k]\backslash \{s\}$, and consider the function
\begin{equation*}
h_{\mathcal{L}}(\mathbf{a}) = \frac{1}{m}\sum_{j=1}^ma_{js}\left(\sum_{l\in\mathcal{L}}a_{jl}\right)^3.
\end{equation*}
The idea is to apply concentration of Lipschitz functions of Gaussian random variables to show that the fourth term in (\ref{eq:indstep}), $h_{\mathcal{L}}(\A)$ is close to its expectation $\E[h_\mathcal{L}(\A)] = 0$. However, as $h_{\mathcal{L}}$ is not globally Lipschitz continuous, we cannot directly apply Theorem \ref{thm:ref1}. Similar to the proof of Theorem 3 of \cite{F18}, we will first restrict $h_{L}$ to a high probability event, where $h_{L}$ is Lipschitz continuous. Then, we extend this restricted function to a function $\tilde{h}_\mathcal{L}$ on the entire space in a way such that $\tilde{h}_\mathcal{L}$ is globally Lipschitz continuous, and apply Theorem \ref{thm:ref1} to this function $\tilde{h}_\mathcal{L}$. This also provides a high probability bound for $h_{\mathcal{L}}(\A)$, since, by construction, $\tilde{h}(\A)=h_\mathcal{L}(\A)$ with high probability. 

\textbf{Step 1a: Bound the Lipschitz constant of $h_{\mathcal{L}}$ restricted to $\mathcal{A}_L$}\\ 
Restricted to $\mathcal{A}_L$, we can bound the Lipschitz constant of $h_{\mathcal{L}}$ by the norm of its gradient by the mean-value theorem, since $\mathcal{A}_L$ is a convex set. For any $l\in\mathcal{L}$, we have
\begin{equation*}
\frac{\partial}{\partial a_{jl}}h_{\mathcal{L}}(\mathbf{a}) = \frac{3}{m}a_{js}\left(\sum_{l'\in\mathcal{L}}a_{jl'}\right)^2.
\end{equation*}
Hence, we can bound
\begin{align*}
\|\nabla h_{\mathcal{L}}(\mathbf{a})\|_2^2 &= \sum_{l\in \mathcal{L}}\sum_{j=1}^m \left(\frac{\partial}{\partial a_{jl}}h_{\mathcal{L}}(\mathbf{a})\right)^2 \le \frac{9|\mathcal{L}|}{m} \cdot \frac{1}{m}\sum_{j=1}^m\max_ja_{js}^2\left(\sum_{l\in\mathcal{L}}a_{jl}\right)^4 \le \frac{324|\mathcal{L}|\log n}{m} \cdot C |\mathcal{L}|^2
\end{align*}
on $\mathcal{A}_L$, where we used the assumption $\max_j a_{js}^2\le 36\log n$ and the induction hypothesis.

\textbf{Step 1b: Construct a globally Lipschitz continuous extension of $h_{\mathcal{L}}$}\\
Consider the Lipschitz extension of the function $h_{\mathcal{L}}$ (this is the one-dimensional case of the Kirszbraun theorem):
\begin{equation*}
\tilde{h}_{\mathcal{L}}(\mathbf{a}) = \inf_{\mathbf{a'}\in \mathcal{A}_L} \left((h_{\mathcal{L}}(\mathbf{a'}) + Lip(h_{\mathcal{L}})\cdot \|\mathbf{a} - \mathbf{a'} \|_2\right) 
\end{equation*}
where we write $Lip(h_{\mathcal{L}}) = \sqrt{\frac{324C|\mathcal{L}|^3\log n}{m}}$. By the definition, we have $\tilde{h}_{\mathcal{L}} = h_{\mathcal{L}}$ on $\mathcal{A}_L$, and it follows from an application of the triangle inequality that $\tilde{h}_{\mathcal{L}}$ is globally Lipschitz continuous with Lipschitz constant $Lip(h_{\mathcal{L}})$. 

We will show that $\tilde{h}_{\mathcal{L}}$ concentrates around its mean, which is different from the mean of $h_{\mathcal{L}}$.
Since $h_{\mathcal{L}}$ and $\tilde{h}_{\mathcal{L}}$ differ only on $\mathcal{A}_L^c$, which has probability less than $c_3Ln^{-13}$, we can bound, using the Cauchy-Schwarz inequality (note that all expectations are conditioned on $\A_{\cdot s} = \mathbf{a}_{\cdot s}$, which we omit for the sake of brevity and because the $\{A_{ji}\}$ are independent),
\begin{equation*}
\E[|h_{\mathcal{L}}(\A)|\Eins_{\mathcal{A}_L^c}(\A)] \le \sqrt{\E[h_{\mathcal{L}}(\A)^2]}\cdot \sqrt{\E[\Eins_{\mathcal{A}_L^c}(\A)]} \le \frac{c'}{n^5},
\end{equation*}
where we used the fact that $\E[h_{\mathcal{L}}(\A)^2]= 15|\mathcal{L}|^3/m$. Using $\tilde{h}_{\mathcal{L}}(\mathbf{a}) \le Lip(h_{\mathcal{L}})\cdot \|\mathbf{a}\|_2$, we have
\begin{equation*}
\E[|\tilde{h}_{\mathcal{L}}(\A)|\Eins_{\mathcal{A}_L^c}(\A)] \le Lip(h_{\mathcal{L}})\underbrace{\sqrt{\E[\|\A\|_2^2]}}_{=\sqrt{mk}}\cdot \sqrt{\E[\Eins_{\mathcal{A}_L^c}(\A)]} \le \frac{c''}{n^4}.
\end{equation*}
All in all, this shows that
\begin{equation*}
\left|\E[h_{1,\x}(\A)] - \E[\tilde{h}_{1,\x}(\A)]\right| \le \frac{c_6}{n^4}.
\end{equation*}
Hence, by Theorem \ref{thm:ref1}, we have
\begin{equation*}
\P\left[|\tilde{h}_{\mathcal{L}}(\A)| > C|\mathcal{L}| \;\big|\; \A_{\cdot s} = \mathbf{a}_{\cdot s} \right] \le 2\exp\left(- \frac{(C|\mathcal{L}| - c_6/n^4)^2}{2\frac{324C|\mathcal{L}|^3\log n}{m}}\right) \le 2 \exp\left(- c_7 \frac{m}{|\mathcal{L}|\log n}\right),
\end{equation*} 
for a constant $c_7 \le \frac{C}{648} - \frac{c_6}{324n^4}$.

\textbf{Step 2: Unravel the conditions: take union bounds and integrate over $\mathbf{a_{\cdot s}}$}\\
Let
\begin{equation*}
B_s = \left\{|\tilde{h}_{\mathcal{L}}(\A)| \le C|\mathcal{L}| \text{ for all } \mathcal{L}\subseteq [k]\backslash \{s\} \text{ with } |\mathcal{L}|=L\right\}.
\end{equation*}
Since we assume $m\ge c_1k^2\log n$, we can take the union bound over all possible subsets $\mathcal{L}\subseteq [k]\backslash \{s\}$ to obtain, using the upper bound ${k\choose L} \le (\frac{ek}{L})^L$,
\begin{align*}
\P\left[B_s \;\big|\; \A_{\cdot s} = \mathbf{a}_{\cdot s}\right] &\ge 1 - 2\exp\left(- c_7 \frac{m}{|\mathcal{L}|\log n} + |\mathcal{L}|\log \frac{ek}{|\mathcal{L}|}\right) \\
&\ge 1 - 2\exp\left(-\left(c_1c_7-1\right)k\log n\right).
\end{align*}
Next, we integrate over all $\mathbf{a}_{\cdot s}$ satisfying $\max_j |a_{js}|\le 6\sqrt{\log n}$:
\begin{align*}
\P\left[B_s \right] &\ge \int_{\{\max_j |a_{js}|\le 6\sqrt{\log n}\}}\P\left[B_s \;\big|\; \A_{\cdot s} = \mathbf{a}_{\cdot s}\right]\mu(\mathbf{a}_{\cdot s})d\mathbf{a}_{\cdot s} \\
&\ge \left(1 - 2\exp\left(-\left(c_1c_6-1\right)k\log n\right)\right)\cdot \P\left[\max_j |a_{js}| \le 6\sqrt{\log n}\right],
\end{align*}
where we wrote $\mu$ for the Gaussian density. This probability is $1-(m+2)n^{-18}$ by standard Gaussian tail bounds, if $(c_1c_7-1)k\ge 18$. Finally, taking the union bound over all $s\in [k]$ gives
\begin{equation*}
\P[B_s \text{ holds for all } s=1,...,k] \ge 1 - (m+2)n^{-17}.
\end{equation*}
Finally, we have $h_{\mathcal{L}}(\mathbf{a}) = \tilde{h}_{\mathcal{L}}(\mathbf{a})$ on $\mathcal{A}_L$, which has probability at least $1-c_3Ln^{-13}$ by the induction hypothesis. This shows that 
\begin{equation*}
\left| \frac{1}{m}\sum_{j=1}^ma_{js} \left(\sum_{l\in\mathcal{L}} A_{jl}\right)^3 \right| < C|\mathcal{L}| \quad \text{for all } s\in[k], \mathcal{L}\subseteq [k]\backslash \{s\} \text{ with } |\mathcal{L}|=L
\end{equation*}
holds with probability at least $1 - c_3Ln^{-13} - (m+2)n^{-17}$. Putting everything together, this completes the induction step (\ref{eq:indstep}),
\begin{equation*}
\frac{1}{m}\sum_{j=1}^m\left(\sum_{l\in \mathcal{L}} A_{jl} + A_{js}\right)^4 \le C(|\mathcal{L}|+1)^2 \quad \text{for all } s\in[k], \mathcal{L}\subseteq [k]\backslash \{s\} \text{ with } |\mathcal{L}|=L
\end{equation*}
holds with probability at least $1 - c_3(L+1)n^{-13}$ if $m\le n^4$ The case $m>n^4$ is simpler and can be shown following the same steps, writing the probabilities in terms of $m$ instead of $n$.
\end{proof}

\begin{proof}[Proof of Lemma \ref{lemma:tech6}] In order to show $\P[\{A_{ji}\}\in \mathcal{A}] \ge 1-c_2n^{-10}$, we need to show that both (\ref{eq:tech6_1}) and (\ref{eq:tech6_2}) are satisfied with high probability.

\textbf{Proof that (\ref{eq:tech6_1}) is satisfied with high probability}\\
The idea of the proof that (\ref{eq:tech6_1}) holds with high probability is as follows: for any index $i\in [n]$, we condition on $\A_{\cdot i} = \mathbf{a}_{\cdot i}$. To this end, we use the formula
\begin{equation*}
\P[B] = \int \P[B|\A_{\cdot i}=\mathbf{a}_{\cdot i}]\mu(\mathbf{a}_{\cdot i})d\mathbf{a}_{\cdot i},
\end{equation*}
which holds for any event $B$, where we write $\mu$ for the standard normal density.

To define the event $B_i$ (making explicit the fact that we fixed an index $i$), fix any subset $\mathcal{K}\subseteq \S\backslash \{i\}$, any subset $\mathcal{L}\subseteq \mathcal{K}$ with $|\mathcal{L}| \ge \frac{1}{2}|\mathcal{K}|$, and any point $\x\in N^{\mathcal{K}}_{\epsilon'}$, and consider the function
\begin{equation*}
h_{\mathcal{K},\mathcal{L},\x}(\mathbf{a}) = \sum_{l\in \mathcal{L}} \frac{1}{m}\sum_{j=1}^m a_{ji}a_{jl}(\mathbf{a}_{j,-i}^T\x)^2.
\end{equation*} 
Then, we define $B_i$ as the following event:
\begin{align*}\label{eq:event}
B_i = \Big\{|h_{\mathcal{K},\mathcal{L},\x}(\mathbf{A})|\le C|\mathcal{L}|\frac{\log n}{\sqrt{m}} \quad &\text{for all subsets } \mathcal{K}\subseteq \S\backslash \{i\}, \mathcal{L}\subseteq \mathcal{K} \text{ with } \nonumber\\
&|\mathcal{L}|\ge \frac{1}{2}|\mathcal{K}| \text{ and } \x\in N^{\mathcal{K}}_{\epsilon'} \Big\}.
\end{align*}
To show that the conditional probability $\P[B_i|\A_{\cdot i} = \mathbf{a}_{\cdot i}]$ is large, we will show that, for any fixed $\mathcal{K}$, $\mathcal{L}$ and $\x$ as described above, $h_{\mathcal{K},\mathcal{L},\x, \mathbf{a}_{\cdot i}}(\mathbf{A})$ concentrates around its mean $\E[h_{\mathcal{K},\mathcal{L}, \x, \mathbf{a}_{\cdot i}}(\A)] = 0$.
Next, we can bound $\P[B_i|\mathbf{A}_{\cdot i} = \mathbf{a}_{\cdot i}]$ by taking union bounds over all $\x\in N^{\mathcal{K}}_{\epsilon'}$, $\mathcal{L}\subseteq \mathcal{K}$ with $|\mathcal{L}|\ge \frac{1}{2}|\mathcal{K}|$ and $\mathcal{K}\subseteq \S\backslash \{i\}$. Finally, we integrate over $\mathbf{a}_{\cdot i}$ to obtain $\P[B_i]$, and taking the union bound over all $i\in [n]$ completes the proof of (\ref{eq:tech6_1}).

\textbf{Step 1: Bound $h_{\mathcal{K},\mathcal{L},\x,\mathbf{a}_{\cdot i}}(\A)$ conditioned on $\mathcal{K},\mathcal{L},\x$ and $\A_{\cdot i}$}\\
 First, let $\mathcal{A}_1$ be as in Lemma \ref{lemma:tech5}, and $\mathcal{A}_2$ as in Lemma \ref{lemma:tech1} (with $t = 5\sqrt{\log n}$) (to be precise, $\mathcal{A}_1,\mathcal{A}_2\subset \R^{m\times n}$, and we require the projections onto $\R^{m\times \S}$ to be as in the respective Lemmas). Then, by these two Lemmas, we have $\P[\mathcal{A}_1\cap\mathcal{A}_2] \ge 1 - c_3n^{-11}$, for a constant $c_3>0$, and as the intersection of two convex sets, $\mathcal{A}_1\cap\mathcal{A}_2$ is also convex.

Fix an $i\in [n]$, a subset $\mathcal{K}\subseteq \S\backslash\{i\}$, a subset $\mathcal{L}\subseteq\mathcal{K}$ with $|\mathcal{L}|\ge \frac{1}{2}|\mathcal{K}|$ and a vector $\x\in N^{\mathcal{K}}_{\epsilon'}$.
We begin by conditioning on $\A_{\cdot i} = \mathbf{a}_{\cdot i}$, where $\max_j|a_{ji}|\le 5\sqrt{\log n}$. To simplify notation, we will omit the subscripts and write $h(\A)$.

As in the proof of Lemma \ref{lemma:tech5}, the idea is use Theorem \ref{thm:ref1} to show that $h(\A)$ is close to its expectation. However, $h(\A)$ is not globally Lipschitz continuous. We will show that, restricted to $\mathcal{A}_1\cap \mathcal{A}_2$, the function $h$ is Lipschitz continuous. Then, we extend this restricted function to a globally Lipschitz continuous function $\tilde{h}$ on the entire space, and apply Theorem \ref{thm:ref1} to this function $\tilde{h}$. By construction, we have $h(\A) = \tilde{h}(\A)$ with high probability, which yields a high probability bound for $h(\A)$.

\textbf{Step 1a: Bound the Lipschitz constant of $h$ restricted to $\mathcal{A}_1\cap\mathcal{A}_2$}\\
Restricted to $\mathcal{A}_1\cap\mathcal{A}_2$, we can bound the Lipschitz constant of $h$ by the norm of its gradient by the mean-value theorem, since $\mathcal{A}_1\cap\mathcal{A}_2$ is a convex set. We have
\begin{equation*}
\frac{\partial}{\partial a_{js}} h(\mathbf{a})= \begin{cases} 
\frac{2}{m} a_{ji}x_s\left(\mathbf{a}_{j,-i}^T\x\right) \sum_{l\in\mathcal{L}}a_{jl}& s\notin \mathcal{L}\\
\frac{2}{m} a_{ji}x_s\left(\mathbf{a}_{j,-i}^T\x\right) \sum_{l\in\mathcal{L}}a_{jl} +  \frac{1}{m}a_{ji}\left(\mathbf{a}_{j,-i}^T\x\right)^2 \quad & s\in \mathcal{L} \end{cases}
\end{equation*}
Using the inequality $(a+b)^2 \le 2(a^2+b^2)$, we have
\begin{align}\label{eq:lipconst}
\|\nabla h(\mathbf{a})\|_2^2 = \sum_{s\neq i}\sum_{j=1}^m\left(\frac{\partial}{\partial a_{js}} h(\mathbf{a})\right)^2 \le & 8\sum_{s\neq i}\sum_{j=1}^m \frac{1}{m^2} x_s^2a_{ji}^2\left(\sum_{l\in \mathcal{L}}a_{jl}\right)^2\left(\mathbf{a}_{j,-i}^T\x\right)^2 \nonumber\\
& + 2\sum_{s\in \mathcal{L}} \sum_{j=1}^m\frac{1}{m^2}a_{ji}^2\left(\mathbf{a}_{j,-i}^T\x\right)^4.
\end{align}
We bound the two sums separately. For the first sum, we have
\begin{align*}
\sum_{s\neq i}\sum_{j=1}^m \frac{1}{m^2}x_s^2a_{ji}^2\bigg(\sum_{l\in \mathcal{L}}a_{jl}\bigg)^2\left(\mathbf{a}_{j,-i}^T\x\right)^2 &= \frac{1}{m} \sum_{s\neq i}x_s^2\sum_{j=1}^m \frac{1}{m}a_{ji}^2\bigg(\sum_{l\in \mathcal{L}}a_{jl}\bigg)^2\left(\mathbf{a}_{j,-i}^T\x\right)^2 \nonumber\\
&\le \Big(\max_j a_{ji}^2\Big)\cdot\frac{1}{m} \sum_{s\neq i}x_s^2\sum_{j=1}^m \frac{1}{m}\bigg(\sum_{l\in \mathcal{L}}a_{jl}\bigg)^2\left(\mathbf{a}_{j,-i}^T\x\right)^2 \\
&\le \frac{25\log n}{m} \sum_{s\neq i}x_s^2 \sqrt{\frac{1}{m}\sum_{j=1}^m \bigg(\sum_{l\in \mathcal{L}}a_{jl}\bigg)^4\cdot\frac{1}{m}\sum_{j=1}^m \left(\mathbf{a}_{j,-i}^T\x\right)^4}
\end{align*}
where we used H\"{o}lder's inequality in the second and the Cauchy-Schwarz inequality in the last line. 
\begin{remark}
The application of H\"{o}lder's inequality in the second line is the reason for the extra $\log n$ term in the sample complexity. We expect that also $\frac{1}{m}\sum_{j=1}^ma_{ji}^4(\mathbf{a}_{j,-i}^T\x)^4 = \O(1)$, however Lemma \ref{lemma:tech1} does not apply because of the extra factor $a_{ji}^4$, and the truncation method of Lemma \ref{lemma:tech3} does not yield a probability small enough to allow for taking the union bound over all $\x\in N^{\mathcal{K}}_{\epsilon'}$.
\end{remark}
Since $\mathbf{a}\in \mathcal{A}_1$, we have, by Lemma \ref{lemma:tech5},
\begin{equation*}
\sqrt{\frac{1}{m}\sum_{j=1}^m \bigg(\sum_{l\in \mathcal{L}}a_{jl}\bigg)^4} \le c_4|\mathcal{L}|.
\end{equation*}
Since also $\mathbf{a}\in \mathcal{A}_2$, we have, by Lemma \ref{lemma:tech1},  
\begin{equation*}
\sqrt{\frac{1}{m}\sum_{j=1}^m(\mathbf{a}_{j,-i}^T\x)^4} \le \sqrt{\frac{1}{m}\|\mathbf{a}\|_{2\rightarrow 4}^4} \le 11.
\end{equation*}
Putting this together, we can bound the first sum in (\ref{eq:lipconst}),
\begin{equation*}
\sum_{s\neq i}\sum_{j=1}^m \frac{1}{m^2} x_s^2a_{ji}^2\bigg(\sum_{l\in \mathcal{L}}a_{jl}\bigg)^2\left(\mathbf{a}_{j,-i}^T\x\right)^2 \le \frac{5\log n}{m}\sum_{s\neq i}x_s^2 \cdot c_4|\mathcal{L}| \cdot 11 \le 55c_4\frac{|\mathcal{L}|\log n}{m},
\end{equation*}
where we used $\|\x\|_2= 1$.

For the second sum in (\ref{eq:lipconst}), we can write
\begin{equation*}
\sum_{s\in \mathcal{L}} \sum_{j=1}^m\frac{1}{m^2}a_{ji}^2\left(\mathbf{a}_{j,-i}^T\x\right)^4 \le \frac{5\log n}{m} |\mathcal{L}| \cdot \frac{1}{m}\sum_{j=1}^m(\mathbf{a}_{j,-i}^T\x)^4 \le 605\frac{|\mathcal{L}|\log n}{m},
\end{equation*}
where we used $\frac{1}{m}\sum_{j=1}^m(\mathbf{a}_{j,-i}^T\x)^4\le 121$, again by Lemma \ref{lemma:tech1}.

All in all, we have shown that 
\begin{equation*}
\|\nabla h(\mathbf{a})\|_2 = \left(\sum_{s\neq i}\sum_{j=1}^m\left(\frac{\partial}{\partial a_{js}} h(\mathbf{a})\right)^2\right)^{\frac{1}{2}} \le \sqrt{c_5\frac{|\mathcal{L}|\log n}{m}}
\end{equation*}
for $\mathbf{a}\in \mathcal{A}_1\cap \mathcal{A}_2$ and $c_5 = 440c_4 + 1210$.

\textbf{Step 1b: Construct a globally Lipschitz continuous extension of $h$}\\
Consider the Lipschitz extension of the function $h$ (this is the one-dimensional case of the Kirszbraun theorem):
\begin{equation*}
\tilde{h}(\mathbf{a}) = \inf_{\mathbf{a}'\in \mathcal{A}_1\cap \mathcal{A}_2} \left(h(\mathbf{a'}) + Lip(h) \cdot \|\mathbf{a}-\mathbf{a'}\|_2\right),
\end{equation*}
where we write $Lip(h) = \sqrt{c_5\frac{|\mathcal{L}|\log n}{m}}$. On $\mathcal{A}_1\cap\mathcal{A}_2$ we have $\tilde{h}=h$, and an application of the triangle inequality shows that $\tilde{h}$ is globally Lipschitz continuous with Lipschitz constant $Lip(h)$. 

We will show that $\tilde{h}$ concentrates around its mean, which is different from the mean of $h$.
Since $h$ and $\tilde{h}$ differ only on $(\mathcal{A}_1\cap\mathcal{A}_2)^c$, which has probability at most $c_3n^{-11}$, we can bound, using the Cauchy-Schwarz inequality (note that all expectations are conditioned on $\A_{\cdot i} = \mathbf{a}_{\cdot i}$, which we omit for the sake of brevity and because the $\{A_{jl}\}$ are independent),
\begin{equation*}
\E[|h(\A)|\Eins_{(\mathcal{A}_1\cap\mathcal{A}_2)^c}(\A)] \le \sqrt{\E[h(\A)^2]}\cdot \sqrt{\E[\Eins_{(\mathcal{A}_1\cap\mathcal{A}_2)^c}(\A)]} \le \frac{c'}{n^5},
\end{equation*}
where we used the bound $\E[h(\A)^2] \le \sqrt{315}k^2/m$. Using $\tilde{h}(\mathbf{a}) \le Lip(h)\cdot \|\mathbf{a}\|_2$, we have
\begin{equation*}
\E[|\tilde{h}(\A)|\Eins_{(\mathcal{A}_1\cap\mathcal{A}_2)^c}(\A)] \le Lip(h)\underbrace{\sqrt{\E[\|\A\|_2^2]}}_{=\sqrt{mk}}\cdot \sqrt{\E[\Eins_{(\mathcal{A}_1\cap\mathcal{A}_2)^c}(\A)]} \le \frac{c''}{n^3}.
\end{equation*}
All in all, this shows that
\begin{equation*}
\left|\E[h_{1,\x}(\A)] - \E[\tilde{h}_{1,\x}(\A)]\right| \le \frac{c_6}{n^3}.
\end{equation*}
Hence, by Theorem \ref{thm:ref1}, we have 
\begin{equation}
\P\left[|\tilde{h}(\A)| > C\frac{|\mathcal{L}|\log n}{\sqrt{m}} \;\big|\;\A_{\cdot i} = \mathbf{a}_{\cdot i} \right] \le 2\exp\left(- \frac{(C\frac{|\mathcal{L}|}{\sqrt{m}}\log n - c_6/n^3)^2}{2c_5\frac{|\mathcal{L}|\log n}{m}}\right) \le 2 \exp\left(- c_7 |\mathcal{L}|\log n\right),
\end{equation} 
for a constant $c_7 \le \frac{C^2}{2c_5}- \frac{cc_6}{c_5}\frac{\sqrt{m}}{n^3}$.

\textbf{Step 2: Unravel the conditions: take union bounds and integrate over $\mathbf{a}_{\cdot i}$}\\
Let
\begin{align*}
\widetilde{B}_{i, S} = \Big\{|\tilde{h}_{\mathcal{K},\mathcal{L},\x}(\A)|\le C\frac{|\mathcal{L}|\log n}{\sqrt{m}} \quad &\text{for all subsets } \mathcal{K}\subseteq \S\backslash \{i\} \text{ with } |\mathcal{K}| = S, \\
&\mathcal{L}\subseteq \mathcal{K} \text{ with } |\mathcal{L}|\ge \frac{1}{2}|\mathcal{K}| \text{ and } \x\in N^{\mathcal{K}}_{\epsilon'} \Big\},
\end{align*}
and let $\widetilde{B}_i$ be the same event without the restriction $|\mathcal{K}| = S$.
Taking union bounds over all $\x\in N^{\mathcal{K}}_{\epsilon'}$, $\mathcal{L}\subseteq \mathcal{K}$ with $|\mathcal{L}|\ge \frac{1}{2}|\mathcal{K}|$ and $\mathcal{K}\subseteq \S\backslash \{i\}$ with fixed cardinality $|\mathcal{K}|=S$, we obtain (using the upper bound ${k\choose S} \le (\frac{ek}{S})^S$)
\begin{align*}
\P\left[\widetilde{B}_{i,S} \;\big||\; \A_{\cdot i} = \mathbf{a}_{\cdot i}\right] &\ge 1 - 2\exp\left(- \frac{c_7}{2}S\log n + S\log \frac{3k}{c_{\epsilon}} + S\log 2 + S\log\frac{ek}{S}\right) \\
&\ge 1 - 2\exp\left(-c_8S\log n\right)
\end{align*}
for a constant $c_8>0$ if $c_7 \ge 4 + 2\frac{\log (6/c_{\epsilon})}{\log n}$.  
Taking the union bound over all possible choices for the cardinalities $S=1,...,k$ gives
\begin{align*}
\P[\widetilde{B}_i \;|\; \A_{\cdot i} = \mathbf{a}_{\cdot i}] &\ge 1 - \frac{2}{1 - 2\exp\left(-c_8 \log n\right)}\exp\left(-c_8\log n\right) \ge 1 - 2.1 \exp\left(-c_8\log n\right),
\end{align*}
if $n^{-c_8} \le 0.1/4.2$.
Next, we integrate over all $\mathbf{a}_{\cdot i}$ satisfying $\max_j |a_{ji}|\le 5\sqrt{\log n}$:
\begin{align*}
\P[\tilde{B}_i] &\ge \int_{\{\max_j |a_{ji}|\le 5\sqrt{\log n}\}}\P\left[B_i \;\big|\; \A_{\cdot i} = \mathbf{a}_{\cdot i} \right]\mu(\mathbf{a}_{\cdot i})d\mathbf{a}_{\cdot i} \nonumber\\
&\ge \left(1 - 2.1e^{-c_8\log n}\right)\cdot \P\left[\max_j |a_{ji}| \le 5\sqrt{\log n}\right],
\end{align*}
where we wrote $\mu$ for the Gaussian density. This probability is at least $1-4n^{-11}$ by standard Gaussian tail bounds, if $c_8 \ge 11$.

On $\mathcal{A}_1\cap\mathcal{A}_2\cap \{\max_j |a_{ji}|\le 5\sqrt{\log n}\}$, which is independent of $\mathcal{K}$, $\mathcal{L}$ and $\x$, we have $h_{\mathcal{K},\mathcal{L},\x} = \tilde{h}_{\mathcal{K},\mathcal{L},\x}$ for all subsets $\mathcal{K}\subseteq \S\backslash \{i\}$, $\mathcal{L}\subseteq \mathcal{K}$ with $|\mathcal{L}|\ge \frac{1}{2}|\mathcal{K}|$ and $\x\in N^{\mathcal{K}}_{\epsilon'}$. In this case, the bound on $\tilde{h}_{\mathcal{K},\mathcal{L},\x}(\A)$ also applies to $h_{\mathcal{K},\mathcal{L},\x}(\A)$, that is
\begin{equation*}
\P[B_i] \ge \P\left[\widetilde{B}_i \cap \mathcal{A}_1\cap \mathcal{A}_2\cap \Big\{\max_j |a_{ji}|\le 5\sqrt{\log n}\Big\}\right] \ge 1 - 4n^{-11} - c_3n^{-11} - n^{-11}\ge 1- c_2n^{-11},
\end{equation*}
for $c_2 = c_3 + 5$.
Finally, taking the union bound over all $i\in [n]$ establishes that (\ref{eq:tech6_1}) holds with probability at least $1-c_2n^{-10}$.

\textbf{Proof that (\ref{eq:tech6_2}) is satisfied with high probability}\\
The inequality (\ref{eq:tech6_2}) can be shown following the exact same steps as above for (\ref{eq:tech6_1}). It is slightly simpler because of the lower order of the expression we need to control, which is also the reason for the term $\sqrt{\log n}$ in the bound (\ref{eq:tech6_2}) instead of $\log n$. We omit the details to avoid repetition.
\end{proof}

\begin{proof}[Proof of Lemma \ref{lemma:tech7}]
We first show the bound (\ref{eq:tech7_1}). To show that (\ref{eq:tech7_1}) holds with high probability, we will show that it is satisfied for all $\{a_{ji}\}\in\mathcal{A}$, where $\mathcal{A}$ is defined as the intersection of the set defined in Lemma \ref{lemma:tech6} and the set where Lemmas \ref{lemma:tech1} (with $t = 5\sqrt{\log n}$; as $\mathcal{A}\subset \R^{m\times n}$, we assume the projection of $\mathcal{A}$ onto $\R^{m\times \S}$ to satisfy Lemma \ref{lemma:tech1}) and \ref{lemma:tech3} hold. By the aforementioned lemmas, we have $\P[\{A_{ji}\}\in\mathcal{A}]\ge 1-c_2n^{-10}$.

\textbf{Proof that (\ref{eq:tech7_1}) is satisfied with high probability}\\
Let $\{a_{ji}\}\in \mathcal{A}$. For any $i\in [n]$, write 
\begin{equation*}
g(\x) = \frac{1}{m}\sum_{j=1}^ma_{ji}\left(\mathbf{a}_{j,-i}^T\x_{-i}\right)^3.
\end{equation*}
We consider the constrained optimization problem
\begin{equation}
\label{eq:optconstr}
\begin{aligned}
&\max_{\x} g(\x) \\
\text{s.t.} & \quad \|\x\|_2^2 \le 1 \\
& \quad \|\x\|_1 \le b 
\end{aligned}
\end{equation}
for some $0\le b \le \sqrt{k}$. Our goal is to show that, if $\x'$ is an optimizer of this problem, then we have $g(\x')\le Cb \sqrt{\frac{\log n}{m}}$. By symmetry, we can obtain the lower bound $g(\x')\ge -Cb \sqrt{\frac{\log n}{m}}$ by considering the corresponding minimization problem exactly the same way. This would then complete the proof that, for all $\{a_{ji}\}\in \mathcal{A}$, inequality (\ref{eq:tech7_1}) is satisfied.

The idea of the proof is as follows: since we are maximizing a continuous function over a compact set, a global maximum is attained at a point $\x'\in\mathcal{X}$. Using the KKT conditions at $\x'$, we will bound the Lagrange multiplier $\mu^\star$ corresponding to the constraint $\|\x\|_1\le b$. This controls how the maximum attainable value $g(\x')$ increases if we relax the constraint $\|\x\|_1\le b$ (for details see e.g.\ \cite{BV04}), and integrating over $b$ gives the desired result.

\textbf{Step 1: Establish KKT conditions}\\
In order to establish that the KKT conditions are satisfied at $\x'$, we verify that the linear independence constraint qualification (LICP) (see e.g.\ \cite{P73}) holds in this problem: the gradients of all active inequality constraints are linearly independent at any point $\x$. We restrict our attention to $b\neq \sqrt{s}$, $s=1,...,k$, which is a set of measure zero and can be ignored when integrating. If only one inequality constraint is binding, there is nothing to show. Otherwise, the gradients $\frac{\partial}{\partial \x}\|\x\|_2^2 = 2\x$ and $\frac{\partial}{\partial \x}\|\x\|_1 = \operatorname{sign}(\x)$ can only be linearly dependent if $x_i\in \{-x,0,x\}$ for all $i$. But then $\|\x\|_2^2 = sx^2$ and $\|\x\|_1 = sx$ (where $s$ is the number of non-zero coordinates of $\x$), hence the two constraints cannot be simultaneously binding, because we have $b\neq \sqrt{s}$.

Let $\x'\in\mathcal{X}$ be a maximizer of (\ref{eq:optconstr}). From LICP it follows that the following KKT conditions are necessarily satisfied: writing $\mathcal{L}(\x,\lambda,\mu) = g(\x) - \lambda (\|\x\|_2^2-1) - \mu (\|\x\|_1-b)$, there exist Lagrange multipliers $\lambda^\star, \mu^\star$ satisfying 
\begin{align}
\nabla_{\x}\mathcal{L}(\x',\lambda^\star,\mu^\star) &= \mathbf{0} \quad &&\text{stationarity} \label{eq:kkt1}\\
\|\x'\|_2^2 \le 1, \quad \|\x'\|_1&\le b &&\text{primal feasibility}\label{eq:kkt2}\\
\lambda^\star\ge 0, \quad \mu^\star&\ge 0 &&\text{dual feasibility}\label{eq:kkt3}\\
\lambda^\star(\|\x'\|_2^2-1) = 0, \quad \mu^\star(\|\x'\|_1-b) &= 0 &&\text{complementary slackness}\label{eq:kkt4}
\end{align}

\textbf{Step 2: Bounding the Lagrange multiplier $\mu^\star$}\\
Using the stationarity condition (\ref{eq:kkt1}), we get
\begin{equation*}
\begin{gathered}
\frac{\partial}{\partial x_l} \mathcal{L}(\x',\lambda^\star,\mu^\star) = \frac{3}{m}\sum_{j=1}^ma_{ji}a_{jl}(\mathbf{a}_{j,-i}^T\x'_{-i})^2 - 2\lambda^\star x'_l - \mu^\star\operatorname{sign}(x'_l) = 0 \\
\Rightarrow h_l(\x') := \frac{1}{m}\sum_{j=1}^ma_{ji}a_{jl}(\mathbf{a}_{j,-i}^T\x'_{-i})^2 = \frac{2}{3}\lambda^\star x'_l + \frac{1}{3}\mu^\star\operatorname{sign}(x'_l)
\end{gathered}
\end{equation*}
From this, we see that $h_l(\x')$ has the same sign as $x'_{l}$ because of the dual feasibility condition (\ref{eq:kkt3}), and that $x'_{l}=0$ if and only if $h_l(\x') = 0$. Rearranging for $\mu^\star$, we have, for any $l$ with $x'_{l}\neq 0$,
\begin{equation}\label{eq:shadowprice}
\mu^\star = 3|h_l(\x')| - 2\lambda^\star|x'_l| \le 3|h_l(\x')|,
\end{equation}
where we again used that $\lambda^\star\ge 0$. Next, we show that
\begin{equation}\label{eq:shadowbound}
\min_{l:x'_l\neq 0} |h_l(\x')| \le \frac{C\log n}{3\sqrt{m}}.
\end{equation}
To this end, let $\mathcal{K} = \{s\in \S: x'_s\neq 0\}$, and let $\mathcal{L} = \{l\in \mathcal{K}: x'_l>0\}$ (note that we can assume $x'_i=0$, as $x_i$ does not contribute to $g(\x)$). We can assume without loss of generality that $|\mathcal{L}|\ge \frac{1}{2}|\mathcal{K}|$, as we can otherwise consider the set defined by $x'_l<0$. Further, assume for notational simplicity that $\|\x'\|_2 = 1$; otherwise, we can replace $\x'$ by $\x' / \|\x'\|_2$, which makes every $|h_l(\x')|$ larger because $\|\x'\|_2\le 1$. 

Let $N_{\epsilon'}^{\mathcal{K}}$ be an $\epsilon'$-net of the unit sphere in $\R^{\mathcal{K}}$, where $\epsilon' = C\log n/(465\sqrt{m})$. Let $\x\in N_{\epsilon'}^{\mathcal{K}}$ with $\|\x-\x'\|_2 \le \epsilon'$. Then, recalling $m\ge c_1\max\{k^2\log^2n, \; \log^5n\}$, we have by (\ref{eq:tech6_1}) of Lemma \ref{lemma:tech6},
\begin{equation*}
\sum_{l\in \mathcal{L}} h_l(\x) \le c|\mathcal{L}|\frac{\log n}{\sqrt{m}} \le \frac{C|\mathcal{L}|\log n}{6\sqrt{m}},
\end{equation*}
if $C\ge 6c$, where $c$ is the universal constant from Lemma \ref{lemma:tech6}.
By the pigeonhole principle, there must be an index $l\in \mathcal{L}$ with $h_l(\x) \le C\log n/(6\sqrt{m})$. By the Cauchy-Schwarz inequality, we have
\begingroup
\allowdisplaybreaks
\begin{align*}
|h_l(\x)-h_l(\x')| &= \left|\frac{1}{m}\sum_{j=1}^m a_{ji}a_{jl}\left(\mathbf{a}_{j,-i}^T(\x+\x')\right)\left(\mathbf{a}_{j,-i}^T(\x-\x')\right)\right| \nonumber\\
&\le \left(\frac{1}{m}\sum_{j=1}^ma_{ji}^2a_{jl}^2\left(\mathbf{a}_{j,-i}^T(\x+\x')\right)^2 \cdot \frac{1}{m}\sum_{j=1}^m\left(\mathbf{a}_{j,-i}^T(\x-\x')\right)^2\right)^\frac{1}{2} \nonumber\\
&\le \left(\frac{1}{m}\sum_{j=1}^ma_{ji}^8\right)^{\frac{1}{8}}\left(\frac{1}{m}\sum_{j=1}^ma_{jl}^8\right)^{\frac{1}{8}}\left(\frac{1}{m}\sum_{j=1}^m\left(\mathbf{a}_{j,-i}^T(\x+\x')\right)^4\right)^{\frac{1}{4}} \\
&\quad \cdot \left(\frac{1}{m}\sum_{j=1}^m\left(\mathbf{a}_{j,-i}^T(\x-\x')\right)^2\right)^\frac{1}{2}\nonumber\\
&\le 106^\frac{1}{4} \cdot \frac{(3m)^{\frac{1}{4}} + \sqrt{k} + 5\sqrt{\log n}}{m^\frac{1}{4}}\|\x+\x'\|_2 \cdot \frac{\sqrt{m} + \sqrt{k} + 5\sqrt{\log n}}{\sqrt{m}}\|\x-\x'\|_2\\
&\le \frac{C\log n}{6\sqrt{m}},
\end{align*}
\endgroup
where we used Lemmas \ref{lemma:tech1} and \ref{lemma:tech3}, and $\|\x-\x'\|\le C\log n/(465\sqrt{m})$. Together with the non-negativity of $h_l(\x')$, this completes the proof of (\ref{eq:shadowbound}).

Now, (\ref{eq:shadowprice}) together with (\ref{eq:shadowbound}) implies
\begin{equation*}
\mu^\star \le C\frac{\log n}{\sqrt{m}}.
\end{equation*}

\textbf{Step 3: Showing (\ref{eq:tech7_1})}\\
In order to show (\ref{eq:tech7_1}), define the value function
\begin{equation*}
v(b) = \max_{\x} \{g(\x): \|\x\|_2^2\le 1, \|\x\|_1\le b\}.
\end{equation*}
We use the fact that $\mu^\star$ is the shadow price of the constraint $\|\x\|_1\le b$ (for more details see e.g.\ \cite{BV04}):
\begin{equation*}
\frac{\partial}{\partial b}v(b) = \mu^\star(b).
\end{equation*}
By definition, we have $v(0) = 0$. For any $0\le b_0\le \sqrt{k}$, we have, by the fundamental theorem of calculus,
\begin{align*}
v(b_0) = \int_0^{b_0} \frac{\partial}{\partial b}v(b)db
= \int_0^{b_0} \mu^\star(b)db \le \int_0^{b_0}C\frac{\log n}{\sqrt{m}}db 
< Cb_0 \frac{\log n}{\sqrt{m}},
\end{align*}
By definition, we have $g(\x)\le v(b_0)$ for all $\x\in\mathcal{X}$ with $\|\x\|_1\le b_0$.
Hence, 
\begin{equation*}
g(\x) \le Cb \frac{\log n}{\sqrt{m}}
\end{equation*}
must hold for any $\x$ with $\|\x\|_2^2\le 1$ and $\|\x\|_1\le b$,
which completes the proof of (\ref{eq:tech7_1}).

\textbf{Proof that (\ref{eq:tech7_2}) is satisfied with high probability}\\
The proof of (\ref{eq:tech7_1}) follows the same steps as the proof of (\ref{eq:tech7_1}), using the bound (\ref{eq:tech6_2}) of Lemma \ref{lemma:tech6} to bound the shadow price $\mu^\star$. We omit the details of the proof to avoid repetition.
\end{proof}

\end{document}